\documentclass{article}

\usepackage[main, final]{neurips_2026}
\usepackage{array}
\usepackage{algorithm}
\usepackage{algorithmic}
\usepackage{bbm}
\usepackage{hyperref}
\usepackage{framed}
\usepackage{microtype}
\usepackage{graphicx}
\usepackage{subcaption}
\usepackage{ulem}
\usepackage{booktabs} 

\usepackage[utf8]{inputenc} 
\usepackage[T1]{fontenc}    
\usepackage{hyperref}       
\usepackage{url}            
\usepackage{booktabs}       
\usepackage{amsfonts}       
\usepackage{nicefrac}       
\usepackage{microtype}      
\usepackage{xcolor}         

\usepackage{amsmath}
\usepackage{amssymb}
\usepackage{mathtools}
\usepackage{amsthm}
\usepackage{color,graphicx}
\usepackage{thm-restate}
\newcommand{\ouralgo}{\textsc{ARC-DuelLi}}

\theoremstyle{plain}
\newtheorem{theorem}{Theorem}[section]

\newtheorem{lemma}[theorem]{Lemma}
\newtheorem{corollary}[theorem]{Corollary}
\theoremstyle{definition}

\newtheorem{assumption}[theorem]{Assumption}
\theoremstyle{remark}
 
\usepackage[textsize=tiny]{todonotes}
\newcount\Full   
\newcommand{\iffull}[2]{\ifnum\Full=1{#1}\fi\ifnum\Full=0{#2}\fi}
\newcount\Comments  
  \newcommand{\red}[1]{{\leavevmode\color{red}{#1}}}

\newcommand{\kibitz}[2]{\ifnum\Comments=1{\textcolor{#1}{{#2}}}\fi}

\usepackage{enumitem}

\renewcommand{\cite}[1]{\citep{#1}}
\renewcommand{\emph}[1]{{\textit{#1}}}
\title{Lipschitz Dueling Bandits over Continuous Action Spaces}

\author{%
Mudit Sharma\\
Indian Institute of Technology Ropar
\And
Shweta Jain\\
Indian Institute of Technology Ropar
\And
Vaneet Aggarwal\\
Purdue University
\And
Ganesh Ghalme\\
Indian Institute of Technology Hyderabad
}

\begin{document}

\maketitle

\begin{abstract}
We initiate the study of stochastic dueling bandits over continuous action spaces with Lipschitz structure, where the learner observes only noisy pairwise comparisons. Unlike standard Lipschitz bandits, comparative feedback makes estimates across regions over a continuous action space inherently non-comparable, so classical elimination-based methods do not extend directly. Our key idea is to use a round-wise fixed reference arm, which converts pairwise comparisons into a common scalar score and enables safe recursive region elimination. Based on this idea, we propose the first algorithm for this setting, Anchored Recursive Comparisons for dueling Lipschitz bandits (\ouralgo). Under a bi-Lipschitz assumption on the transfer function, we show that \ouralgo\ achieves regret of $\tilde O\!\left(T^{\frac{d_z+1}{d_z+2}}\right)$, where $d_z$ is the zooming dimension of the near-optimal region, while using only $O(d\log T)$ space.  For the linear preference transfer function, we also prove a matching lower bound via reduction from stochastic Lipschitz bandits, establishing optimality up to logarithmic factors. Experiments show convergence to the optimal arm and improved regret over uniform-discretization dueling baselines.
\end{abstract}

\section{Introduction}

Many modern learning and optimization problems involve a continuous decision space but provide only \emph{relative} feedback. For example, when tuning a recommendation policy, ranking system, or experimental design, a user may be unable to assign reliable numerical scores to individual decisions, yet can often express a preference between two alternatives. In such settings, two structural features naturally coexist: the action space is continuous, so nearby decisions should behave similarly, and the available observations are purely pairwise comparisons rather than scalar rewards. This motivates the study of learning over continuous spaces under noisy comparative feedback.

We formalize this setting as \emph{stochastic Lipschitz Dueling Bandits}. At each round, the learner selects two actions and observes noisy feedback on which one is preferred. At the same time, the latent utility is assumed to vary smoothly over the action space, so nearby actions induce similar preference behavior. The central question is whether this geometric regularity can be exploited to obtain regret guarantees that depend on the intrinsic complexity of the near-optimal region, rather than on an apriori discretizations of the space.  

The main difficulty is that comparative feedback does not produce globally comparable estimates. Existing Lipschitz bandit algorithms \cite{Kleinberg2004,zhu2025} depend on scalar reward observations to form confidence bounds and elimination rules that are directly comparable across regions. Under dueling feedback, however, the outcome of a comparison is meaningful only relative to the opponent arm. As a result, estimates obtained in different regions are not directly comparable unless they are anchored to a common baseline. This destroys the comparability of the underlying zooming-style elimination methods and makes region-level reasoning substantially more delicate. Resolving this lack of cross-region comparability is the key technical challenge in Lipschitz dueling bandits. Prior continuous-action dueling bandit work primarily assumes convexity \cite{saha2021dueling,kumagai2017regret}, where pairwise comparisons can be converted into local first-order information and can be used in descent-based optimization frameworks. These methods are fundamentally local and rely on convex geometry; they do not yield the multiscale region-elimination machinery required in general Lipschitz spaces.  Hence, neither line of work extends directly to stochastic dueling bandits over continuous Lipschitz action spaces. To our knowledge, this is the first work on noisy stochastic dueling bandits over general Lipschitz continuous action spaces with zooming-dimension-dependent regret guarantees, leaving open the following question:

\medskip
\noindent
\emph{Can one exploit Lipschitz structure under purely comparative feedback, and if so, can one establish zooming-dimension-dependent regret guarantees?}
\medskip

We answer this question affirmatively and initiate the study of \emph{Lipschitz dueling bandits}. The key idea is to use a \emph{fixed reference arm within each round}, so that all comparisons in that round are anchored to a common baseline. Crucially, the choice of this reference arm is highly non-trivial. If it is chosen poorly, then the resulting region-level estimates can lead to poorly ranked regions and make the elimination step unsafe, potentially discarding the region containing the Condorcet winner. In this paper, we show how to choose and update the reference arm adaptively so that it induces a scalar preference field over the action space whose variation is controlled by the Lipschitz structure and whose values remain suitable for elimination. This viewpoint yields three ingredients that are absent from prior analyses: first, a uniform concentration result for region-level preference estimates relative to the common reference arm; second, a safe elimination argument showing that the region containing the Condorcet winner is never discarded; and third, a shrinking-region property proving that all surviving regions remain geometrically close to the optimum. Together, these ingredients replace the absolute-reward comparisons used in classical Lipschitz bandits by a relative but globally anchored elimination principle. Building on this analysis, we propose Anchored Recursive Comparisons for
dueling Lipschitz bandits, \ouralgo, the first algorithm for this setting that achieves regret $\tilde O\!\left(T^{\frac{d_z+1}{d_z+2}}\right)$, 
where $d_z$ is the zooming dimension of the near-optimal set. \ouralgo\ is a  depth-first elimination strategy which explores the partition tree recursively and stores  a constant amount of local information from the active recursive path. This yields an overall memory complexity of $O(\log T)$. Hence, purely comparative feedback in continuous spaces does not fundamentally increase the space complexity of zooming-style learning.

When pairwise feedback is generated via the linear transfer function, we prove a matching lower bound through a reduction to scalar-feedback Lipschitz bandits and show that every dueling bandit algorithm induces a corresponding scalar-feedback algorithm with exactly twice the regret. Therefore, existing lower bounds for stochastic Lipschitz bandits carry over directly to the dueling setting under the linear transfer model. Consequently, despite the weaker and purely relative nature of the feedback, the achievable regret of \ouralgo\ matches the best known rate for stochastic Lipschitz bandits up to logarithmic factors \cite{kleinberg2019bandits}. Finally, we complement our theory with empirical evaluation. The experiments show that \ouralgo\ converges to the optimal arm and achieves substantially lower regret than dueling-bandit baseline based on uniform discretization, illustrating the advantage of exploiting geometric structure. Overall, this work identifies Lipschitz dueling bandits as a new structured bandit model and shows that zooming-style learning is possible even when feedback is purely comparative, provided one restores cross-region comparability through an adaptive common reference.

\section{Related Work}

\paragraph{Dueling Bandits and Preference-Based Bandits.}

The dueling bandits framework proposed by \citet{yue2012k} studies sequential learning from pairwise comparisons between actions, 
rather than absolute reward feedback, with the goal of identifying or competing with the
`best performing arm' depending upon the notion of best arm that is, Condorcet winner, an action that is preferred over all the other actions with probability greater than half. Existing algorithms \cite{yue2011beat,urvoy2013generic,zoghi2014,komiyama2015regret} assume a finite or unstructured action
space and focus over arm-level elimination or confidence-based selection.

Over the continuous action space, with concave rewards, \citet{yue2009interactively} proposed the first projected gradient-based algorithm with regret guarantees of $O(T^{3/4})$. Later, \citet{kumagai2017regret} improved the regret to $O(\sqrt{T\log T})$ under the assumption of strong convexity and smoothness of the cost function. A more general cost function, $\alpha$-strong and $\beta$-smooth was later considered in \cite{jamieson2012query} which was further generalized to any $\beta$-smooth convex functions in \cite{saha2021dueling}, only for the sign-based relative feedback. A more general preference function was considered in \cite{sahadueling} with $\beta$-smooth convex functions. This paper extends this line of work to the class of more general reward functions, i.e, Lipschitz function. Some works \cite{xu2020zeroth,zhang2024comparisons} do consider non-convex optimization problem; however, they do not consider noisy feedback and hence are not in bandit feedback setup. 


\paragraph{Lipschitz and Continuum-Armed Bandits.}

A separate line of work studies bandits over continuous action spaces under
Lipschitz assumption, where the learner observes noisy scalar rewards \cite{kleinberg2019bandits,zhu2025,Kleinberg2004,feng2022lipschitz} or adversarial scalar rewards \cite{kang2023robust}. The problem was first introduced by \citet{Kleinberg2004} where first discretization based algorithm was proposed for 1-dimensional function. Later, \citet{kleinberg2008multi} presented the first zooming-based algorithm for general $d$-dimensional Lipschitz functions and also provided the lower bound of $\Omega(T^{\frac{d_z+1}{d_z+2}})$ with $d_z$ being the zooming dimension. Zooming-based algorithms essentially keep zooming in the most promising region until it reaches the near optimal region. In a follow-up work, \citet{bubeck2011x} proposed Hierarchical Optimistic Optimization (HOO) algorithm, which is easier to implement and which exploits the local properties of the reward function at its maxima only. To reduce the space complexity of zooming based algorithms, \cite{zhu2025} provided a recursive variant of zooming algorithm with space complexity of $O(\log T)$.
However, all the above methods crucially rely on access to reward feedback and do not
apply to settings with purely comparative observations.

We study stochastic dueling bandits over
continuous action spaces with Lipschitz structure. To our knowledge, this is the first work on noisy stochastic dueling bandits over general Lipschitz continuous action spaces with zooming-dimension-dependent regret guarantees.

\section{Problem Setup and Regret Definition}
\label{sec:problem-setup}
Let $\mathcal X \subset \mathbb R^d$ be a compact action space endowed with a metric $\|\cdot\|$.
Throughout the paper, we work with $\mathcal X = [0,1]^d$ under the $\ell_\infty$ metric, for which metric balls are axis-aligned cubes. This allows a particularly simple description of our recursive elimination procedure via a cube-based multiscale partition. However, the analysis is not tied to the coordinate structure of Euclidean cubes. It relies only on three abstract properties: (i) regions of controlled diameter at each scale, (ii) a hierarchical refinement into smaller regions at finer scales, and (iii) packing or covering bounds for the near-optimal set. Thus, our arguments should extend to compact doubling metric spaces endowed with a suitable hierarchical covering, though we restrict attention to $[0,1]^d$ under the $\ell_\infty$ metric for clarity. Further, we assume that the unknown reward function $f:\mathcal X \to \mathbb R$
satisfies the following regularity condition.
\begin{assumption}[Lipschitz reward]
\label{ass:lipschitz-f}
The function $f$ is $1$-Lipschitz with respect to $\|\cdot\|$, i.e.,
\[
|f(x)-f(y)| \le \|x-y\|,
\qquad \forall x,y\in\mathcal X.
\]
\end{assumption}

Unlike the Lipschitz bandit, the learner does not observe the values of $f$ directly in a dueling feedback model. Instead, based on past observations, at each time $t$, the learner selects two arms denoted by $(x_t,y_t) \in \mathcal{X}^2$ and only observes a noisy binary (duel) outcome $o_t(x_t,y_t)$ of the stochastic duel. The learner operates over a horizon of $T$ duels.
The duel outcome is a Bernoulli random variable
\[
o_t(x_t,y_t) \sim \mathrm{Ber}\!\left(P(x_t\succ y_t)\right).
\]

Here, $x_t \succ y_t$ denote the event that arm $x_t$ is preferred over $y_t$. We connect the pairwise preference probability $P(x_t \succ y_t)$ through a transfer function $\rho: [-1,1] \rightarrow [0,1]$ as $P(x_t \succ y_t) = \rho(f(x_t)-f(y_t))$. Note that since \(f\) is 1-Lipschitz on \([0,1]^d\) under \(\ell_\infty\), we have
\(|f(x)-f(y)|\le 1\) for all \(x,y\in\mathcal X\). Hence the transfer
function only needs to be specified on \([-1,1]\). Different examples of transfer functions include : (a) Bradley--Terry transfer (Logistic) \cite{bengs2021preference}, (b) Linear transfer function \cite{ailon2014reducing}, and (c) probit transfer function \cite{chen2017dueling}. This paper makes the following assumption on the transfer function. 
\begin{assumption}
\label{ass:transferlipschitz}
The transfer function \(\rho:[-1,1]\to[0,1]\) satisfies $\rho(0)=\frac12, \rho(-z)=1-\rho(z)$, and there exist constants \(0<\lambda\le \Gamma<\infty\) such that for all
\(-1\le z_2\le z_1\le 1\),
\[
\lambda (z_1-z_2)
\le
\rho(z_1)-\rho(z_2)
\le
\Gamma (z_1-z_2).
\]
\end{assumption}
It is easy to show that Bradley-Terry, Probit, and Linear transfer functions satisfy these assumptions and are  bi-Lipschitz with $(\Gamma,\lambda) = \left(\frac{1}{4},\frac{e}{(1+e)^2}\right)$ (Lemma \ref{lem:sigmoid-lipschitz}), $(\Gamma, \lambda) = \left(\frac{1}{\sqrt{2\pi}},\frac{1}{\sqrt{2\pi}}e^{-1/2}\right)$ (Lemma \ref{lem:probit-lipschitz}), and $(\Gamma,\lambda) = \left(\frac{1}{2},\frac12\right)$ (Lemma \ref{lem:linear-lipschitz}) respectively. 


An arm $x^*$ is called the Condorcet winner if $P(x^* \succ y) \ge \frac{1}{2}$ for all $y \in \mathcal{X}$.
Under Assumption~\ref{ass:lipschitz-f}, the function $f$ is continuous. Since $\mathcal{X}$ is compact, there exists $x^* \in \arg\max_{x \in \mathcal{X}} f(x)$.
Pairwise preferences are induced via a transfer function $\rho : [-1,1] \to [0,1]$ such that $P(x \succ y) = \rho(f(x) - f(y))$, where $\rho$ is non-decreasing and satisfies $\rho(0) = \frac{1}{2}$. Therefore, for any $y \in \mathcal{X}$,
\[
f(x^*) \ge f(y) \;\Rightarrow\; P(x^* \succ y) \ge \frac{1}{2}.
\]
Thus, a Condorcet winner always exists.

\subsection{Regret in Lipschitz Dueling Bandits}
For any pair of arms $(x,y)$, define the (pairwise) preference gap $ 
\Delta(x,y)
:=
P(x\succ y) - \tfrac12 .$ 
Note that $\Delta(x^\star,x) \ge 0$ for all $x\neq x^\star$ also under Assumption \ref{ass:transferlipschitz}, $\Delta(x,y) = -\Delta(y,x)$. Further, let $\kappa = \frac\Gamma\lambda$. We next show that under Assumption \ref{ass:transferlipschitz}, preference gap $\Delta$ function satisfy relaxed stochastic triangular inequality and reverse difference inequality.
\begin{restatable}{lemma}{LemmaZero}
\label{lemma:sti}
Under Assumption \ref{ass:transferlipschitz}, the preference gap \(\Delta\) satisfies:
\begin{enumerate}
\item if \(f(x)\ge f(z)\ge f(y)\), then $\Delta(x,y)\le \kappa(\Delta(x,z)+\Delta(z,y))$,
\item if \(f(x)\ge f(y)\ge f(z)\), then $\Delta(x,y)\le \kappa(\Delta(x,z)-\Delta(y,z))$.
\end{enumerate}
\end{restatable}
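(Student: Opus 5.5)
The plan is to reduce both inequalities to statements about utility differences by sandwiching every preference gap between linear functions of $f$-differences, using the bi-Lipschitz bounds in Assumption~\ref{ass:transferlipschitz}. Since $\rho(0)=\tfrac12$, for any pair with $f(u)\ge f(v)$ we can apply the assumption with $z_1=f(u)-f(v)\in[0,1]$ and $z_2=0$. This gives
\[
\lambda\,(f(u)-f(v)) \;\le\; \Delta(u,v) \;\le\; \Gamma\,(f(u)-f(v)).
\]
All gaps appearing in the lemma will be treated in this way. In each case we first check which argument has the larger utility, so that the difference $f(u)-f(v)$ is nonnegative. We also note $\Delta(u,v)\ge 0$ in that case, because $\rho$ is nondecreasing.

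For part 1, assume $f(x)\ge f(z)\ge f(y)$. All three gaps $\Delta(x,y)$, $\Delta(x,z)$, $\Delta(z,y)$ then have nonnegative utility differences. The upper bound gives $\Delta(x,y)\le \Gamma(f(x)-f(y))$. Splitting telescopically, $f(x)-f(y)=(f(x)-f(z))+(f(z)-f(y))$. Applying the lower bound to each piece gives $f(x)-f(z)\le \Delta(x,z)/\lambda$ and $f(z)-f(y)\le \Delta(z,y)/\lambda$. Combining,
\[
\Delta(x,y)\le \tfrac{\Gamma}{\lambda}\bigl(\Delta(x,z)+\Delta(z,y)\bigr)=\kappa\bigl(\Delta(x,z)+\Delta(z,y)\bigr).
\]

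For part 2, assume $f(x)\ge f(y)\ge f(z)$. Again $\Delta(x,y)\le \Gamma(f(x)-f(y))$, and now we write $f(x)-f(y)=(f(x)-f(z))-(f(y)-f(z))$. The obstacle is that $f(y)-f(z)$ enters with a negative sign, so bounding the two terms separately fails: the lower bound on $\Delta(x,z)$ goes the right way, but we would need an upper bound on $f(y)-f(z)$ of the form $\Delta(y,z)/\lambda$, and subtracting it reverses the inequality. To avoid this, I would not split. Instead I would bound the difference $\Delta(x,z)-\Delta(y,z)=\rho(f(x)-f(z))-\rho(f(y)-f(z))$ directly. Here the arguments satisfy $z_1=f(x)-f(z)\ge z_2=f(y)-f(z)$, both lying in $[-1,1]$, so the lower bi-Lipschitz inequality applies to this increment and gives
\[
\Delta(x,z)-\Delta(y,z)\ge \lambda\bigl(z_1-z_2\bigr)=\lambda\,(f(x)-f(y)).
\]
Hence $\Delta(x,y)\le \Gamma(f(x)-f(y))\le \kappa\bigl(\Delta(x,z)-\Delta(y,z)\bigr)$, which completes part 2.

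The only real subtlety is this step of applying Assumption~\ref{ass:transferlipschitz} to the difference of two $\rho$ values at shifted arguments, rather than comparing each gap to $0$. A final check is that every argument lies in $[-1,1]$, which holds because $f$ is $1$-Lipschitz on a set of $\ell_\infty$-diameter $1$. The symmetry $\rho(-z)=1-\rho(z)$ is not needed beyond consistency of the definition of $\Delta$.
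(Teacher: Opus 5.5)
Your proof is correct and follows essentially the same route as the paper. Part 1 bounds $\Delta(x,y)\le\Gamma(f(x)-f(y))$, splits the difference telescopically, and applies the lower bi-Lipschitz bound to each piece; part 2 applies the lower bound directly to the increment $\rho(f(x)-f(z))-\rho(f(y)-f(z))$ (the paper's appendix writes the two cases with $y$ and $z$ swapped, but the argument is identical).
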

The proof uses simple inequalities arising from bi-Lipschitz property and is provided in Appendix \ref{lemma:sti-transfer}.

The instantaneous regret $r_t$ incurred at time $t$ is defined as the sum
of preference gaps between the optimal arm $x^\star$ and the two arms
played in the duel i.e., 
$r_t
 :=
\Delta(x^\star,x_t) + \Delta(x^\star,y_t).$
The cumulative regret after $T$ duels is
\begin{equation}
\label{eq:regret-def}
R(T)
:=
\sum_{t=1}^T r_t
=
\sum_{t=1}^T
\Big(
\Delta(x^\star,x_t)
+
\Delta(x^\star,y_t)
\Big).
\end{equation}

This regret notion is standard in stochastic dueling bandits and
measures how much worse the learner’s comparisons are relative to
always comparing the Condorcet winner against itself \cite{yue2012k} . Under the  convex loss function, \citet{kumagai2017regret} presents a dueling bandit algorithm for achieving regret of $O(\sqrt{T\log T})$. However, under a Lipschitz reward function the regret is often dependent on zooming dimension \cite{zhu2025, Kleinberg2004} which  captures the intrinsic dimensionality of the near-optimal
region and governs the achievable regret rates.

For $\varepsilon>0$, define the $\varepsilon$-near-optimal set 
$\mathcal S(\varepsilon)
:=
\{x\in\mathcal X : \Delta(x^\star,x)\le \varepsilon\}.
$ 
We assume that $\mathcal S(\varepsilon)$ has \emph{zooming dimension}
$d_z$, meaning that there exists $C_z>0$ such that for all $\varepsilon>0$,
the set $\mathcal S(\varepsilon)$ can be covered by at most
$C_z \varepsilon^{-d_z}$ balls of radius $\varepsilon$. 

Thus, the goal in Lipschitz dueling bandit setting is to design a learning algorithm that minimizes the expected
regret $\mathbb E[R(T)]$ while only observing noisy dueling feedback,
and to characterize how the regret scales with the time horizon $T$ and the geometric complexity of the near-optimal region.

\section{\ouralgo: Anchored Recursive Comparison for Dueling Lipschitz}
\label{sec:algorithm}

\begin{figure*}[t]
\centering

\begin{minipage}[t]{0.49\textwidth}
\begin{algorithm}[H]
\caption{Anchored Recursive Comparison for Dueling Lipschitz Bandits (\ouralgo)}
\label{alg:logli}
\begin{algorithmic}[1]
\REQUIRE Action space $\mathcal X=[0,1]^d$, horizon $T$, confidence $\delta$, $\Gamma$
\STATE Choose number of rounds $B$
\STATE Set radius sequence $r_m = 2^{-m+1}$ for $m=1,\dots,B+1$
\STATE Set sample sizes $n_m = \frac{16\log(T/\delta)}{r_m^2}$
\STATE \textbf{Initialize:}  $t \leftarrow 0$,   $X^0 \sim \mathrm{Unif}(\mathcal X)$,
$\tilde\Delta^0=0$
\FOR{$m=1$ to $B$}
    \STATE Set $\Delta_{\max}^m=-\infty$, $C_{\max}^m=\emptyset$
    \STATE Partition $\mathcal X$ into $(1/r_1)^d$ cubes of edge length $r_1$
    \FOR{each cube $C$}
        \STATE \textsc{RoundFunc} $(t,m,1,C, \Delta_{\max}^m,C_{\max}^m,
        \tilde\Delta^{m-1},X^{m-1})$
    \ENDFOR
    \IF{$C_{\max}^m \neq \emptyset$ and $t+n_m \le T$}
        \STATE Choose $X^m$ uniformly from the sampled arms in $C_{\max}^m$
        \STATE Sample $n_m$ arms uniformly from $C_{\max}^m$ and duel each with $X^m$
        \STATE Compute empirical gap $\widehat{\Delta}_m^m(C_{\max}^m,X^m)$
        \STATE $\tilde\Delta^m \gets \widehat{\Delta}_m^m(C_{\max}^m,X^m)$, $t\gets t+n_m$
    \ELSE
        \STATE Return arm $X^{m-1}$
    \ENDIF
\ENDFOR
\STATE \textbf{Cleanup:} play arm $X^B$ or $X^{m-1}$ for all remaining time steps
\end{algorithmic}
\end{algorithm}
\end{minipage}
\hfill
\begin{minipage}[t]{0.47\textwidth}
\begin{algorithm}[H]
\caption{Recursive cube exploration (\textsc{RoundFunc})}
\label{alg:roundfunc}
\begin{algorithmic}[1]
\REQUIRE Time horizon $T$, current time counter $t$,
round index $m$, current depth $h$, cube $C$,
current-round maximum $\Delta_{\max}^m$, best cube $C_{\max}^m$, previous-round estimate
$\tilde\Delta^{m-1}$, and reference arm $X^{m-1}$, $\Gamma$.
\IF{$t + n_h > T$}
    \STATE \textbf{return}
\ENDIF
\STATE Sample $n_h$ arms uniformly from $C$ and duel each with $X^{m-1}$
\STATE Compute empirical gap $\widehat{\Delta}_h^m(C,X^{m-1})$
\STATE $t \gets t + n_h$
\IF{$h = m$}
    \IF{$\widehat{\Delta}_h^m(C,X^{m-1}) > \Delta_{\max}^m$}
        \STATE $\Delta_{\max}^m \gets \widehat{\Delta}_h^m(C,X^{m-1})$ , $C_{\max}^m \gets C$
    \ENDIF
\ELSE
    \IF{$\tilde\Delta^{m-1} - \widehat{\Delta}_h^m(C,X^{m-1}) \le 2\, r_h(1+\Gamma)$}
        \STATE Partition $C$ into $(r_h/r_{h+1})^d$ subcubes
        \FOR{each subcube $C'$}
            \STATE \textsc{RoundFunc}$(T,t,m,h+1,C',
\Delta_{\max}^m,C_{\max}^m,
\tilde\Delta^{m-1},X^{m-1})$
        \ENDFOR
    \ELSE
        \STATE \textbf{Eliminate} cube $C$ (terminate recursion)
    \ENDIF
\ENDIF
\end{algorithmic}
\end{algorithm}
\end{minipage}

\end{figure*}

We present \ouralgo, Anchored Recursive Comparison for Dueling Lipschitz  in
Algorithm~\ref{alg:logli} (main procedure) and
Algorithm~\ref{alg:roundfunc} (recursive subroutine, \textsc{RoundFunc}).
\ouralgo\ combines round-based exploration with recursive cube elimination
to progressively localize the Condorcet winner. For each depth $h\ge1$, we define a resolution
$r_h = 2^{-h+1}$.
At this resolution, $\mathcal X$ is partitioned into axis-aligned hypercubes
(``cubes'') of edge length $r_h$, each representing a region of diameter at most
$r_h$. A cube at depth $h+1$ is obtained by equally partitioning a cube at depth
$h$ into $2^d$ subcubes, yielding a hierarchical multiscale partition of
$\mathcal X$.


The algorithm proceeds in rounds $m=1,\dots,B$, where $B = \left\lceil \frac{\log_2 T}{d_z+2} \right\rceil + 1$.
The parameter $B$ balances exploration depth and the resolution of sub optimality within cubes. 
Its choice is justified by the regret guarantee in Theorem~\ref{thm:regret-theorem}. At the start of round $m$, the algorithm maintains a \emph{reference arm} $X^{m-1}$ and the best empirical preference gap $\tilde{\Delta}^{m-1}$ from the previous round. All comparisons in round $m$ are made against $X^{m-1}$, ensuring that empirical gap estimates across cubes are directly comparable.


During round $m$, the action space is first partitioned into cubes at depth $1$ (line~7). Each cube is then explored by invoking the recursive subroutine \textsc{RoundFunc} starting at depth $h=1$ (lines~8--9). The subroutine processes a cube $C$ at depth $h$ as follows. If sufficient time remains (Algorithm~\ref{alg:roundfunc}, lines~1--2), it samples $n_h = \frac{16\log(T/\delta)}{r_h^2}$ arms uniformly from $C$, and each is dueled against the reference arm $X^{m-1}$ (lines~3--4). From these samples, it computes the empirical preference gap $\widehat{\Delta}_h^m(C,X^{m-1})$ (line~5) using equation \ref{eq:phat-def}. The sample size $n_h$ guarantees that $\widehat{\Delta}_h^m(C,X^{m-1})$ closely approximates $\Delta(x, X^{m-1})$ uniformly for all $x \in C$ at depth $h$ (Lemma~\ref{lemma:bound-prob-diff}).


If the maximum depth $h = m$ is reached, the cube is not refined further. 
Instead, the algorithm sets $\Delta^m_{\max}$ to the maximum of $\widehat{\Delta}_h^m(C,X^{m-1})$ over all terminal cubes in round $m$, and stores a corresponding maximizer $C_{\max}^m$ (lines~8--9). 
If such a cube exists, the next reference arm is selected from $C_{\max}^m$ (lines~11--12), and $\tilde{\Delta}^m$ is refined by drawing $n_m$ additional samples against the reference arm (lines~13--15). If, at any point during round $m$, insufficient time remains to complete the next required sampling block, the recursive exploration is terminated and the main procedure returns $X^{m-1}$ (lines~17), namely, the reference arm obtained from the most recently completed round. Thus, an incompletely explored round is never used to update the reference arm. we later prove that even if such case exist then even our regret bounds remain the same. The choice of the reference arm is crucial, as all future elimination decisions are made relative to it.


If $h<m$, the algorithm decides whether to eliminate or refine the cube. 
Specifically, cube $C$ is eliminated if $\tilde{\Delta}^{m-1} - \widehat{\Delta}_h^m(C) > 2\,r_h(1+\Gamma)$ (line~12), which, by concentration and Lipschitz arguments, implies that all arms in $C$ are suboptimal with high probability (Lemma~\ref{lem:optimal-not-eliminated}). Otherwise, $C$ is partitioned into $2^d$ sub-cubes of edge length $r_{h+1}$ and explored recursively (lines~13--14).

The algorithm takes the transfer-function Lipschitz constant $\Gamma$ as an input. In practice, one may use an estimate or upper bound of $\Gamma$ which affects the elimination rule.  If $\Gamma$ is underestimated, the elimination threshold becomes too aggressive, and the algorithm may incorrectly eliminate the cube containing the Condorcet winner. On the other hand, if $\Gamma$ is substantially overestimated, the threshold becomes overly conservative: more suboptimal cubes survive, exploration becomes less focused, and the regret may increase. Thus, the choice of $\Gamma$ induces a natural safety--efficiency tradeoff. 

Although written recursively, the procedure is implemented as an in-place depth-first traversal. At any time, it stores only the current cube, current depth, current reference, current best candidate, and a constant number of counters and empirical estimates.
Information is discarded once recursion on a cube terminates, yielding a total memory usage of $O(\log T)$ due to the maximum depth $B=O(\log T)$. Across rounds, the reference arm is progressively localized near the Condorcet
winner with high probability.
After $M \leq B$ rounds where M is the last completed round, Lemma~\ref{lem:shrinking-region-dueling} ensures that all surviving arms lie within $O(r_{M-1})$ preference gap of the Condorcet winner. 
The algorithm then enters a cleanup phase, repeatedly playing the final reference arm $X^B$ or $X^{m-1}$ until the horizon $T$ is exhausted (Algorithm~\ref{alg:logli}, line~20).


\section{Regret Analysis for \ouralgo} 
First lemma shows that with $n_h$ number of duels in each round $m$ at depth $h$, the empirical preference gap for a cube $C$ closely approximate the preference gap for an arbitrary arm in $C$ with any reference arm $Y$. The proof essentially uses Hoeffding's inequality and Lipschitz property of the transfer function. The detailed proof is provided in Appendix \ref{app:lemone}. 
\begin{restatable}{lemma}{LemOne}[Uniform concentration for cube-level preference estimates]
\label{lemma:bound-prob-diff}
Fix a round $m$, depth $h$, and a cube $C\in\mathcal{A}_h^m$ of diameter at most $r_h$.
Let $Y\in\mathcal{X}$ be
the reference arm fixed before that sampling block begins and let
$\{x_i\}_{i=1}^{n_h}$ be i.i.d.\ samples drawn uniformly from $C$.
Define the empirical estimator $\widehat{\Delta}_h^m(C,Y)$ and event $\mathcal{E}$ as:
\begin{align}
\label{eq:phat-def}
\widehat{\Delta}_h^m(C,Y)
\;:=\;
\frac{1}{n_h}\sum_{i=1}^{n_h} o(x_i,Y) - \frac{1}{2}, \\
\label{eq:event-E}
\mathcal{E}
:=
\Bigg\{
\big|
\Delta(x,Y)
-
\widehat{\Delta}_h^m(C,Y)
\big|
\le\;
\sqrt{\frac{16\log(T/\delta)}{n_h}}
+
\Gamma r_h
\Bigg\},
\end{align}
for all
$1\le h\le m\le B$,
all $C\in\mathcal{A}_h^m$,
and all $x\in C$. Then, $\mathbb{P}(\mathcal{E}) \;\ge\; 1-2\delta.$ Here, $o(x_i,Y) \sim Ber(P(x_i\succ Y))$.
\end{restatable}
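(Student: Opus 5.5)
The plan is to split the deviation into a stochastic term and a deterministic Lipschitz term, and then take a union bound over all sampling blocks. Fix one sampling block: a round $m$, a depth $h$, a cube $C$, and a reference arm $Y$. Let $\mathcal F$ be the history up to the start of the block. Since $Y$ is fixed before the block begins, $Y$ is $\mathcal F$-measurable. Conditionally on $\mathcal F$, each outcome $o(x_i,Y)$ is therefore an i.i.d.\ Bernoulli draw: we sample $x_i\sim\mathrm{Unif}(C)$ and then $o\sim\mathrm{Ber}(P(x_i\succ Y))$. Its mean is
\[
\bar P(C,Y):=\frac{1}{|C|}\int_C \rho\bigl(f(u)-f(Y)\bigr)\,du .
\]
Define $\bar\Delta(C,Y):=\bar P(C,Y)-\tfrac12$. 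For any $x\in C$ we use the triangle inequality
\[
\bigl|\Delta(x,Y)-\widehat{\Delta}_h^m(C,Y)\bigr|
\le \bigl|\Delta(x,Y)-\bar\Delta(C,Y)\bigr| + \bigl|\bar\Delta(C,Y)-\widehat{\Delta}_h^m(C,Y)\bigr|,
\]
and bound the two terms separately.

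\textbf{Deterministic (bias) term.} For $u,x\in C$ we have $\|u-x\|\le r_h$. Assumption~\ref{ass:lipschitz-f} gives $|f(u)-f(x)|\le r_h$. The upper Lipschitz bound in Assumption~\ref{ass:transferlipschitz} then gives $|\rho(f(u)-f(Y))-\rho(f(x)-f(Y))|\le\Gamma r_h$. Averaging over $u\in C$ yields $|\Delta(x,Y)-\bar\Delta(C,Y)|\le\Gamma r_h$ for every $x\in C$ simultaneously. This holds for every realization and needs no probability.

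\textbf{Stochastic term.} Conditionally on $\mathcal F$, the $n_h$ outcomes are i.i.d.\ in $[0,1]$ with mean $\bar P(C,Y)$. Hoeffding's inequality gives
\[
\Pr\bigl(|\widehat{\Delta}_h^m-\bar\Delta|>\epsilon\mid\mathcal F\bigr)\le 2e^{-2n_h\epsilon^2}.
\]
With $\epsilon=\sqrt{16\log(T/\delta)/n_h}$ this is at most $2(\delta/T)^{32}$. Integrating over $\mathcal F$ removes the conditioning.

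\textbf{Union bound.} This is the step that needs care, because the cubes actually visited and the references $Y$ depend on the data. I would handle this by indexing potential blocks deterministically. A block is identified by the triple (round $m$, depth $h$, dyadic cube $C$ of side $r_h$), together with at most one extra block per round for the $n_m$ samples drawn from $C_{\max}^m$. For any realization, each such index is used at most once, and the conditional Hoeffding bound applies to it whenever it is used. The total number of samples is at most $T$, and every block contains $n_h\ge 16\log(T/\delta)\ge 1$ samples, so at most $T$ blocks are ever executed. The cleanest route is to attach the bad event to the $k$-th executed block, for $k\le T$. Each block is executed after a stopping time, and its conditional failure probability is at most $2(\delta/T)^{32}$. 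Summing over $k\le T$ gives a total failure probability of at most $2T(\delta/T)^{32}\le 2\delta$, provided $\delta\le 1$ and $T\ge1$; the exponent $32$ leaves ample slack. Combining with the bias bound gives $\mathbb P(\mathcal E)\ge 1-2\delta$.

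The main obstacle is making the adaptivity rigorous. Specifically, we must check that $Y=X^{m-1}$ and the choice of $C$ are measurable with respect to the history before the block, so that conditional Hoeffding applies. We also need the count of blocks to be bounded by $T$ independently of the data, which follows from the time budget check in line~1 of \textsc{RoundFunc}.
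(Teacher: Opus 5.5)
Your proposal is correct and follows the paper's route. Both proofs bound the stochastic term with Hoeffding's inequality, bound the bias deterministically by $\Gamma r_h$ using the Lipschitz properties of $f$ and $\rho$, and finish with a union bound over sampling blocks. The differences are minor: you center at the population average $\bar\Delta(C,Y)$ over $C$ rather than at $\frac{1}{n_h}\sum_{i}\Delta(x_i,Y)$, and your stopping-time union bound over at most $T$ executed blocks (including the extra $n_m$-sample block in each round) is a rigorous version of what the paper handles informally by noting that the number of events is polynomial in $T$.
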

\paragraph{Proof sketch.}
For a fixed tuple $(m,h,C)$, the error 
$\big|\Delta(x,Y)-\widehat{\Delta}_h^m(C,Y)\big|$
is decomposed into a stochastic term and a bias term for an arbitrary arm $x$. Conditioned on the reference arm $Y$ and the cube $C$, the duel outcomes against $Y$ are i.i.d.\ Bernoulli, so Hoeffding's inequality gives concentration of $\widehat{\Delta}_h^m(C,Y)$ around its expectation at rate $\sqrt{16\log(T/\delta)/n_h}$. The remaining bias comes from replacing the gap of the fixed arm $x$ by the average gap over sampled points in $C$ and can be upper bounded  $\Gamma r_h$ through Lipschitz continuity of $f$ and  $\rho$. Combining the two bounds yields the stated deviation inequality, and a union bound over all rounds and active cubes gives the event $\mathcal{E}$ with probability at least $1-2\delta$.

The next lemma shows that when elimination occurs at line 12 of recursive subroutine, then the eliminated cube does not contain an optimal arm with high probability. The proof essentially uses Lemma \ref{lemma:bound-prob-diff}, the optimality of Condorcet arm, and the fact that comparisons are made against the same reference arm. The detailed proof is provided in Appendix \ref{app:lemtwo} 
\begin{restatable}{lemma}{LemTwo}[Optimal arm is never eliminated]
\label{lem:optimal-not-eliminated} Suppose that the algorithm completes $M\leq B$ rounds.
Under the event $\mathcal{E}$ defined in
Lemma~\ref{lemma:bound-prob-diff}, an optimal arm
$x^\star \in \arg\max_{x\in\mathcal{X}} f(x)$
is not eliminated in any of the completed rounds $m\in[M]$.
Consequently, the cube containing $x^\star$ never gets eliminated.

\end{restatable}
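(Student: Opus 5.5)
The plan is to show directly that, on $\mathcal{E}$, the elimination test in line~12 of \textsc{RoundFunc} always fails for the cube containing $x^\star$. Then argue by induction over depths that this cube is refined at every level $h<m$ of every completed round $m\in[M]$. At depth $h=m$ no elimination test is applied at all, so only depths $h<m$ matter.

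\textbf{Step 1 (a uniform error bound).} Since $n_h = 16\log(T/\delta)/r_h^2$, the stochastic term in~\eqref{eq:event-E} equals exactly $r_h$. Hence on $\mathcal{E}$, for every cube $C$ at depth $h$ in round $m$, every $x\in C$, and the reference arm $Y$ fixed for that block,
\[
\bigl|\Delta(x,Y)-\widehat{\Delta}_h^m(C,Y)\bigr|\le (1+\Gamma)\,r_h .
\]
I will also use this bound for the refinement block in lines~13--15 of Algorithm~\ref{alg:logli}. That block samples $n_m$ points from $C_{\max}^m$, a depth-$m$ cube, against $X^m$, so it is a depth-$m$ estimate. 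Its reference $X^m$ is chosen before its samples are drawn, so the Hoeffding argument behind Lemma~\ref{lemma:bound-prob-diff} applies to it verbatim. I will state explicitly that this block is one of the blocks covered by the union bound defining $\mathcal{E}$; checking that the union bound really covers it is the step I expect to need the most care.

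\textbf{Step 2 (upper bound on the anchor).} For $m=1$ we have $\tilde\Delta^0=0$. For $m\ge2$, $\tilde\Delta^{m-1}=\widehat{\Delta}^{m-1}_{m-1}(C_{\max}^{m-1},X^{m-1})$, and $X^{m-1}\in C_{\max}^{m-1}$ by line~12. Applying Step~1 with $x=X^{m-1}$, and using $\Delta(X^{m-1},X^{m-1})=0$, gives
\[
\tilde\Delta^{m-1}\le (1+\Gamma)\,r_{m-1}.
\]
This is the key use of anchoring: the reference arm lies in the very cube whose estimate defines the anchor.

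\textbf{Step 3 (lower bound on the optimal cube's estimate).} Let $C^\star\ni x^\star$ be a depth-$h$ cube that is reached in round $m$, with $h<m$. Step~1 with $x=x^\star$ gives $\widehat{\Delta}_h^m(C^\star,X^{m-1})\ge \Delta(x^\star,X^{m-1})-(1+\Gamma)r_h$. Since $f(x^\star)\ge f(X^{m-1})$ and $\rho$ is nondecreasing with $\rho(0)=\tfrac12$, we have $\Delta(x^\star,X^{m-1})\ge 0$, so $\widehat{\Delta}_h^m(C^\star,X^{m-1})\ge -(1+\Gamma)r_h$. Combining this with Step~2,
\[
\tilde\Delta^{m-1}-\widehat{\Delta}_h^m(C^\star,X^{m-1})\le (1+\Gamma)(r_{m-1}+r_h)\le 2(1+\Gamma)r_h ,
\]
where the last inequality uses $h\le m-1$, hence $r_{m-1}\le r_h$. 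For $m=1$ the bound is simply $(1+\Gamma)r_h$. So the test in line~12 is never triggered for $C^\star$, and $C^\star$ is refined.

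\textbf{Step 4 (induction on depth).} Within round $m$, the depth-1 cube containing $x^\star$ is visited. By Step~3 it is refined, so its subcube containing $x^\star$ is visited; repeating the argument, this continues down to depth $m$. Hence no cube containing $x^\star$ is eliminated in any completed round $m\in[M]$. If $x^\star$ lies on a shared cube boundary, the argument applies to any one containing cube chosen consistently.
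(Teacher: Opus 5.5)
Your proposal is correct and follows essentially the same route as the paper: on $\mathcal{E}$ you bound $\tilde\Delta^{m-1}$ from above and $\widehat{\Delta}_h^m(C^\star,X^{m-1})$ from below, then use that $x^\star$'s true gap against $X^{m-1}$ dominates, which caps the difference at $2(1+\Gamma)r_h$. The paper bounds the anchor through a generic sampled arm $x_C\in C_{\max}^{m-1}$ together with $\Delta(x_C,X^{m-1})\le\Delta(x^\star,X^{m-1})$, and your choice $x_C=X^{m-1}$ (so $\Delta(X^{m-1},X^{m-1})=0$) is a special case of that; your depth induction and your remark that $\mathcal{E}$ must also cover the refinement block spell out steps the paper leaves implicit.
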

\paragraph{Proof sketch.}
Suppose that the algorithm completes $M\leq B$ rounds. Fix any completed
round $m\in[M]$ and any depth $h<m$, and let $(C_h^m)^\star$ denote the
depth-$h$ cube containing the optimal arm $x^\star$. The key point is that, after the benchmark $\tilde{\Delta}^{m-1}$ is redefined in Line 15 using the newly chosen reference arm $X^{m-1}$, both $\tilde{\Delta}^{m-1}$ and the current-round estimate $\widehat{\Delta}_h^m\!\big((C_h^m)^\star,X^{m-1}\big)$ are evaluated against the same reference arm. This makes them directly comparable. Under the high-probability event $\mathcal E$, Lemma~\ref{lemma:bound-prob-diff} controls both quantities up to the usual concentration and Lipschitz error. Since $x^\star$ maximizes $f$ and the transfer function is monotone, its true preference gap against $X^{m-1}$ is at least as large as that of any arm in the cube from which $X^{m-1}$ was selected. Therefore, the difference $\tilde{\Delta}^{m-1}-\widehat{\Delta}_h^m\!\big((C_h^m)^\star,X^{m-1}\big)$ is at most the elimination threshold $2r_h(1+\Gamma)$. Hence the cube containing $x^\star$ cannot be eliminated. Since this holds at every depth and every completed round $m\le M$, the optimal arm is never discarded during the $M$ completed rounds. If the algorithm stops during round $M+1$, all partial updates from that incomplete round are discarded, and the algorithm returns the reference arm from previous completed round i.e. $X^{m-1}$.

The next lemma shows that as we keep shrinking the cube, the gap between optimal arm and any arm in that cube decreases. For this, we use Lemma \ref{lemma:sti} along with the Lemma \ref{lemma:bound-prob-diff}. The detailed proof is provided in Appendix~\ref{app:lemthree}.

\begin{restatable}{lemma}{LemThree}[Shrinking-region property for dueling bandits]
\label{lem:shrinking-region-dueling}
Suppose that the algorithm completes $M\le B$ rounds.
Under the event $\mathcal{E}$, for any
$1 \le h \le m \le M$,
any cube $C \in \mathcal{A}_h^m$,
and any arm $x \in C$,
the Condorcet gap satisfies
$\Delta(x^\star,x) \le \kappa(2\kappa + 4)(1+\Gamma)r_{h-1}$,
with $\kappa = \frac{\Gamma}{\lambda}$.
\end{restatable}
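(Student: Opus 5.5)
The approach is to read $\mathcal{A}_h^m$ as the cubes that are active (i.e.\ sampled, hence not yet eliminated) at depth $h$ in round $m$. The case $h=1$ is immediate: $r_0=2$, and $\Delta(x^\star,x)\le \Gamma\cdot 1\le \kappa(2\kappa+4)(1+\Gamma)\cdot 2$ trivially. For $h\ge 2$, a cube $C\in\mathcal{A}_h^m$ has a parent cube $C'$ at depth $h-1<m$. That parent survived the elimination test in line 12 of \textsc{RoundFunc}, so
\[
\tilde\Delta^{m-1}-\widehat{\Delta}_{h-1}^m(C',X^{m-1})\le 2r_{h-1}(1+\Gamma).
\]
Every $x\in C\subset C'$ therefore has empirical parent score close to the benchmark. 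I will turn this into a bound on the true gap $\Delta(x^\star,x)$ in three steps.

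First, convert the empirical survival condition into a statement about true gaps relative to the reference arm $Y:=X^{m-1}$. Under $\mathcal{E}$, Lemma~\ref{lemma:bound-prob-diff} with $n_{h-1}$ samples gives
\[
\Delta(x,Y)\ge \widehat{\Delta}_{h-1}^m(C',Y)-r_{h-1}/2-\Gamma r_{h-1},
\]
since $\sqrt{16\log(T/\delta)/n_{h-1}}$ is of order $r_{h-1}$. Next, relate the benchmark $\tilde\Delta^{m-1}$ to $\Delta(x^\star,Y)$. The benchmark is an empirical gap against $Y$ over the cube $C^{m-1}_{\max}$ from which $Y$ was drawn, at resolution $r_{m-1}\le r_{h-1}$. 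Applying Lemma~\ref{lemma:bound-prob-diff} at $x=Y$ gives $|\tilde\Delta^{m-1}-\Delta(Y,Y)|=|\tilde\Delta^{m-1}|\lesssim (1+\Gamma)r_{m-1}$. Combining these yields $\Delta(x,Y)\ge -c(1+\Gamma)r_{h-1}$, i.e.\ $x$ is nearly as good as $Y$.

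Second, show the reference arm itself is near-optimal: $\Delta(x^\star,Y)\lesssim \kappa(1+\Gamma)r_{h-1}$. Here I use that $C^{m-1}_{\max}$ maximized the empirical score against $X^{m-2}$ in round $m-1$. The cube containing $x^\star$ is never eliminated (Lemma~\ref{lem:optimal-not-eliminated}), so it was a terminal candidate, and hence $\widehat{\Delta}(C^{m-1}_{\max})\ge \widehat{\Delta}(\text{cube of }x^\star)$. Via Lemma~\ref{lemma:bound-prob-diff} this gives
\[
\Delta(Y,X^{m-2})\ge \Delta(x^\star,X^{m-2})-c(1+\Gamma)r_{m-1}.
\]
Item 2 of Lemma~\ref{lemma:sti}, applied with $f(x^\star)\ge f(Y)$ and $z=X^{m-2}$, then bounds $\Delta(x^\star,Y)\le \kappa\big(\Delta(x^\star,X^{m-2})-\Delta(Y,X^{m-2})\big)\le c\kappa(1+\Gamma)r_{m-1}$. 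If $f(Y)<f(X^{m-2})$, I first apply item 1 of Lemma~\ref{lemma:sti} with $z=X^{m-2}$; this case split is where the $\kappa$ factors compound.

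Third, combine the two steps. If $f(x)\ge f(Y)$, then $\Delta(x^\star,x)\le\Delta(x^\star,Y)$ by monotonicity and we are done. Otherwise $f(x^\star)\ge f(Y)\ge f(x)$, and item 1 of Lemma~\ref{lemma:sti} gives
\[
\Delta(x^\star,x)\le\kappa\big(\Delta(x^\star,Y)+\Delta(Y,x)\big)\le \kappa\big(2\kappa(1+\Gamma)r_{h-1}+4(1+\Gamma)r_{h-1}\big),
\]
which is the stated $\kappa(2\kappa+4)(1+\Gamma)r_{h-1}$ once the constants are tracked.

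I expect the main obstacle to be the second step, which controls the quality of the reference arm across rounds. It requires care that $X^{m-1}$ is sampled from a cube, not chosen as its maximizer, so the Lipschitz bias term $\Gamma r_{m-1}$ enters. The ordering cases for Lemma~\ref{lemma:sti} also need care, and the base round $m=1$ (with a uniformly random $X^0$ and $\tilde\Delta^0=0$) has to be handled separately. For $m=1$ only $h=1$ occurs, so it is covered by the trivial bound.
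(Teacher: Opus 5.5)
Your proposal is correct and follows the paper's proof essentially step for step. You decompose $\Delta(x^\star,x)$ through the common reference $X^{m-1}$ via Lemma~\ref{lemma:sti}. You bound $\Delta(x^\star,X^{m-1})\le 2\kappa(1+\Gamma)r_{m-1}$ using the empirical maximality of $C_{\max}^{m-1}$ over the never-eliminated cube of $x^\star$ against $X^{m-2}$. You bound $-\Delta(x,X^{m-1})\le 4(1+\Gamma)r_{h-1}$ from the parent's survival together with $|\tilde\Delta^{m-1}|\le(1+\Gamma)r_{m-1}$. The only cosmetic difference is that you handle the case $f(x)\ge f(X^{m-1})$ by plain monotonicity, where the paper uses item~2 of Lemma~\ref{lemma:sti}.

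One small slip: since $n_{h-1}=16\log(T/\delta)/r_{h-1}^2$, the confidence width $\sqrt{16\log(T/\delta)/n_{h-1}}$ is exactly $r_{h-1}$, not $r_{h-1}/2$. With that correct value, your constants $4(1+\Gamma)r_{h-1}$ and $\kappa(2\kappa+4)(1+\Gamma)r_{h-1}$ come out exactly.
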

\paragraph{Proof sketch.}
Fix a completed round $m\le M$, a depth $h\le m$, a surviving cube $C\in \mathcal A_h^m$, and an arm $x\in C$. For $h=1$, the claim is immediate since preference gaps are bounded. For $h\ge 2$, let $C_{\mathrm{par}}$ denote the parent of $C$ at depth $h-1$. The proof compares $x$ to the optimal arm $x^\star$ through the common reference arm $X^{m-1}$ and applies Lemma \ref{lemma:sti} to write $\Delta(x^\star,x)
\le
\frac\Gamma\lambda\left(\Delta(x^\star,X^{m-1})-\Delta(x,X^{m-1})\right)$.
The first term is controlled by showing that the newly chosen reference
arm $X^{m-1}$ comes from a cube whose empirical gap against the previous
reference $X^{m-2}$ is at least as large as that of the optimal cube;
under the concentration event $\mathcal E$, this implies that
$\Delta(x^\star,X^{m-1})$ is at most
$O(r_{m-1}(1+\Gamma))$. The second term is controlled by the fact that
the parent cube $C_{\mathrm{par}}$ survives elimination in round $m$, so
its empirical estimate against $X^{m-1}$ cannot be much smaller than the
benchmark $\tilde{\Delta}^{m-1}$. Using
Lemma~\ref{lemma:bound-prob-diff} to pass from empirical cube estimates
to true preference gaps then gives a lower bound on
$\Delta(x,X^{m-1})$. Combining the two bounds yields $\Delta(x^\star,x)\le\kappa(2\kappa+4)(1+\Gamma)r_{h-1}$ showing that every surviving cube in each completed round $m\le M$ lies
geometrically close to the optimum.

We now prove our main theorem that bounds the regret for \ouralgo\ by combining the above lemmas and finding the optimal value of $B$. The detailed proof is given in Appendix~\ref{app:thmone}.
\begin{restatable}{theorem}{ThmOne}[Regret bound]
\label{thm:regret-theorem}
Assume that the near-optimal set
$\mathcal S(\varepsilon)
:=
\{x\in\mathcal X : \Delta(x^\star,x)\le \varepsilon\}$ 
has zooming dimension $d_z$ under the metric $\|\cdot\|$. 
Then, under the event $\mathcal E$, the regret of the \ouralgo\ satisfies
\[
R(T)
=
\tilde O\!\left(
T^{\frac{d_z+1}{d_z+2}}
\right),
\]
by setting $B=\left\lceil\frac{\log_2 T}{d_z+2}\right\rceil + 1$. The bound holds even if the algorithm completes only $M<B$ rounds and
stops during round $M+1$ due to insufficient remaining time.
Here, $\tilde O(\cdot)$ hides logarithmic factors in $T$ and $1/\delta$.
\end{restatable}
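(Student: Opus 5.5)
Under $\mathcal E$, we split the horizon into the exploration phases (rounds $m=1,\dots,M$, plus any partial round $M+1$) and the cleanup phase. We bound the regret of each phase using Lemma~\ref{lem:shrinking-region-dueling} and the zooming-dimension covering assumption, and then balance the two by the choice of $B$.

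\textbf{Exploration regret.} Fix a round $m$ and a depth $h\le m$. Every cube $C\in\mathcal A_h^m$ that is sampled has a parent that survived elimination, and Lemma~\ref{lem:shrinking-region-dueling} puts every arm of $C$ in $\mathcal S(c\,r_{h-1})$ with $c:=\kappa(2\kappa+4)(1+\Gamma)$. Each duel at this level pits a sampled arm of $C$ against the reference $X^{m-1}$. By Lemma~\ref{lem:shrinking-region-dueling} for the previous round, $X^{m-1}$ lies in a surviving depth-$(m-1)$ cube, so $\Delta(x^\star,X^{m-1})\le c\,r_{m-2}\le c\,r_{h-2}$. The per-duel regret at depth $h$ is therefore $O(c\,r_h)$.

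To count the cubes, note that the surviving cubes have edge $r_h$ and lie inside $\mathcal S(c r_{h-1})$. Since $\mathcal S(c r_{h-1})$ is covered by $C_z(c r_{h-1})^{-d_z}$ balls of radius $c r_{h-1}$, and each such ball meets at most $(O(c))^d$ cubes of edge $r_h$, we get $|\mathcal A_h^m|\le C' r_h^{-d_z}$ with $C'$ depending on $c,d,C_z$. One subtlety is that the definition of $\mathcal S$ uses $\Delta$ while balls are metric. We handle this by covering $\mathcal S(\varepsilon)$ with $\varepsilon$-balls at scale $\varepsilon\asymp r_h$, which matches the cube scale up to constants.

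Combining, the regret at level $(m,h)$ is $O(r_h^{-d_z}\cdot n_h\cdot r_h)=O(\log(T/\delta)\,r_h^{-(d_z+1)})$. The extra $n_m$ duels at the end of round $m$ against the chosen cube add $O(\log(T/\delta)r_m^{-1})$, which is dominated. Summing the geometric series over $h\le m$ gives $O(\log(T/\delta)\,2^{m(d_z+1)})$ per round. Summing over $m\le B$ then gives $O(\log(T/\delta)\,2^{B(d_z+1)})$.

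\textbf{Cleanup regret and choice of $B$.} Suppose round $M$ completes. Then $X^M$ lies in a depth-$M$ cube that survived in round $M$, so by Lemma~\ref{lem:shrinking-region-dueling} $\Delta(x^\star,X^M)\le c\,r_{M-1}$. Playing $(X^M,X^M)$ for at most $T$ steps costs $O(c\,T\,2^{-M})$.
\begin{itemize}
\item If $M=B$, the cleanup cost is $O(T2^{-B})$. Setting $2^{B}\asymp T^{1/(d_z+2)}$ balances this against the exploration term $\log(T/\delta)\,2^{B(d_z+1)}$, and both become $\tilde O(T^{(d_z+1)/(d_z+2)})$.
\item If the algorithm stops early at $M<B$ and returns $X^{M}$ (in the paper's indexing, $X^{m-1}$), the stopping rule means time ran out. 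The exploration of round $M+1$ consumed nearly all $T$ steps, since the next block has size at most $n_{M+1}\lesssim \log(T/\delta)4^{M}$, which is $o(T)$ for $M<B$. Thus $T\lesssim \sum_{h\le M+1}|\mathcal A_h^{M+1}|n_h\lesssim \log(T/\delta)\,2^{(M+1)(d_z+2)}$, i.e.\ $2^{-M}\lesssim (\log(T/\delta)/T)^{1/(d_z+2)}$. The cleanup regret $T2^{-M}$ is then again $\tilde O(T^{(d_z+1)/(d_z+2)})$. The exploration regret is bounded by the same sums truncated at $M+1\le B$.
\end{itemize}

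\textbf{Main obstacle.} The hardest step is the early-stopping case. Its bound requires a lower bound on the time consumed in terms of $M$, yet the covering bound only gives upper bounds on the number of surviving cubes. If few cubes survive, round $M+1$ might use little time, and stopping would then have to come from a single large block. I would try to resolve this by arguing directly on the time budget rather than on cube counts. Each round $m$ uses at least $n_m\asymp\log(T/\delta)4^m$ steps, and the exploration regret of each duel is already controlled at its own scale. Hence every duel, whether in exploration or cleanup, incurs regret $O(r_{M})$ or is counted in the exploration sum. It then suffices to show $r_M\lesssim T^{-1/(d_z+2)}$ up to logarithmic factors whenever the horizon is exhausted. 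If this direct argument fails, I would fall back on bounding the cleanup by $T\cdot c\, r_{M-1}$ and using that $M$ rounds completing requires $\sum_{m\le M+1} n_m\ge T-o(T)$, which gives $r_M\lesssim\sqrt{\log(T/\delta)/T}$. This is even stronger.
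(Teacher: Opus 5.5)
Outside the early-stopping case, your argument is the paper's. You use the per-duel regret $O(c\,r_h)$ at depth $h$ from Lemma~\ref{lem:shrinking-region-dueling}, with $r_{m-2}\le r_{h-2}$ for the reference arm. You use the packing count $|\mathcal A_h^m|\lesssim r_h^{-d_z}$ obtained by covering $\mathcal S(c\,r_{h-1})$ with zooming balls, then geometric sums over $h$ and $m$, the cleanup $O(c\,T2^{-B})$ when $M=B$, and balancing through $B$.

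For $M<B$ the two routes differ. The paper never lower-bounds $M$. Its stopping condition says the leftover time satisfies $\tau<n_h\le n_{M+1}$. So the final cleanup lasts fewer than $n_{M+1}$ steps and costs at most $2c\,r_{M-1}n_{M+1}=O\big(c\log(T/\delta)\,2^{M}\big)$. This is dominated by the exploration term $O\big(\log(T/\delta)\,2^{M(d_z+1)}\big)$, and $M\le B-1$ then gives the rate.

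You instead charge the cleanup crudely as $c\,r_{M-1}T$ and show that exhausting the horizon forces $2^{M(d_z+2)}\gtrsim T/\log(T/\delta)$. This route is valid and gives something extra: a guarantee $\Delta(x^\star,X^M)=\tilde O(T^{-1/(d_z+2)})$ on the returned arm even under early stopping. The cost is a second use of the cube-count bound, now to upper-bound consumed time, whereas the paper needs only the stopping rule.

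As written, your early-stopping step needs two repairs.
\begin{itemize}
\item The budget inequality must count the time spent in rounds $1,\dots,M$ and in the reference blocks $n_m$, not only the cube samples of round $M+1$. The correct form is $T<t_{\mathrm{stop}}+n_{M+1}$ with $t_{\mathrm{stop}}\le\sum_{m\le M+1}\big(\sum_{h\le m}|\mathcal A_h^m|n_h+n_m\big)\lesssim\log(T/\delta)\,2^{M(d_z+2)}$, which is geometric in $m$.
\item The claim $n_{M+1}=o(T)$ for $M<B$ is false when $d_z=0$, where $n_{M+1}=16\log(T/\delta)4^{M}$ can be of order $T$. It is also unnecessary, because $n_{M+1}\le 16\log(T/\delta)2^{M(d_z+2)}$ is absorbed into the same bound.
\end{itemize}
With these repairs, the obstacle you describe disappears. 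The stopping rule supplies the lower bound on consumption, and the covering bound supplies the upper bound. If few cubes survive, then the unstarted block must itself be of size $\gtrsim T$, which forces $M$ to be even larger.

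Your fallback, however, is wrong and should be discarded. Nothing implies $\sum_{m\le M+1}n_m\ge T-o(T)$, because the budget is typically spent on cube-level sampling $\sum_h|\mathcal A_h^m|n_h$. Take constant $f$: no cube is eliminated under $\mathcal E$, and the budget runs out with $2^{M}\asymp(T/\log(T/\delta))^{1/(d+2)}$. Then $\sum_m n_m=o(T)$ for $d\ge 1$, so $r_M\lesssim\sqrt{\log(T/\delta)/T}$ fails.
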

 \paragraph{Proof sketch} 

Let $M\le B$ denote the number of rounds completed by the algorithm. We decompose the regret into the contribution accumulated during the completed rounds, the possible incomplete round $M+1$, and the remaining cleanup phase. For a fixed completed round $m\le M$, every duel compares the round reference arm $X^{m-1}$ with an arm drawn from some surviving cube at depth $h$. By Lemma~\ref{lem:shrinking-region-dueling}, both the played arm and the reference arm are $O(r_{h-1}(1+\Gamma))$-optimal, so each duel at depth $h$ incurs regret at most $O(r_h(1+\Gamma))$. Since each surviving cube is sampled $n_h=\Theta(\log(T/\delta)/r_h^2)$ times, the regret contribution at depth $h$ is controlled by $|\mathcal A_h^m|\cdot\widetilde O(1/r_h)$. The zooming-dimension assumption bounds the number of surviving depth-$h$ cubes by $|\mathcal A_h^m|\le C_zr_h^{-d_z}$, yielding a round-wise regret of order
$\widetilde O\!\left(\sum_{h=1}^m2^{(h-1)(d_z+1)}\right)$. If $M=B$, summing over all rounds gives an exploration term of order $\widetilde O(2^{(B-1)(d_z+1)})$. After round $B$, Lemma~\ref{lem:shrinking-region-dueling} implies that the final reference arm lies within $O(r_{B-1}(1+\Gamma))$ of the optimum, so the cleanup phase contributes at most $O(2^{-B}T)$. If $M<B$, the algorithm stops during round $M+1$ because insufficient time remains to complete the next sampling block. The regret incurred during the completed blocks of this incomplete round is bounded by the same argument as above. Moreover, if $\tau$ is the remaining number of time steps, the stopping condition gives $\tau<n_h\le n_{M+1}$. Since the algorithm plays $X^M$ during these steps, the cleanup regret is at most
$
O(r_{M-1}n_{M+1})
=
\widetilde O(2^M)
\le
\widetilde O\!\left(2^{M(d_z+1)}\right).
$
Thus, early stopping does not increase the regret order. Choosing
$B=\left\lceil\frac{\log_2 T}{d_z+2}\right\rceil+1$
balances the exploration and cleanup terms when all rounds are completed and, in both cases, yields
$
R(T)
=
\widetilde O\!\left(
T^{\frac{d_z+1}{d_z+2}}
\right).
$

Next, we provide space complexity analysis for \ouralgo\  which closely follow the  existing depth-based elimination algorithm\cite{zhu2025}. Detailed proof is provided in Appendix~\ref{app:space}.
\begin{restatable}{lemma}{LemmaFour}[Space complexity]
\label{lemma:Space-analysis}
\ouralgo\ requires at most
$O(d\log T)$ bits of memory at any time.
\end{restatable}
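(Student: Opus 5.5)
We bound the memory by listing every quantity the algorithm keeps at any moment and counting the bits for each. The argument follows the depth-first accounting of \cite{zhu2025}. There are three kinds of stored data: (i) global scalars, namely $T$, $\delta$, $\Gamma$, $B$, the time counter $t$ and the round index $m$; (ii) the cross-round state, namely the reference arm $X^{m-1}$, the benchmark $\tilde\Delta^{m-1}$, the running maximum $\Delta_{\max}^m$ and the best cube $C_{\max}^m$; and (iii) the state of the recursion stack of \textsc{RoundFunc}. Each item in (i) is an integer at most $T$, or a real stored to a fixed precision, so it costs $O(\log T)$ bits.

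For (ii), we first note that every empirical gap $\widehat{\Delta}_h^m(C,Y)$ is determined by a win count between $0$ and $n_h\le T$. Storing that count exactly uses $O(\log T)$ bits, and the same holds for $\tilde\Delta^{m-1}$ and $\Delta_{\max}^m$. A cube at depth $h\le B$ is fixed by its $d$ integer corner coordinates on the grid of side $r_h=2^{-h+1}$, so it takes $d(h-1)\le dB=O(d\log T)$ bits. The reference arm $X^{m-1}$ is a point drawn from a terminal cube of side $r_m$. Because the analysis only uses it through its cube, via the Lipschitz bias term in Lemma~\ref{lemma:bound-prob-diff}, we can store it to precision $r_{B+1}$ (or $1/T$). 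That costs $O(d\log T)$ bits.

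The key step, and the expected obstacle, is (iii): the recursion depth is up to $B=O(\log T)$, so naively saving a frame per level would cost $O(d\log^2 T)$. The plan is to implement the traversal in place. The current cube at depth $h$ is encoded by its corner coordinates, and these coordinates, read bit by bit, already spell out the whole root-to-node path. Moving to the next sibling or backtracking to the parent is then a bit operation on these coordinates, with the depth $h$ kept in a counter of $O(\log\log T)$ bits. A single frame costs $O(d\log T)$ bits in total.

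Two facts make this work. First, the elimination test at depth $h$ needs only the current estimate and $\tilde\Delta^{m-1}$, both already counted. Second, once a child call returns, the parent's estimate is never used again, so nothing has to be kept for ancestors. Sampled arms are drawn, dueled and added to a counter one at a time, never stored. Adding the three parts gives $O(d\log T)$ bits.
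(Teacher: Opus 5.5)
Your proposal is correct and uses the same accounting as the paper's proof: a constant number of stored objects, cubes as $O(dh)$-bit grid indices, arms at finite precision, empirical gaps as win counts, and an in-place depth-first traversal with no recursion stack; your observation that the current cube's corner coordinates already encode the root-to-node path, so that moving to a sibling or backtracking is a bit operation, justifies the no-stack claim more cleanly than the paper's bare assertion. One small bookkeeping fix: Algorithm~\ref{alg:logli} chooses $X^m$ from the arms sampled in $C_{\max}^m$, so you cannot say sampled arms are never stored---you must keep one candidate arm (e.g., a reservoir sample replaced whenever $C_{\max}^m$ changes, or simply a fresh draw from $C_{\max}^m$, which your remark that the analysis uses the reference arm only through its cube already permits), as the paper does with its ``current-round best arm,'' at an extra cost of only $O(d\log T)$ bits.
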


\section{Lower Bound}
In lower bound, we prove that given any dueling algorithm, there exists a scalar-feedback bandit algorithm whose regret is twice of that of dueling algorithm. 
Since our model class includes the linear transfer function, we provide the lower bound for this subclass. Let $\mathcal{H}$ be any class of stochastic Lipschitz bandit instances on
$\mathcal{X}$, where each instance is specified by a mean reward function
$\mu:\mathcal{X}\to[0,1]$, and pulling arm $x$ returns an independent Bernoulli
reward with mean $\mu(x)$. For each $\mu\in\mathcal{H}$, define the induced dueling instance
$\Phi(\mu)$ on the same action space by
\[
P_\mu(x\succ y)=\frac12\bigl(1+\mu(x)-\mu(y)\bigr), \qquad x,y\in\mathcal{X}.
\]
This is exactly the dueling model with linear transfer function. Then we have the following theorem:

\begin{restatable}{theorem}{ThmTwo}[Reduction]
\label{thm:alg-reduction}
Let $A$ be any dueling bandit algorithm with linear transfer function operating for $T$ rounds. Then there exists a scalar-feedback bandit algorithm $B_A$ operating for $2T$
rounds such that, for every instance $\mu$,
$\mathbb E_\mu\!\left[R^{\mathrm{sc}}_{B_A}(2T)\right]
=
2\,\mathbb E_\mu\!\left[R^{\mathrm{duel}}_{A}(T)\right].$ Here, $R^{\mathrm{sc}}_{B_A}(2T)$ represents regret by algorithm $B_A$ for scalar-feedback bandit algorithm and $R^{\mathrm{duel}}_{A}$ represent regret by algorithm $A$ under duel-feedback bandit algorithm.
\end{restatable}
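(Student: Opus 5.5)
The plan is to construct $B_A$ by simulating $A$, spending two scalar pulls per duel. At each simulated round $t=1,\dots,T$, $B_A$ asks $A$ for its pair $(x_t,y_t)$. It pulls $x_t$ at scalar round $2t-1$ and $y_t$ at scalar round $2t$, obtaining independent Bernoulli rewards $u_t\sim\mathrm{Ber}(\mu(x_t))$ and $v_t\sim\mathrm{Ber}(\mu(y_t))$. From these it builds a synthetic duel outcome $o_t$ and feeds it back to $A$. For the duel outcome I use the standard randomized rule: draw independent $\xi_t\sim\mathrm{Ber}(1/2)$ and set $o_t=u_t$ if $u_t\neq v_t$ and $o_t=\xi_t$ if $u_t=v_t$. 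Equivalently, $o_t=\tfrac12(1+u_t-v_t)$ with ties broken by a fair coin.

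The first step is to check that $o_t$ has the correct conditional law. Given the history and $(x_t,y_t)$,
\[
\mathbb P(o_t=1)=\mu(x_t)(1-\mu(y_t))+\tfrac12\bigl[\mu(x_t)\mu(y_t)+(1-\mu(x_t))(1-\mu(y_t))\bigr]=\tfrac12\bigl(1+\mu(x_t)-\mu(y_t)\bigr),
\]
which is exactly $P_\mu(x_t\succ y_t)$ for the instance $\Phi(\mu)$. The outcome is also conditionally independent of the past given the current pair, because the fresh rewards and the coin are independent of everything before. By induction on $t$, the joint law of the sequence $(x_t,y_t,o_t)_{t\le T}$ produced inside $B_A$ therefore coincides with the law of $A$'s interaction with $\Phi(\mu)$. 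This includes $A$'s internal randomness, which $B_A$ simulates faithfully. This coupling is the main point to verify carefully; the only subtlety is that $A$ may be adaptive, so the argument must be conditional on the filtration at each step.

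Next I match the regrets. In the linear model, $\Delta(x^\star,x)=P_\mu(x^\star\succ x)-\tfrac12=\tfrac12(\mu(x^\star)-\mu(x))$, and $x^\star$ maximizes $\mu$. The scalar regret of $B_A$ over $2T$ pulls is
\[
\sum_{t=1}^T\bigl[(\mu^\star-\mu(x_t))+(\mu^\star-\mu(y_t))\bigr]=2\sum_{t=1}^T\bigl[\Delta(x^\star,x_t)+\Delta(x^\star,y_t)\bigr]=2R^{\mathrm{duel}}_A(T),
\]
and this holds pathwise under the coupling. Taking expectations and using the equality of laws from the previous step gives $\mathbb E_\mu[R^{\mathrm{sc}}_{B_A}(2T)]=2\,\mathbb E_\mu[R^{\mathrm{duel}}_A(T)]$. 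The regret accounting is immediate once the coupling is in place, so the distributional equivalence of the first step is where I expect all the care to be needed.
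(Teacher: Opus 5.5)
Your proposal is correct and follows the paper's proof essentially step for step. It simulates $A$ with two scalar pulls per duel, builds the synthetic outcome by comparing the two Bernoulli rewards with fair-coin tie-breaking, verifies $\mathbb P(o_t=1)=\tfrac12(1+\mu(x_t)-\mu(y_t))$, and uses $\Delta(x^\star,x)=\tfrac12(\mu(x^\star)-\mu(x))$ to get the exact factor of two (your conditioning on the filtration to handle adaptivity is slightly more careful than the paper's one-line claim that the internal transcript has the correct law).
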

\paragraph{Proof sketch.}
Given any dueling bandit algorithm $A$, we construct a scalar-feedback algorithm
$B_A$ that simulates $A$ using two scalar pulls per duel. Whenever $A$ queries
$(x_t,y_t)$, the algorithm $B_A$ pulls both arms once and converts the two scalar
observations into a synthetic duel outcome by declaring the larger reward the
winner and breaking ties uniformly at random. Under the linear transfer function, this
synthetic comparison has exactly the same distribution as a true duel. Moreover, the
scalar regret incurred by the two pulls is exactly twice the dueling regret at
that round. Summing over all rounds and taking expectations yields the claim. 

Thus, the lower bound on $B_A$ is thus applicable to $A$. Noting that for every $A$, there exists a corresponding $B_A$, we get the desired Corollary: 
\begin{corollary}
\label{cor:lb}
If the corresponding stochastic Lipschitz bandit class satisfies the lower
bound
$\inf_B \sup_{\mu\in\mathcal H}
\mathbb E_\mu[R_B^{\mathrm{sc}}(T)]
=
\Omega\!\left(T^{\frac{d_z+1}{d_z+2}}\right)$, 
then the lower bound on dueling bandit class satisfies
$\inf_A \sup_{\nu\in\Phi(\mathcal H)}
\mathbb E_\nu[R_A^{\mathrm{duel}}(T)]
=
\Omega\!\left(T^{\frac{d_z+1}{d_z+2}}\right)$.
\end{corollary}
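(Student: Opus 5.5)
The plan is to argue by contradiction (equivalently, by contraposition) using Theorem~\ref{thm:alg-reduction}. Fix an arbitrary dueling algorithm $A$ for horizon $T$ and let $B_A$ be the scalar-feedback algorithm it induces, which runs for $2T$ rounds. For every $\mu\in\mathcal H$ the theorem gives the identity
\[
\mathbb E_\mu\bigl[R^{\mathrm{duel}}_A(T)\bigr]=\tfrac12\,\mathbb E_\mu\bigl[R^{\mathrm{sc}}_{B_A}(2T)\bigr].
\]
Since $\Phi$ maps $\mathcal H$ onto $\Phi(\mathcal H)$, and the dueling regret on $\nu=\Phi(\mu)$ is exactly the quantity appearing in the theorem, taking suprema on both sides gives
\[
\sup_{\nu\in\Phi(\mathcal H)}\mathbb E_\nu\bigl[R^{\mathrm{duel}}_A(T)\bigr]=\tfrac12\sup_{\mu\in\mathcal H}\mathbb E_\mu\bigl[R^{\mathrm{sc}}_{B_A}(2T)\bigr].
\]

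Next I would lower bound the right-hand side by the infimum over all scalar algorithms at horizon $2T$. Because $B_A$ is a legitimate scalar-feedback algorithm, the right-hand side is at least
\[
\tfrac12\inf_{B}\sup_{\mu}\mathbb E_\mu\bigl[R^{\mathrm{sc}}_B(2T)\bigr].
\]
By the hypothesis applied at horizon $2T$, this is at least $\tfrac12\, c\,(2T)^{\frac{d_z+1}{d_z+2}}\ge \tfrac c2\, T^{\frac{d_z+1}{d_z+2}}$ for $T$ large enough. The constant $c$ does not depend on $A$, so taking the infimum over $A$ yields the claim.

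The one point needing care is the quantifier structure and the horizon change. The hypothesis must be read as holding for every horizon (in particular for $2T$), with $\Omega$ hiding a constant uniform in the horizon. I also need the reduction to be valid simultaneously for all $\mu$ with the same $B_A$; this holds because $B_A$ is constructed from $A$ alone, without knowledge of $\mu$. A further check is that the zooming dimension $d_z$ is the same in the scalar instance $\mu$ and the induced instance $\Phi(\mu)$. Under the linear transfer, $\Delta(x^\star,x)=\tfrac12(\mu(x^\star)-\mu(x))$, so the near-optimal sets agree up to a factor of $2$ in $\varepsilon$. This changes only $C_z$, not $d_z$.
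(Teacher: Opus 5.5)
Your proposal is correct and follows the paper's argument: for each dueling algorithm $A$, invoke Theorem~\ref{thm:alg-reduction} to obtain $B_A$, apply the scalar lower bound to $B_A$, and then take the infimum over $A$. You also make explicit what the paper leaves implicit, namely that the hypothesis must be applied at horizon $2T$ with a horizon-uniform constant (so that $\tfrac12 c(2T)^{\frac{d_z+1}{d_z+2}}\ge \tfrac c2 T^{\frac{d_z+1}{d_z+2}}$), and that $\Phi$ preserves the zooming dimension up to constants.
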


\section{Experiments}

We evaluate the proposed algorithm on a synthetic continuous dueling bandit instance to study four key questions: (i) whether the sequence of reference arms converges toward the optimal action, (ii) whether the preference gap between the optimal arm and the reference arm decreases across rounds as predicted by our theory, (iii) how cumulative regret scales with the time horizon, and (iv) how the proposed adaptive zooming strategy compares with a uniform discretization baseline.

We consider the continuous action space $\mathcal{X}=[0,1]^2$. The utility of an arm $x \in \mathcal{X}$ is generated according to $\mu(x)=1-\frac{2}{3}\|x-x_1\|-\frac{1}{3}\|x-x_2\|$, where $x_1=(0.3,0.1),  x_2=(0.9,0.9)$. This reward landscape induces a unique maximizer at $x^\star=(0.3,0.1)$. Pairwise comparisons are stochastic and follow the logistic transfer model.
Thus, arms with higher utility are more likely to win pairwise duels, with confidence increasing as the utility gap grows.

Unless stated otherwise, we set the zooming dimension to $d_z=0$. For the convergence experiments, we run the algorithm for horizon $T=10^8$ over $50$ independent trials. To test robustness, we initialize the first reference arm at the deliberately suboptimal point $(0.9,0.9)$, which lies far from the optimum. All reported results are averaged over repeated runs. 

\begin{figure*}[h]
    \centering
    
    \begin{subfigure}[t]{0.32\textwidth}
        \centering
        \includegraphics[width=\linewidth]{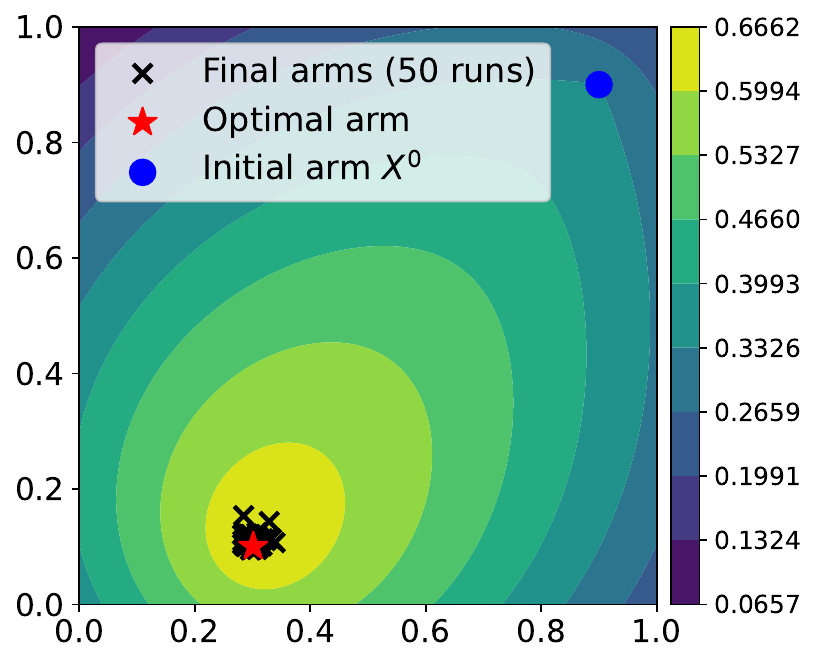}
        \caption{Reference arm convergence}
        \label{subfig:optconvergence}
    \end{subfigure}
    \hfill
    \begin{subfigure}[t]{0.32\textwidth}
        \centering
        \includegraphics[width=\linewidth]{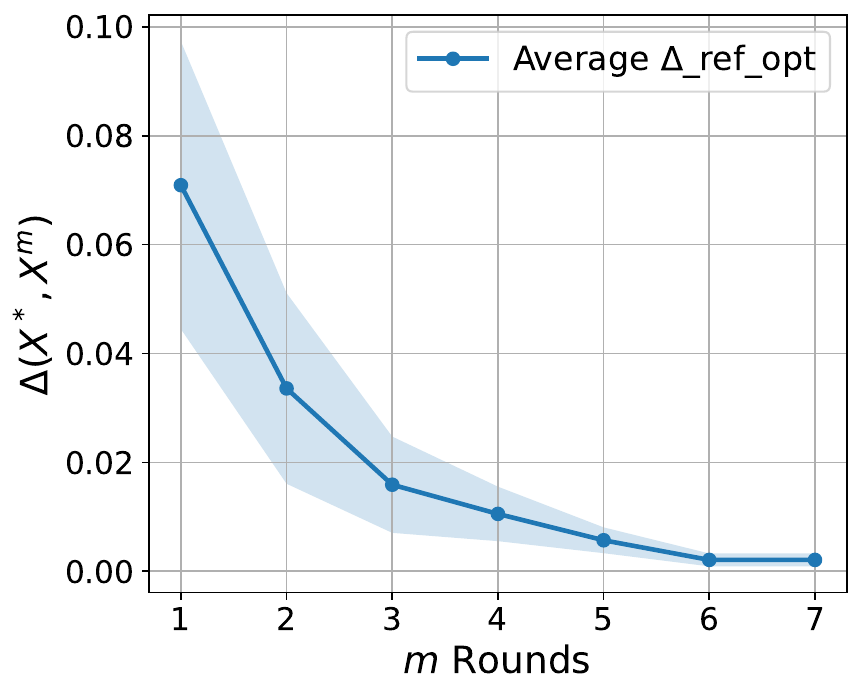}
        \caption{Convergence of $\Delta(x^*,X^m)$}
        \label{subfig:deltaconvergence}
    \end{subfigure}
    \hfill
    \begin{subfigure}[t]{0.32\textwidth}
        \centering
        \includegraphics[width=\linewidth]{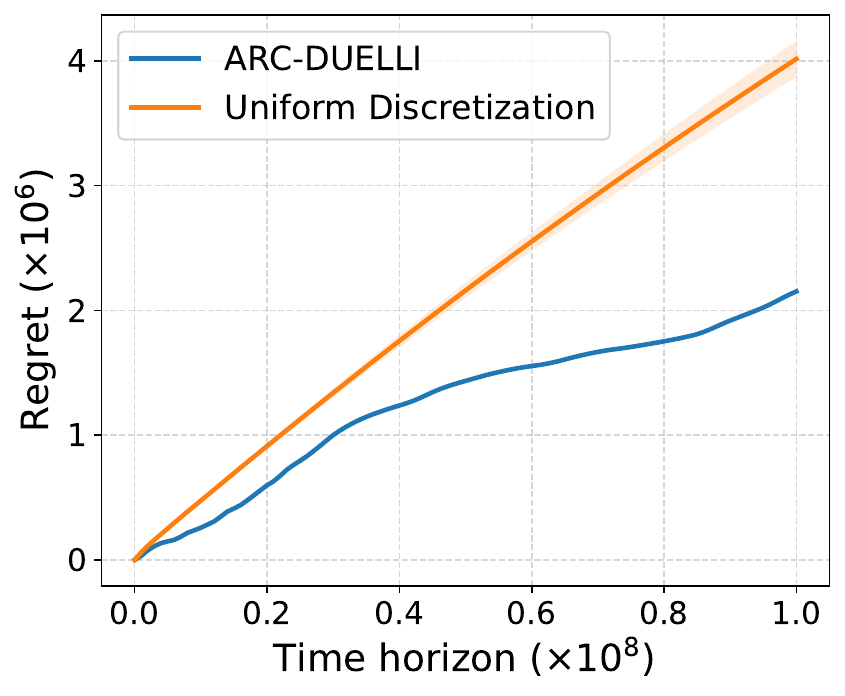}
        \caption{Regret Comparison}
        \label{subfig:regret}
    \end{subfigure}

\caption{Performance of \ouralgo\ over 50 runs. 
(a) Final reference arms concentrate near the optimal arm. 
(b) Decay of the gap $\Delta(x^\star, X^m)$ across rounds, showing convergence of the reference arm. 
(c) Cumulative regret compared against a uniform discretization baseline (Interleaved Filter).The solid curves denote averages over runs, and the shaded regions represent $\pm 1$ standard deviation.}
    \label{fig:main figure}
\end{figure*}

Figure~\ref{subfig:optconvergence} shows the trajectory of the reference arms. 
Starting from an unfavorable initialization $(0.9,0.9)$, the algorithm consistently drives the reference toward the neighborhood of the optimal action across all runs. 
To validate our theory, we track $\Delta(x^\star, X^m)$, where $X^m$ is the reference arm in round $m$. 
Figure~\ref{subfig:deltaconvergence} shows that this gap decreases steadily with $m$, approaching zero, which indicates improving reference quality and supports Lemma~\ref{lem:shrinking-region-dueling}. For fixed $T=10^8$, the gap stabilizes after several rounds and does not vanish further, as the exploration depth is bounded and zooming eventually stops. 


Figure~\ref{subfig:regret} compares our method with a baseline that uniformly discretizes the action space into a grid with cube radius $r = 2^{-m+1}$, matching the resolution used in \ouralgo\ for the same depth parameter $m$. Each cube is treated as an arm, and the Interleaved Filter algorithm~\cite{yue2012k} is applied on this discretized space. Our zooming-based method significantly outperforms this baseline. Evaluation of \ouralgo\ under different settings, including random initialization and reward functions beyond $[0,1]$, and weak regret are provided in Appendix~\ref{sec:Additional_experiment}.

Overall, the experiments corroborate the theoretical predictions: the reference arms converge to the optimum, the preference gap decays across rounds, cumulative regret is sublinear, and adaptive zooming significantly outperforms uniform discretization in continuous dueling settings.
\section{Discussion and Future Work}
We study Lipschitz dueling bandits over continuous action spaces and propose \ouralgo, the first algorithm that provably exploits geometric structure under purely comparative feedback. We show a regret bound of  $\tilde O\!\left(T^{\frac{d_z+1}{d_z+2}}\right)$, matching the best known rates for stochastic Lipschitz bandits up to logarithmic factors, while using only logarithmic memory.
These results show that purely comparative (dueling) feedback is sufficient to achieve regret guarantees that depend on the zooming dimension, matching those of scalar-reward Lipschitz bandits. This highlights that even without access to absolute rewards, the underlying structure of the problem can still be effectively exploited. As a result, our framework extends the applicability of bandit methods to settings where only relative feedback is available, such as human-in-the-loop optimization and recommender systems.

Several directions remain for future work. First, \ouralgo\  assumes knowledge of the zooming dimension in order to set the exploration depth. This limitation is shared by several zooming-style algorithms, whose guarantees often depend on parameters describing the near-optimal geometry. Removing this requirement, or designing methods that adapt automatically to the unknown zooming dimension, is an important next step. Second, our analysis relies on a bi-Lipschitz transfer condition on the range of utility differences and assumes knowledge of upper Lipschitz constant. Extending the theory to broader preference models, including transfer functions with vanishing slope or weaker identifiability, would further clarify the limits of comparative feedback. Finally, improving computational scalability in high-dimensional action spaces and handling unknown smoothness parameters are natural directions for future work.
\bibliography{references}
\bibliographystyle{icml2026}
\newpage

\appendix

\begin{center}
\Large
    \textbf{Appendix}
\end{center}  

\section{Notation used in the paper }

\begin{table}[h]
\centering
\caption{Notation used throughout the paper}
\label{tab:notation}
\renewcommand{\arraystretch}{1.18}
\setlength{\tabcolsep}{5pt}
\normalsize

\begin{tabular}{l p{0.63\columnwidth}}
\toprule
\textbf{Symbol} & \textbf{Meaning} \\
\midrule

\multicolumn{2}{c}{\textit{Problem Setup}} \\[4pt]

$\mathcal{X}$ & Action space, $\mathcal{X}\subseteq\mathbb{R}^d$ \\
$d$ & Dimension of action space \\
$\|\cdot\|$ & Metric on $\mathcal{X}$ (typically $\ell_\infty$) \\
$f(x)$ & Unknown reward / utility function \\
$\rho(\cdot)$ & Transfer function \\
$\Gamma$ & Lipschitz constant of $\rho$ \\

$P(x \succ y)$ & Probability that arm $x$ is preferred over arm $y$ \\
$\Delta(x,y)$ & Preference gap, $P(x\succ y)-\frac12$ \\
$x^*$ & Condorcet winner / optimal arm \\
$T$ & Time horizon \\

\midrule
\multicolumn{2}{c}{\textit{Algorithmic Quantities}} \\[4pt]

$m$ & Round index \\
$B$ & Maximum depth / total rounds \\
$r_h$ & Region radius (edge length) at level $h$ \\
$\mathcal{C}_h^m$ & Regions at level $h$ in round $m$ \\
$\mathcal{A}_h^m$ & Active regions at level $h$ in round $m$ \\
$d_z$ & Zooming dimension of near-optimal set \\
$X^m$ & Reference arm at round $m$ \\
$\tilde{\Delta}^m$ & Estimated best gap in round $m$ \\
$n_h$ & Samples per region at level $h$ \\

\bottomrule
\end{tabular}
\end{table}

\section{Missing Proofs for Regret Upper Bound}

\subsection{Proof of Lemma \ref{lemma:bound-prob-diff}}
\label{app:lemone}
\LemOne*
\begin{proof}
Fix $(h,m,C)$ and an arbitrary $x\in C$.
We decompose the estimation error by adding and subtracting the expectation:
\begin{align}
\label{eq:error-decomp}
&\big|
\Delta(x,Y)
-
\widehat{\Delta}_h^m(C,Y)
\big|\nonumber\\
&\le
\big|
\widehat{\Delta}_h^m(C,Y)
-
\mathbb{E}\widehat{\Delta}_h^m(C,Y)
\big|
\nonumber\\
&\quad+
\big|
\mathbb{E}\widehat{\Delta}_h^m(C,Y)
-
\Delta(x,Y)
\big|.
\end{align}

Conditional on $Y$ and $C$,
the random variables
$o(x_i, Y)$
are i.i.d.\ Bernoulli and bounded in $[0,1]$.
Therefore, by Hoeffding's inequality,
\begin{align}
\label{eq:hoeffding}
&\mathbb{P}\!\left(
\big|
\widehat{\Delta}_h^m(C,Y)
-
\mathbb{E}\hat{\Delta}_h^m(C,Y)
\big|
\ge
\sqrt{\frac{16\log(T/\delta)}{n_h}}
\right)\\
&\le
\frac{2\delta}{T^8}.\nonumber
\end{align}

By definition of the estimator,
\begin{align*}
\mathbb{E}\widehat{\Delta}_h^m(C,Y)
=
\frac{1}{n_h}
\sum_{i=1}^{n_h} \Delta(x_i,Y).
\end{align*}
Hence,
\begin{align}
\label{eq:bias-ineq1}
&
\big|
\Delta(x,Y)
-
\mathbb{E}\widehat{\Delta}_h^m(C,Y)
\big|
\nonumber\\
&\quad=
\Bigg|
\Delta(x,Y)
-
\frac{1}{n_h}
\sum_{i=1}^{n_h} \Delta(x_i,Y)
\Bigg|.
\end{align}
Applying definition of pairwise preference gap, Jensen's inequality and Assumption \ref{ass:transferlipschitz},

\begin{align}
\label{eq:bias-ineq2}
&\le
\frac{1}{n_h}
\sum_{i=1}^{n_h}
\big|
\rho(f(x)-f(Y))
-
\rho(f(x_i)-f(Y))
\big|
\\
\label{eq:lip}
&\le
\frac{\Gamma}{n_h}
\sum_{i=1}^{n_h}
\big|
f(x)-f(x_i)
\big|.
\end{align}

Since $f$ is $1$-Lipschitz and $\|x-x_i\|\le r_h$ for all $x,x_i\in C$,
\begin{equation}
\label{eq:f-lipschitz}
\big|
f(x)-f(x_i)
\big|
\le
\|x-x_i\|
\le
r_h.
\end{equation}
Substituting into \eqref{eq:bias-ineq2},
\begin{equation}
\label{eq:bias-final}
\big|
\Delta(x,Y)
-
\mathbb{E}\widehat{\Delta}_h^m(C,Y)
\big|
\le
\Gamma r_h
\end{equation}

Combining \eqref{eq:hoeffding} and \eqref{eq:bias-final} with
\eqref{eq:error-decomp},
we obtain that for fixed $(h,m,C,x)$, 
\begin{align*}
\mathbb{P}\Bigg(
&\big|
\Delta(x,X^{m-1})
-
\widehat{\Delta}_h^m(C,X^{m-1})
\big|  \\
&\ge
\sqrt{\frac{16\log(T/\delta)}{n_h}}
+
\Gamma r_h
\Bigg)
\;\le\;
\frac{2\delta}{T^8}.
\end{align*}

Taking a union bound over all
$1\le h\le m\le B$,
all $C\in\mathcal{A}_h^m$,
and the finite set of sampled arms in each cube,
and noting that the total number of such events is polynomial in $T$,
yields
\begin{equation}
\mathbb{P}(\mathcal{E}) \;\ge\; 1-2\delta,
\end{equation}
as claimed.

\end{proof}
\subsection{Proof of Lemma \ref{lem:optimal-not-eliminated}}
\label{app:lemtwo}
\LemTwo*
\begin{proof}
Suppose that the algorithm completes $M\le B$ rounds.
The claim is vacuous for $m=1$, since there is no depth
$h\in\{1,\dots,m-1\}$. Fix an arbitrary completed round
$m\in\{2,\dots,M\}$.
For each depth $h\in\{1,\dots,m-1\}$,
let $(C_h^m)^\star\in\mathcal{A}_h^m$ denote a cube at depth $h$
that contains some Condorcet winner $x^\star$.
We show that $(C_h^m)^\star$ is not eliminated at depth $h$.

Let $C\in\mathcal{A}_{m-1}^{m-1}$ be the cube from which reference arm $X^{m-1}$ was selected at depth $m-1$, i.e., $C_{max}^{m-1}$. Then, under the event $\mathcal{E}$, Lemma~\ref{lemma:bound-prob-diff} implies that
for any sampled arm $x_C\in C$ and for $x^\star\in (C_h^m)^\star$,
\begin{align}
\tilde{\Delta}^{m-1}
&\le
\Delta(x_C, X^{m-1}) + \sqrt{\frac{16\log(T/\delta)}{n_{m-1}}}
+
\Gamma r_{m-1}
\\\label{eq:dev-C}&\le 
\Delta(x_C, X^{m-1}) + \sqrt{\frac{16\log(T/\delta)}{n_{h}}}
+
\Gamma r_{h}
\\
\label{eq:dev-opt}
\widehat{\Delta}_h^m\!\big((C_h^m)^\star\big)
&\ge
\Delta(x^\star, X^{m-1}) - \sqrt{\frac{16\log(T/\delta)}{n_h}} - \Gamma r_h
\end{align}

Subtracting \eqref{eq:dev-opt} from \eqref{eq:dev-C} yields
\begin{align}
\label{eq:gap-diff}
\tilde{\Delta}^{m-1}
-
\widehat{\Delta}_h^m\!\big((C_h^m)^\star\big)
&\le
\Delta(x_C, X^{m-1})
-
\Delta(x^\star, X^{m-1})+
2\sqrt{\frac{16\log(T/\delta)}{n_h}}
+
2\,\Gamma r_h
\end{align}

Since $x^\star$ maximizes $f$ and the preference model
$P(x\succ y)=\rho(f(x)-f(y))$ is monotone in $f(x)$, implying
\begin{equation}
\label{eq:optimal-monotone}
\Delta(x_C, X^{m-1})
-
\Delta(x^\star, X^{m-1})
\le 0 .
\end{equation}
Combining \eqref{eq:gap-diff} and \eqref{eq:optimal-monotone}, we obtain
\begin{equation}
\label{eq:gap-final}
\tilde{\Delta}^{m-1}
-
\widehat{\Delta}_h^m\!\big((C_h^m)^\star\big)
\le
2\sqrt{\frac{16\log(T/\delta)}{n_h}}
+
2\Gamma r_h
\end{equation}

Choosing
\begin{equation}
\label{eq:nh-choice}
n_h
=
\frac{16\log(T/\delta)}{r_h^2},
\end{equation}
we have
\begin{equation}
\label{eq:elim-threshold}
2\sqrt{\frac{16\log(T/\delta)}{n_h}}
+
2\Gamma r_h
=
2r_h +2\Gamma r_h
\le
2\,r_h(1+\Gamma)
\end{equation}

Therefore, the elimination condition is not satisfied for
$(C_h^m)^\star$ at depth $h$, and the cube containing $x^\star$
is not eliminated.

Since the argument holds for all depths
$1\le h\le m-1$ and all completed rounds $m\le M$, the optimal arm
$x^\star$ is never eliminated during the $M$ completed rounds.
\end{proof}

\subsection{Proof of Lemma \ref{lem:shrinking-region-dueling}}
\label{app:lemthree}
\LemThree*
\begin{proof}
Suppose that the algorithm completes $M\le B$ rounds.
Fix an arbitrary completed round $m \in [M]$.
The case $h=1$ is trivial, since
$\Delta(\cdot,\cdot) \in [-\tfrac12,\tfrac12]$
and $r_0 = 1$, so the claim holds.

Now fix $h \ge 2$.
Let $(C_{m-1}^{m-1})^\star \in \mathcal{A}_{m-1}^{m-1}$
denote the unique cube at depth $m-1$ containing
the Condorcet winner $x^\star$.
Under event $\mathcal{E}$, this cube exists and is not eliminated
by Lemma~\ref{lem:optimal-not-eliminated}.

Let $C \in \mathcal{A}_h^m$ and $x \in C$.
Denote by $C_{\mathrm{par}} \in \mathcal{A}_{h-1}^m$
the parent cube of $C$ at depth $h-1$.
Clearly, $x \in C_{\mathrm{par}}$.

Introducing the reference arm $X^{m-1}$ by Lemma \ref{lemma:sti}, if $f(x^\star) \ge f(X^{m-1}) \ge f(x)$ or if $f(x^\star) \ge f(x) \ge f(X^{m-1}) $ , then we have:
\begin{align}
\label{eq:gap-decomp}
\Delta(x^\star,x)
&\leq
\frac{\Gamma}{\lambda}\left(\Delta(x^\star,X^{m-1})
+
\Delta(X^{m-1},x)\right)
\nonumber\\
&\leq
\frac{\Gamma}{\lambda}\left(\Delta(x^\star,X^{m-1})
-
\Delta(x,X^{m-1})\right).
\end{align}

Under event $\mathcal{E}$ and Lemma~\ref{lemma:bound-prob-diff},
we have the following bounds.

\emph{Optimal cube:}
\begin{align}
\label{eq:opt-cube}
\Delta(x^\star,X^{m-1})
&\le \frac{\Gamma}{\lambda}\left(\Delta(x^\star,X^{m-2}) - \Delta(X^{m-1},X^{m-2})\right) \nonumber\\ 
&\le 
\frac{\Gamma}{\lambda}\left(\widehat{\Delta}_{m-1}^{m-1}\!\left((C_{m-1}^{m-1})^\star, X^{m-2}\right) - \widehat{\Delta}_{m-1}^{m-1}(C_{max}^{m-1}, X^{m-2})
+
2\sqrt{\frac{16\log(T/\delta)}{n_{m-1}}}
+
2\Gamma r_{m-1}\right) \nonumber\\
&\le \frac{\Gamma}{\lambda}\left(2\sqrt{\frac{16\log(T/\delta)}{n_{m-1}}}
+
2\Gamma r_{m-1}\right)
\end{align}

\emph{Parent cube:}
\begin{align}
\label{eq:parent-cube}
\Delta(x,X^{m-1})
&\ge
\widehat{\Delta}_{h-1}^m(C_{\mathrm{par}},X^{m-1})
-
\sqrt{\frac{16\log(T/\delta)}{n_{h-1}}}
-
\Gamma r_{h-1}\nonumber\\
&\ge \tilde{\Delta}^{m-1} - 2r_{h-1}(1+\Gamma)-
\sqrt{\frac{16\log(T/\delta)}{n_{h-1}}}
-
\Gamma r_{h-1}
\end{align}
Now,
\begin{align}
\label{eq:deltatilde}
\tilde{\Delta}^{m-1} \ge \Delta(X^{m-1}, X^{m-1}) - \sqrt{\frac{16\log(T/\delta)}{n_{m-1}}}
-
\Gamma r_{m-1}
\end{align}

By the choice
\(
n_h = \frac{16\log(T/\delta)}{r_h^2}
\),
we have
\(
\sqrt{16\log(T/\delta)/n_h} = r_h
\).
Moreover, since $h \le m$, the radius sequence is nonincreasing as $r_h = 2^{-h+1}$,
so
\(
r_{m-1} \le r_{h-1}
\). Along with these facts and substituting \eqref{eq:opt-cube}, \eqref{eq:parent-cube}, and \eqref{eq:deltatilde}
into \eqref{eq:gap-decomp} yields
\begin{align}
\label{eq:combined}
\Delta(x^\star,x)\le \frac{\Gamma}{\lambda}r_{h-1}\left(\frac{2\Gamma}{\lambda}(1+\Gamma) + 4(1+\Gamma)\right) = \kappa(1+\Gamma)(2\kappa + 4)r_{h-1}.
\end{align}
Here $\kappa = \frac{\Gamma}{\lambda}$

Since the argument holds for every depth $1\le h\le m$ and every
completed round $m\in[M]$, the stated shrinking-region property holds
throughout all $M$ completed rounds. If the algorithm stops during
round $M+1$, the partial updates from that incomplete round are discarded,
and hence the property obtained after round $M$ remains unchanged.

\end{proof}

\subsection{Proof of Theorem \ref{thm:regret-theorem}}
\label{app:thmone}
\ThmOne*

\begin{proof}

Fix a positive integer $B$, to be specified later, and let $M\le B$
denote the number of rounds completed by the algorithm. Let $R_m$
denote the regret incurred during a completed round $m$. If $M<B$,
let $R_{M+1}^{\mathrm{part}}$ denote the regret incurred during the
incomplete round $M+1$. Finally, let $R_{\mathrm{rem}}$ denote the
regret incurred after exploration stops and the algorithm repeatedly
plays the last completed reference arm. Thus,
\begin{equation}
\label{eq:regret-decomp}
R(T)
=
\sum_{m=1}^{M}R_m
+
\mathbbm{1}\{M<B\}R_{M+1}^{\mathrm{part}}
+
R_{\mathrm{rem}}.
\end{equation}

We first bound the regret incurred during the completed rounds. Fix a
completed round $m\le M$. During round $m$, the algorithm considers
depths $h=1,\ldots,m$. For every cube
$C\in\mathcal A_h^m$, it performs $n_h$ duels between an arm
$x_{C,i}^m\in C$ and the reference arm $X^{m-1}$. In addition,
$n_m$ duels are performed to compute the new reference arm. Therefore,

\begin{align}
\label{eq:Rm-def}
R_m
&=
\underbrace{
\sum_{h=1}^{m}
\sum_{C\in\mathcal A_h^m}
\sum_{i=1}^{n_h}
\Big(
\Delta(x^\star,x_{C,i}^m)
+
\Delta(x^\star,X^{m-1})
\Big)
}_{\text{Regret from all cubes in one round}}
+
\underbrace{
\sum_{j=1}^{n_m}
\Big(
\Delta(x^\star,z_j^m)
+
\Delta(x^\star,X^m)
\Big)
}_{\text{Regret while selecting $\tilde{\Delta}^m$}}.
\end{align}

where $z^m_j \in C_{\max}^m$ is the arm sampled in the $j^{th}$ additional duel. For convenience, define
\[
K
:=
\kappa(2\kappa+4)(1+\Gamma)
\qquad\text{and}\qquad
L_T:=\log(T/\delta).
\]
By Lemma~\ref{lem:shrinking-region-dueling}, under the event
$\mathcal E$, any arm belonging to a surviving cube at depth $h$
satisfies
\[
\Delta(x^\star,x_{C,i}^m)
\le
Kr_{h-1}.
\]

The reference arm $X^{m-1}$ was selected at the end of round $m-1$ at height $m-1$ , under lemma \ref{lem:shrinking-region-dueling} satisfies

\[
\Delta(x^\star,X^{m-1})
\le
Kr_{m-2}.
\]
Since \(h\le m\), we have
\[
r_{m-2}\le 2r_{h-1},
\]
and therefore
\[
\Delta(x^\star,X^{m-1})
\le
2Kr_{h-1}.
\]

Consequently, each duel performed at depth $h$ incurs regret at most
\begin{align}
\Delta(x^\star,x_{C,i}^m)
+
\Delta(x^\star,X^{m-1})
&\le
3Kr_{h-1}
\nonumber\\
&=
6Kr_h,
\label{eq:per-duel}
\end{align}
where we used $r_{h-1}=2r_h$.

Since
$
n_h
=
\frac{16L_T}{r_h^2},
$
the regret contributed by a single cube at depth $h$ is at most
$
n_h\cdot 6Kr_h
=
\frac{96KL_T}{r_h}.
$
Therefore,
\begin{equation}
\label{eq:Rm-preliminary}
R_m
\le
\sum_{h=1}^{m}
|\mathcal A_h^m|
\frac{96KL_T}{r_h}
+
A_m,
\end{equation}
where $A_m$ denotes the regret incurred by the additional $n_m$
duels used to compute $\tilde{\Delta}^m$.

We now calculate $A_m$. By
Lemma~\ref{lem:shrinking-region-dueling},
\[
\Delta(x^\star,X^{m})
\le
Kr_{m-1} ,\quad \Delta(x^\star,z_j^m)
\le
Kr_{m-1} 
\]
Hence,
\begin{align}
A_m
&:=
\sum_{j=1}^{n_m}
\Big(
\Delta(x^\star,z_j^m)
+
\Delta(x^\star,X^{m})
\Big)
\nonumber\\
&\le
2n_mKr_{m-1}
\nonumber\\
&=
\frac{32L_T}{r_m^2}Kr_{m-1}
\nonumber\\
&=
\frac{64KL_T}{r_m},
\label{eq:Am-def}
\end{align}
where the last equality follows from $r_{m-1}=2r_m$.

We next bound the number of surviving cubes. By
Lemma~\ref{lem:shrinking-region-dueling}, every cube in
$\mathcal A_h^m$ is contained in
\[
\left\{
x\in\mathcal X:
\Delta(x^\star,x)
\le
Kr_{h-1}
\right\}
=
\mathcal S(2Kr_h).
\]
Each cube in $\mathcal A_h^m$ contains an $\ell_\infty$-ball of radius
$r_h/2$, and these balls have disjoint interiors. Thus,
$\mathcal A_h^m$ induces an $(r_h/2)$-packing of
$\mathcal S(2Kr_h)$.

By the definition of the zooming dimension, the set
$\mathcal S(2Kr_h)$ can be covered by at most
\[
C_z(2Kr_h)^{-d_z}
\]
balls of radius $2Kr_h$. Moreover, each such ball intersects at most
$
(8K+2)^d
$
disjoint $\ell_\infty$-balls of radius $r_h/2$. Therefore, defining
\[
\widetilde C_z
:=
C_z(2K)^{-d_z}(8K+2)^d,
\]
we obtain
\begin{equation}
\label{eq:packing}
|\mathcal A_h^m|
\le
\widetilde C_z r_h^{-d_z}
=
\widetilde C_z2^{(h-1)d_z}.
\end{equation}
By increasing $\widetilde C_z$ if necessary, we may assume that
$\widetilde C_z\ge1$.

Substituting \eqref{eq:packing} into
\eqref{eq:Rm-preliminary} gives
\begin{align}
R_m
&\le
96K\widetilde C_zL_T
\sum_{h=1}^{m}
2^{(h-1)(d_z+1)}
+
A_m.
\label{eq:Rm-final}
\end{align}

We now sum over all completed rounds $m\le M$. Exchanging the order of
summation yields
\begin{align}
\sum_{m=1}^{M}R_m
&\le
96K\widetilde C_zL_T
\sum_{m=1}^{M}
\sum_{h=1}^{m}
2^{(h-1)(d_z+1)}
+
\sum_{m=1}^{M}A_m
\nonumber\\
&=
96K\widetilde C_zL_T
\sum_{h=1}^{M}
(M-h+1)2^{(h-1)(d_z+1)}
+
\sum_{m=1}^{M}A_m.
\label{eq:completed-sum}
\end{align}

Let
\[
q:=2^{d_z+1}.
\]
Since $d_z\ge0$, we have $q\ge2$. Thus,
\begin{align}
\sum_{h=1}^{M}(M-h+1)q^{h-1}
&=
q^{M-1}
\sum_{j=0}^{M-1}(j+1)q^{-j}
\nonumber\\
&\le
q^{M-1}
\sum_{j=0}^{\infty}(j+1)q^{-j}
\nonumber\\
&=
\frac{q^{M-1}}{(1-q^{-1})^2}
\nonumber\\
&\le
4q^{M-1}.
\label{eq:double-sum}
\end{align}

Moreover, using $r_m=2^{-m+1}$ and \eqref{eq:Am-def},
\begin{align}
\sum_{m=1}^{M}A_m
&\le
64KL_T
\sum_{m=1}^{M}\frac{1}{r_m}
\nonumber\\
&=
64KL_T
\sum_{m=1}^{M}2^{m-1}
\nonumber\\
&=
64KL_T(2^M-1)
\nonumber\\
&\le
128K\widetilde C_zL_Tq^{M-1}.
\label{eq:Am-sum}
\end{align}
Here we used $\widetilde C_z\ge1$ and
$2^{M-1}\le q^{M-1}$.

Combining \eqref{eq:completed-sum}, \eqref{eq:double-sum}, and
\eqref{eq:Am-sum}, we obtain
\begin{equation}
\label{eq:completed-regret}
\sum_{m=1}^{M}R_m
\le
C_0K\widetilde C_zL_T
2^{(M-1)(d_z+1)}
\end{equation}
for some universal constant $C_0>0$.

We now consider separately the cases $M=B$ and $M<B$.

\paragraph{Case 1: $M=B$.}
In this case, all $B$ rounds are completed. By
Lemma~\ref{lem:shrinking-region-dueling}, the final reference arm
satisfies
\[
\Delta(x^\star,X^B)
\le
Kr_{B-1}.
\]
Therefore, repeatedly playing $X^B$ during the remaining time steps
incurs regret at most
\begin{align}
R_{\mathrm{rem}}
&\le
2Kr_{B-1}T
\nonumber\\
&=
4K2^{-B+1}T.
\label{eq:late-regret}
\end{align}

Combining \eqref{eq:completed-regret} and
\eqref{eq:late-regret}, we obtain
\begin{equation}
\label{eq:combined-regret}
\begin{aligned}
R(T)
\le\;&
C_0K\widetilde C_zL_T
2^{(B-1)(d_z+1)}
\\
&+
4K2^{-B+1}T.
\end{aligned}
\end{equation}

\paragraph{Case 2: $M<B$.}
In this case, the algorithm stops during round $M+1$ because
insufficient time remains to complete the next sampling block.
Although the partial updates from round $M+1$ are discarded, the
regret incurred by the duels already performed in this round must
still be counted.

Every sampling block that is started during round $M+1$ is completed
before the algorithm proceeds to the next block. Furthermore, the
incomplete round visits only a subset of the cubes that could be
visited in a complete round $M+1$. The argument of
Lemma~\ref{lem:shrinking-region-dueling} therefore applies to all cubes
visited before termination. Consequently, the same argument used in
\eqref{eq:Rm-final} gives
\begin{align}
R_{M+1}^{\mathrm{part}}
&\le
96K\widetilde C_zL_T
\sum_{h=1}^{M+1}
2^{(h-1)(d_z+1)}
+
A_{M+1}.
\label{eq:partial-round-first}
\end{align}

Since $q=2^{d_z+1}\ge2$,
\[
\sum_{h=1}^{M+1}q^{h-1}
=
\frac{q^{M+1}-1}{q-1}
\le
2q^M.
\]
Also,
\begin{align}
A_{M+1}
&\le
\frac{64KL_T}{r_{M+1}}
\nonumber\\
&=
64KL_T2^M
\nonumber\\
&\le
64K\widetilde C_zL_Tq^M.
\end{align}
Thus,
\begin{equation}
\label{eq:partial-round-regret}
R_{M+1}^{\mathrm{part}}
\le
C_1K\widetilde C_zL_T
2^{M(d_z+1)}
\end{equation}
for some universal constant $C_1>0$.

Combining \eqref{eq:completed-regret} and
\eqref{eq:partial-round-regret}, we obtain
\begin{equation}
\label{eq:exploration-early-stop}
\sum_{m=1}^{M}R_m
+
R_{M+1}^{\mathrm{part}}
\le
C_2K\widetilde C_zL_T
2^{M(d_z+1)}
\end{equation}
for some universal constant $C_2>0$.

Let $\tau$ denote the number of time steps remaining when the
algorithm stops. Suppose that the algorithm stops before starting a
sampling block of size $n_h$ during round $M+1$. By the stopping
condition,
\[
\tau<n_h.
\]
Since $h\le M+1$ and $n_h$ is nondecreasing in $h$,
\begin{equation}
\label{eq:remaining-rounds}
\tau<n_h\le n_{M+1}.
\end{equation}

By Lemma~\ref{lem:shrinking-region-dueling}, the reference arm $X^M$
obtained from the last completed round satisfies
\begin{equation}
\label{eq:last-reference-gap}
\Delta(x^\star,X^M)
\le
Kr_{M-1}.
\end{equation}
Therefore, repeatedly playing $X^M$ during the remaining $\tau$ steps
incurs regret at most
\begin{align}
R_{\mathrm{rem}}
&\le
2Kr_{M-1}\tau
\nonumber\\
&<
2Kr_{M-1}n_{M+1}.
\label{eq:early-cleanup}
\end{align}

Using
\[
n_{M+1}
=
\frac{16L_T}{r_{M+1}^2},
\qquad
r_{M-1}=4r_{M+1},
\qquad
r_{M+1}=2^{-M},
\]
we obtain
\begin{align}
R_{\mathrm{rem}}
&<
32KL_T\frac{r_{M-1}}{r_{M+1}^2}
\nonumber\\
&=
128KL_T\frac{1}{r_{M+1}}
\nonumber\\
&=
128KL_T2^M.
\label{eq:early-cleanup-bound}
\end{align}
Since $d_z\ge0$,
\[
2^M
\le
2^{M(d_z+1)}.
\]
Combining \eqref{eq:exploration-early-stop} and
\eqref{eq:early-cleanup-bound} gives
\begin{equation}
\label{eq:early-stop-total}
R(T)
\le
C_3K\widetilde C_zL_T
2^{M(d_z+1)}
\end{equation}
for some universal constant $C_3>0$.

Finally, choose
\begin{equation}
\label{eq:B-choice}
B^\star
=
\left\lceil
\frac{\log_2T}{d_z+2}
\right\rceil+1.
\end{equation}
When $M=B$, this choice balances the two terms in
\eqref{eq:combined-regret}, giving
\[
R(T)
=
\widetilde O\!\left(
T^{\frac{d_z+1}{d_z+2}}
\right).
\]
When $M<B$, we have $M\le B-1$, and hence
\begin{align}
2^{M(d_z+1)}
&\le
2^{(B-1)(d_z+1)}
\nonumber\\
&=
O\!\left(
T^{\frac{d_z+1}{d_z+2}}
\right).
\end{align}
Thus, the same regret bound holds even if the algorithm stops before
completing all $B$ rounds:
\[
R(T)
=
\widetilde O\!\left(
T^{\frac{d_z+1}{d_z+2}}
\right).
\]
This completes the proof.
\end{proof}

\subsection{Proof of Lemma \ref{lemma:Space-analysis}}
\label{lemfive}
\label{app:space}
\LemmaFour*
\begin{proof}
We provide a complete accounting of all quantities stored by the algorithm
at any time step.
\paragraph{Quantities maintained.}
At any timestamp, the algorithm stores only a constant number of variables:
\begin{itemize}
    \item Previous-round reference arm and gap: $(X^{m-1}, \tilde{\Delta}^{m-1})$.
    \item Current-round best arm and gap: $(X^{m}, \tilde{\Delta}^{m})$.
    \item Current best cube and gap: $(C_{\max}^m, \Delta_{\max}^m)$.
    \item Current cube and its estimate: $(C, \hat{\Delta}_h^m(C))$.
    \item Counters and indices: $(t, m, h)$ and the loop index for iterating over subcubes.
\end{itemize}
Thus, the number of stored objects is constant.

\paragraph{Encoding of cubes.}
Each cube $C$ lies in a hierarchical partition of $\mathcal{X} = [0,1]^d$.
A cube at depth $h$ is identified by its grid index in each dimension,
which requires $O(dh)$ bits. Since the maximum depth is $B = O(\log T)$,
each cube can be encoded using $O(d \log T)$ bits.

Alternatively, since each explored cube must be sampled at least once,
the total number of explored cubes is at most $T$. Hence, we can assign
each explored cube a unique identifier using $\log T$ bits.
Both representations yield the same asymptotic bound.

\paragraph{Encoding of arms.}
Each arm $X^m$ is sampled from some cube. We represent $X^m$ using the
identifier of the cube from which it was sampled together with a finite-precision
offset (or equivalently, a random seed identifying the sample).
Since the total number of samples is at most $T$, such a representation
requires $O(d \log T)$ bits.

\paragraph{Encoding of empirical gaps.}
For dueling feedback, the empirical gap is computed from binary outcomes
$o_i \in \{0,1\}$ as
\[
\hat{\Delta} = \frac{1}{n} \sum_{i=1}^n o_i - \frac{1}{2}.
\]
Thus, it suffices to store:
\begin{itemize}
    \item the number of comparisons $n \le T$,
    \item the number of wins $\sum_{i=1}^n o_i \le T$.
\end{itemize}
Each of these requires at most $\log T$ bits, so each empirical gap
requires $2\log T$ bits.

\paragraph{Counters.}
The counters $t, m, h$, as well as loop indices used to enumerate subcubes,
are all bounded by $T$ and hence require at most $\log T$ bits each.

\paragraph{Bounding the number of stored quantities.}
Since only a constant number of arms, cubes, empirical gaps, and counters
are stored simultaneously, and each requires at most $O(d \log T)$ bits,
the total working memory is $O(d \log T)$ bits.

\paragraph{Recursion and memory reuse.}
Although Algorithm~\ref{alg:roundfunc} is described recursively, it can be implemented
in a depth-first, in-place manner. Specifically, subcubes of a cube are
generated sequentially (e.g., in lexicographic order) and processed one
at a time. After a subcube is processed, all variables associated with it
are overwritten.

Therefore, the algorithm never stores:
\begin{itemize}
    \item the full set of subcubes,
    \item the full recursion tree,
    \item or the full history of comparisons.
\end{itemize}

Although algorithm is written recursively, it is implemented as an in-place
depth-first traversal. The algorithm stores only the current cube, the current
depth, and a constant number of counters and empirical estimates. Subcubes are
generated and processed sequentially, and their local variables are overwritten
after use. Hence, no recursion stack is stored, and no additional memory
proportional to the recursion depth is required.
Hence, no additional space proportional to the number of subcubes or
recursion depth is required.

\paragraph{Conclusion.}
Combining the above bounds, the total memory required is $O(d \log T)$ bits.
When the dimension $d$ is treated as a constant, this simplifies to
$O(\log T)$ bits.
Algorithm~\ref{alg:roundfunc}\end{proof}

\section{Proof for Regret Lower Bound}
\subsection{Proof of Theorem \ref{thm:alg-reduction}}
\label{app:lowerbound}
\ThmTwo*
\begin{proof}
The algorithm $B_A$ internally runs $A$. Whenever $A$ queries a pair $(x_t,y_t)$,
the algorithm $B_A$ pulls $x_t$ and $y_t$ once each, obtaining independent
samples
\[
U_t\sim\mathrm{Ber}(\mu(x_t)),
\qquad
V_t\sim\mathrm{Ber}(\mu(y_t)).
\]
It then generates a synthetic duel outcome
\[
O_t=
\begin{cases}
1,& U_t>V_t,\\
0,& U_t<V_t,\\
C_t,& U_t=V_t,
\end{cases}
\]
where $C_t\sim\mathrm{Ber}(1/2)$ is independent, and feeds $O_t$ back to the
internal copy of $A$.

A direct calculation shows
\[
\mathbb P(O_t=1\mid x_t,y_t)
=
\frac12\bigl(1+\mu(x_t)-\mu(y_t)\bigr),
\]
so the synthetic duel has exactly the same distribution as a true comparison in
the induced linear-link dueling instance. Hence the internal transcript of $A$
inside $B_A$ has the correct law.

Let $x^\star\in\arg\max_x \mu(x)$. Under the linear link,
\[
\Delta(x^\star,x)=\frac12\bigl(\mu(x^\star)-\mu(x)\bigr).
\]
Therefore, at round $t$,
\[
\Delta(x^\star,x_t)+\Delta(x^\star,y_t)
=
\frac12\bigl(\mu(x^\star)-\mu(x_t)\bigr)
+\frac12\bigl(\mu(x^\star)-\mu(y_t)\bigr).
\]
The two scalar pulls used by $B_A$ at that step incur regret
\[
\bigl(\mu(x^\star)-\mu(x_t)\bigr)+\bigl(\mu(x^\star)-\mu(y_t)\bigr),
\]
which is exactly twice the dueling regret. Summing over $t=1,\dots,T$ gives
\[
R^{\mathrm{sc}}_{B_A}(2T)=2\,R^{\mathrm{duel}}_{A}(T)
\]
\end{proof}

\section{Bi-Lipschitz continuity of transfer functions}
\begin{lemma}[Assumption \ref{ass:transferlipschitz} logistic sigmoid]
\label{lem:sigmoid-lipschitz}

Let $\rho:[-1,1]\to(0,1)$ be the logistic sigmoid defined by
\begin{equation}
\label{eq:sigmoid-def}
\rho(z) := \frac{1}{1+e^{-z}} .
\end{equation}
Then $\rho(0) = \frac12$, $\rho(z) = 1-\rho(-z)$, and $\rho$ is bi-Lipschitz with $\Gamma = \tfrac14$ and $\lambda = \frac{e}{(1+e)^2}$.
\end{lemma}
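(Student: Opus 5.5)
The plan is to verify the two symmetry identities directly from the formula and then obtain the bi-Lipschitz constants from the mean value theorem applied to $\rho'$ on $[-1,1]$.

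First, $\rho(0)=1/(1+e^0)=\tfrac12$ is immediate. For the symmetry, I would write
\[
1-\rho(-z)=1-\frac{1}{1+e^{z}}=\frac{e^{z}}{1+e^{z}}=\frac{1}{1+e^{-z}}=\rho(z),
\]
which gives $\rho(-z)=1-\rho(z)$, the form required in Assumption~\ref{ass:transferlipschitz}.

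For the bi-Lipschitz part, $\rho$ is smooth with derivative $\rho'(z)=\rho(z)(1-\rho(z))=\frac{e^{-z}}{(1+e^{-z})^2}$. For $-1\le z_2\le z_1\le 1$, the mean value theorem gives some $\xi\in[z_2,z_1]\subseteq[-1,1]$ with $\rho(z_1)-\rho(z_2)=\rho'(\xi)(z_1-z_2)$. It therefore suffices to show that
\[
\inf_{[-1,1]}\rho' = \frac{e}{(1+e)^2}\quad\text{and}\quad \sup_{[-1,1]}\rho' = \frac14 .
\]

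To compute these bounds, note that $\rho'$ is even, since the symmetry identity gives $\rho'(z)=\rho'(-z)$. Writing $\rho'=p(1-p)$ with $p=\rho(z)\in(0,1)$, the product $p(1-p)$ is maximized at $p=\tfrac12$, i.e.\ at $z=0$, which gives $\sup\rho'=\tfrac14$. Because $\rho$ is increasing and $p(1-p)$ decreases as $p$ moves away from $\tfrac12$, $\rho'$ is decreasing on $[0,1]$. Its minimum on $[-1,1]$ is therefore attained at $z=\pm1$, where
\[
\rho'(1)=\frac{e^{-1}}{(1+e^{-1})^2}=\frac{e}{(e+1)^2}
\]
after multiplying numerator and denominator by $e^2$. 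Since $\tfrac14\ge \frac{e}{(1+e)^2}>0$, we get $0<\lambda\le\Gamma$ as required.

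No step is a real obstacle. The only point needing care is the monotonicity of $\rho'$ on $[0,1]$, which guarantees that the infimum sits at the endpoint and not in the interior. This follows from the composition argument above, or directly from $\rho''(z)=\rho'(z)(1-2\rho(z))\le0$ for $z\ge0$.
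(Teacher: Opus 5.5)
Your proposal is correct and follows essentially the same route as the paper: compute $\rho'=\rho(1-\rho)$, bound it above by $\tfrac14$ using the maximum of $u(1-u)$, use evenness and monotonicity in $|z|$ to place the minimum on $[-1,1]$ at $z=\pm1$, and conclude with the mean value theorem. Your explicit check that $\rho''(z)=\rho'(z)(1-2\rho(z))\le 0$ for $z\ge 0$ proves the monotonicity of $\rho'$ in $|z|$, which the paper only asserts.
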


\begin{proof}
$\rho(0) = \frac12$, $\rho(z) = 1-\rho(-z)$ are easy to verify. For bi-Lipschitz,
we first compute the derivative of $\rho$. 
Differentiating yields
\begin{equation}
\label{eq:sigmoid-derivative}
\rho'(z)
=
\frac{e^{-z}}{(1+e^{-z})^2}
=
\frac{e^z}{(1+e^z)^2}
=
\rho(z)\bigl(1-\rho(z)\bigr).
\end{equation}

Since \(\rho(z)\in(0,1)\) for all \(z\in\mathbb R\), the function
\(u\mapsto u(1-u)\) is maximized over \(u\in[0,1]\) at \(u=1/2\).
Therefore,
\begin{equation}
\label{eq:sigmoid-upper-derivative-bound}
\sup_{z\in\mathbb R} |\rho'(z)|
=
\rho(0)(1-\rho(0))
=
\frac14 .
\end{equation}
Applying mean value theorem, we get that the logistic transfer is globally Lipschitz with upper constant
\[
\Gamma=\frac14.
\]

The derivative \(\rho'(z)\) is symmetric around zero and decreases as
\(|z|\) increases. Hence, over \([-1,1]\), its minimum is attained at
\(z=\pm 1\). Thus,
\begin{equation}
\label{eq:sigmoid-lower-derivative-bound}
\inf_{z\in[-1,1]} \rho'(z)
=
\rho'(1)
=
\rho'(-1)
=
\frac{e}{(1+e)^2}.
\end{equation}

Thus, we obtain
\[
\frac{e}{(1+e)^2}(a-b)
\le
\rho(a)-\rho(b)
\le
\frac14(a-b).
\]

Therefore, for all \(-1\le b\le a\le 1\), the logistic transfer satisfies
\[
\lambda(a-b)
\le
\rho(a)-\rho(b)
\le
\Gamma(a-b),
\]
with
\[
\lambda=\frac{e}{(1+e)^2},
\qquad
\Gamma=\frac14.
\]
\end{proof}

\begin{lemma}[Assumption \ref{ass:transferlipschitz} for the probit transfer]
\label{lem:probit-lipschitz}

Let $\Phi:\mathbb{R}\to(0,1)$ denote the probit function, defined as the
cumulative distribution function of a standard normal random variable,
\begin{equation}
\label{eq:probit-def}
\Phi(z)
:=
\int_{-\infty}^{z}
\frac{1}{\sqrt{2\pi}} e^{-t^{2}/2}\,dt .
\end{equation}
Then $\Phi(0) = \frac12$, $\Phi(z) = 1-\Phi(-z)$, and 
$\Phi$ is bi-Lipschitz continuous with 
$\Gamma = \tfrac{1}{\sqrt{2\pi}}$ and $\lambda = \tfrac{1}{\sqrt{2\pi}}e^{-1/2}$.
\end{lemma}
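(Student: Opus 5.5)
The plan mirrors the logistic case in Lemma~\ref{lem:sigmoid-lipschitz}: verify the two symmetry identities directly, then bound the derivative of $\Phi$ above and below on the relevant interval and convert those bounds into the bi-Lipschitz inequalities via the mean value theorem.

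\emph{Symmetry identities.} First, $\Phi(0)=\tfrac12$ holds because the standard normal density $\phi(t)=\frac{1}{\sqrt{2\pi}}e^{-t^2/2}$ is even, so exactly half of its total mass lies on $(-\infty,0]$. For the reflection identity I will use the substitution $t\mapsto -t$:
\[
1-\Phi(z)=\int_z^{\infty}\phi(t)\,dt=\int_{-\infty}^{-z}\phi(t)\,dt=\Phi(-z).
\]

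\emph{Derivative bounds.} By the fundamental theorem of calculus, $\Phi'(z)=\phi(z)=\frac{1}{\sqrt{2\pi}}e^{-z^2/2}$. This is even and strictly decreasing in $|z|$. Its supremum over $\mathbb R$ is therefore $\phi(0)=\frac{1}{\sqrt{2\pi}}$, and its infimum over $[-1,1]$ is attained at $z=\pm1$, giving $\frac{1}{\sqrt{2\pi}}e^{-1/2}$.

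\emph{Conclusion.} For $-1\le b\le a\le 1$, the mean value theorem gives $\Phi(a)-\Phi(b)=\phi(\xi)(a-b)$ for some $\xi\in[b,a]\subseteq[-1,1]$. Inserting the two derivative bounds yields
\[
\tfrac{1}{\sqrt{2\pi}}e^{-1/2}(a-b)\le \Phi(a)-\Phi(b)\le \tfrac{1}{\sqrt{2\pi}}(a-b).
\]
This is Assumption~\ref{ass:transferlipschitz} with $\Gamma=\frac{1}{\sqrt{2\pi}}$ and $\lambda=\frac{1}{\sqrt{2\pi}}e^{-1/2}$.

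\emph{Main obstacle.} There is no real obstacle. The one point to keep explicit is that the lower bound uses the restriction of the domain to $[-1,1]$, since $\phi$ has no positive lower bound on all of $\mathbb R$. This restriction is justified in Section~\ref{sec:problem-setup}: utility differences satisfy $|f(x)-f(y)|\le 1$.
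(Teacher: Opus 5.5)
Your proposal is correct and follows essentially the same route as the paper. Both identify $\Phi'$ with the standard normal density, bound it above by its value at $0$ and below on $[-1,1]$ by its value at $\pm 1$, and conclude via the mean value theorem; you also justify the symmetry identities with an explicit substitution, which the paper only asserts.
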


\begin{proof}
The function naturally satisfies $\Phi(0) = \frac12$ and symmetry $\Phi(z) = 1-\Phi(-z)$.
Since $\Phi$ is differentiable on $\mathbb{R}$, its derivative is given by
the standard normal density:
\begin{equation}
\label{eq:probit-derivative}
\Phi'(z)
=
\frac{1}{\sqrt{2\pi}} e^{-z^{2}/2},
\qquad z\in[-1,1].
\end{equation}
The exponential term achieves its maximum at $z=0$, from which it follows that
\begin{equation}
\label{eq:probit-derivative-bound}
|\Phi'(z)|
\le
\frac{1}{\sqrt{2\pi}},
\qquad
\forall z\in[-1,1].
\end{equation}

Consider now arbitrary $a,b\in\mathbb{R}$. By the Mean Value Theorem, there exists
a point $\xi$ between $a$ and $b$ such that
\begin{equation}
\label{eq:probit-mvt}
\Phi(a)-\Phi(b)
=
\Phi'(\xi)\,(a-b).
\end{equation}
Combining this identity with the uniform bound
\eqref{eq:probit-derivative-bound} yields
\[
|\Phi(a)-\Phi(b)|
\le
|\Phi'(\xi)|\,|a-b|
\le
\tfrac{1}{\sqrt{2\pi}}\,|a-b|,
\]
We now prove the lower bi-Lipschitz bound on the interval \([-1,1]\). Since \(e^{-z^2/2}\) decreases as \(|z|\) increases, the minimum of\(\Phi'(z)\) over \([-1,1]\) is attained at \(z=\pm 1\). Hence
\begin{equation}
\label{eq:probit-lower-derivative-bound}
\inf_{z\in[-1,1]} \Phi'(z)=\Phi'(1)=\Phi'(-1)=\frac{1}{\sqrt{2\pi}}e^{-1/2}.
\end{equation}
Now fix arbitrary \(-1\le b\le a\le 1\). By the Mean Value Theorem,there exists \(\xi\in[b,a]\subseteq[-1,1]\) such that
\begin{equation}
\label{eq:probit-mvt}
\Phi(a)-\Phi(b)=\Phi'(\xi)(a-b) \ge \frac{1}{\sqrt{2\pi}}e^{-1/2}.
\end{equation}
Thus, the probit transfer is \((\lambda,\Gamma)\)-bi-Lipschitz on\([-1,1]\).
\end{proof}

\begin{lemma}[Assumption \ref{ass:transferlipschitz} for linear link]
\label{lem:linear-lipschitz}

Let $\rho:\mathbb{R}\to\mathbb{R}$ be defined by
\begin{equation}
\label{eq:linear-def}
\rho(z)
:=
\tfrac12(1+z).
\end{equation}
Then $\rho(0)=\frac12$, $\rho(z) = 1-\rho(-z)$, and $\rho$ is bi-Lipschitz on $[-1,1]$ with $\Gamma = \tfrac12$ and $\lambda = \frac12$  i.e.,
\end{lemma}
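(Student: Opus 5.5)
The plan is to verify the three requirements of Assumption~\ref{ass:transferlipschitz} directly from the closed form $\rho(z)=\tfrac12(1+z)$, in the same spirit as Lemmas~\ref{lem:sigmoid-lipschitz} and~\ref{lem:probit-lipschitz}. Because $\rho$ is affine, no mean value argument is needed: every inequality should come out as an equality.

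First, I will check the normalization and symmetry conditions by substitution. We have $\rho(0)=\tfrac12(1+0)=\tfrac12$. For the symmetry, $1-\rho(-z)=1-\tfrac12(1-z)=\tfrac12(1+z)=\rho(z)$.

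Second, I will record that $\rho$ maps $[-1,1]$ into $[0,1]$, so that it is a valid transfer function in the sense of Section~\ref{sec:problem-setup}. This holds because $-1\le z\le 1$ gives $0\le \tfrac12(1+z)\le 1$.

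Third, for the bi-Lipschitz condition, fix $-1\le z_2\le z_1\le 1$ and compute
\[
\rho(z_1)-\rho(z_2)=\tfrac12(1+z_1)-\tfrac12(1+z_2)=\tfrac12(z_1-z_2).
\]
Both the lower bound with $\lambda=\tfrac12$ and the upper bound with $\Gamma=\tfrac12$ then hold with equality. It follows that $\kappa=\Gamma/\lambda=1$, which is consistent with the exact factor of two in Theorem~\ref{thm:alg-reduction}.

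I do not expect any real obstacle here. The only point needing care is to restrict the domain to $[-1,1]$, since on all of $\mathbb{R}$ the map $\rho$ leaves $[0,1]$.
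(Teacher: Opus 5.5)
Your proposal is correct and follows the same route as the paper: verify $\rho(0)=\tfrac12$ and the symmetry by substitution, then compute $\rho(z_1)-\rho(z_2)=\tfrac12(z_1-z_2)$, so both bi-Lipschitz bounds hold with equality at $\Gamma=\lambda=\tfrac12$. Your check that $\rho$ maps $[-1,1]$ into $[0,1]$ is a useful detail the paper omits. One aside: the factor of two in Theorem~\ref{thm:alg-reduction} comes from the slope $\tfrac12$ itself, not from $\kappa=1$, since any affine link has $\kappa=1$ whatever its slope.
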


\begin{proof}
It is easy to see $\rho(0)=\frac12$ and $\rho(z) = 1-\rho(-z)$. For bi-Lipschitz, fix arbitrary $a,b\in\mathbb{R}$. By direct computation,
\begin{equation}
\label{eq:linear-diff}
\rho(a)-\rho(b)
=
\tfrac12(1+a)-\tfrac12(1+b)
=
\tfrac12(a-b).
\end{equation}
This gives the value of $\Gamma = \lambda = \frac12$.
\end{proof}

\section{Proof of Lemma \ref{lemma:sti}}
\label{lemma:sti-transfer}
\LemmaZero*
\begin{proof}
When $f(x) \ge f(y) \ge f(z)$, we have:
\begin{align*}
\Delta(x,y) &= \rho(f(x)-f(y)) -\rho(0) \ge \lambda(f(x) - f(y))\\
\Delta(y,z) &= \rho(f(y)-f(z)) - \rho(0) \ge \lambda(f(y)-f(z))\\
\Delta(x,z) &= \rho(f(x) - f(z)) - \rho(0) \le \Gamma(f(x) - f(y)) + \Gamma(f(y)-f(z))\\
&\le \frac{\Gamma}{\lambda}(\Delta(x,y) + \Delta(y,z)) 
\end{align*}
When $f(x) \ge f(z) \ge f(y)$, we have: 
\begin{align*}
\Delta(x,z) &= \rho(f(x) - f(z)) - \rho(0) \le \Gamma(f(x) - f(z))  \\
&\le \frac{\Gamma}{\lambda}[\rho(f(x)-f(y)) - \rho(f(z) - f(y))]\\
&\le \frac{\Gamma}{\lambda}(\Delta(x,y) + \Delta(y,z)) 
\end{align*}
Here, second last inequality follows from $\rho(f(x)-f(y)) - \rho(f(z) - f(y)) \ge \lambda(f(x)-f(z))$.
\end{proof}

\section{Additional Experimental Results}
\label{sec:Additional_experiment}
\subsection{Experiments with Random Initial Points}

\begin{figure}[h]
    \centering

    \begin{subfigure}[t]{0.32\linewidth}
        \centering
        \includegraphics[width=\linewidth]{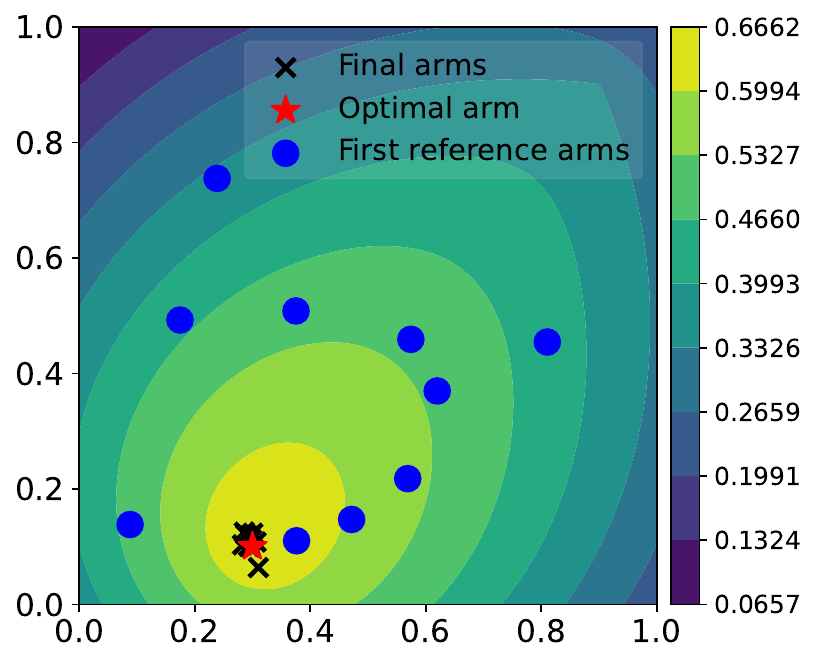}
        \caption{Reference arm convergence}
        \label{fig:rand_ref_conv}
    \end{subfigure}
    \hfill
    \begin{subfigure}[t]{0.32\linewidth}
        \centering
        \includegraphics[width=\linewidth]{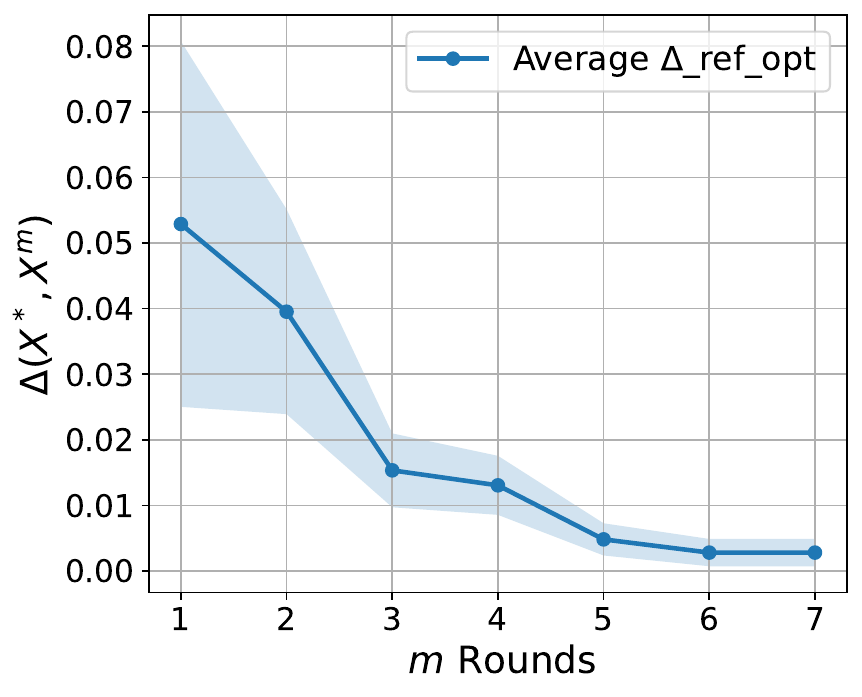}
        \caption{Convergence of $\Delta(x^*,X^m)$}
        \label{fig:rand_delta_conv}
    \end{subfigure}
    \hfill
    \begin{subfigure}[t]{0.32\linewidth}
        \centering
        \includegraphics[width=\linewidth]{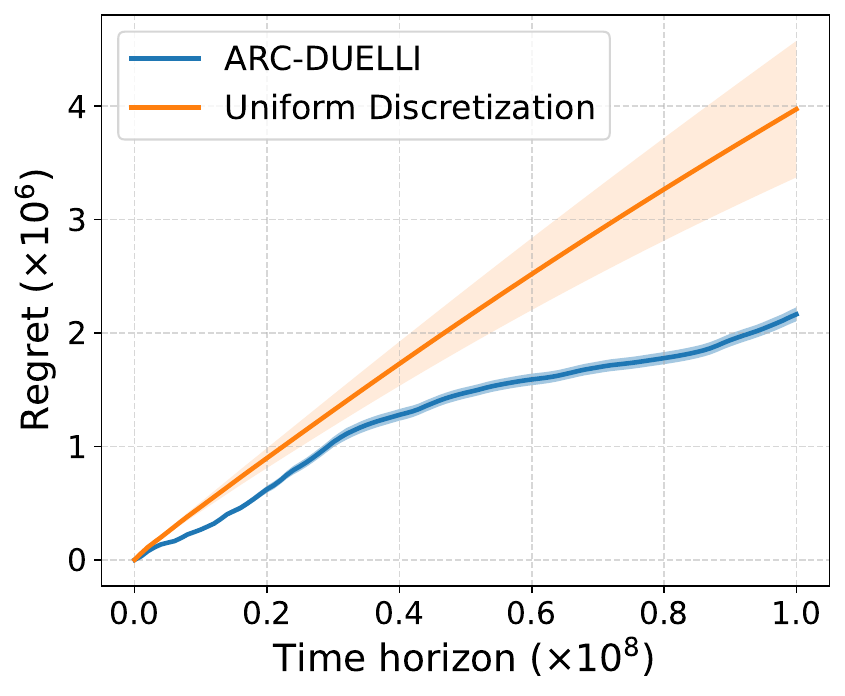}
        \caption{Regret comparison}
        \label{fig:rand_regret}
    \end{subfigure}

    \caption{
    Performance of \ouralgo\ with randomly initialized reference arms over 10 runs. 
    (a) Reference arms converge toward the optimal arm. 
    (b) The optimality gap $\Delta(x^*, X^m)$ decreases across rounds. 
    (c) Cumulative regret remains comparable to the fixed-initialization setting.
    }

    \label{fig:random_init}
\end{figure}
Even when initialized with a randomly chosen reference arm for the same reward function that we use in the main paper, \ouralgo\ maintains strong performance. As shown in Figure~\ref{fig:rand_ref_conv}, the reference arms consistently converge toward the optimal arm across runs, indicating robustness to initialization. Figure~\ref{fig:rand_delta_conv} further shows that the optimality gap $\Delta(x^*, X^m)$ decreases steadily over rounds and approaches zero, demonstrating effective learning. Finally, Figure~\ref{fig:rand_regret} shows that the cumulative regret remains low and continues to outperform the uniform discretization baseline. Overall, these results highlight that \ouralgo\ is insensitive to the choice of initial reference arm and reliably converges to near-optimal solutions.

\subsection{Experiments with Scaled Metric Function}

\begin{figure}[h]
    \centering

    \begin{subfigure}[t]{0.32\linewidth}
        \centering
        \includegraphics[width=\linewidth]{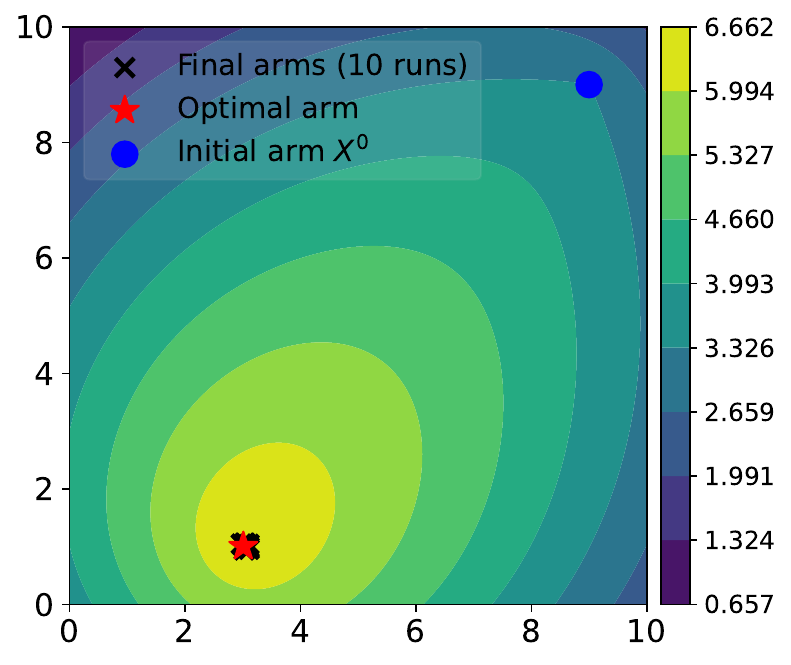}
        \caption{Reference arm convergence}
        \label{fig:scaled_ref_conv}
    \end{subfigure}
    \hfill
    \begin{subfigure}[t]{0.32\linewidth}
        \centering
        \includegraphics[width=\linewidth]{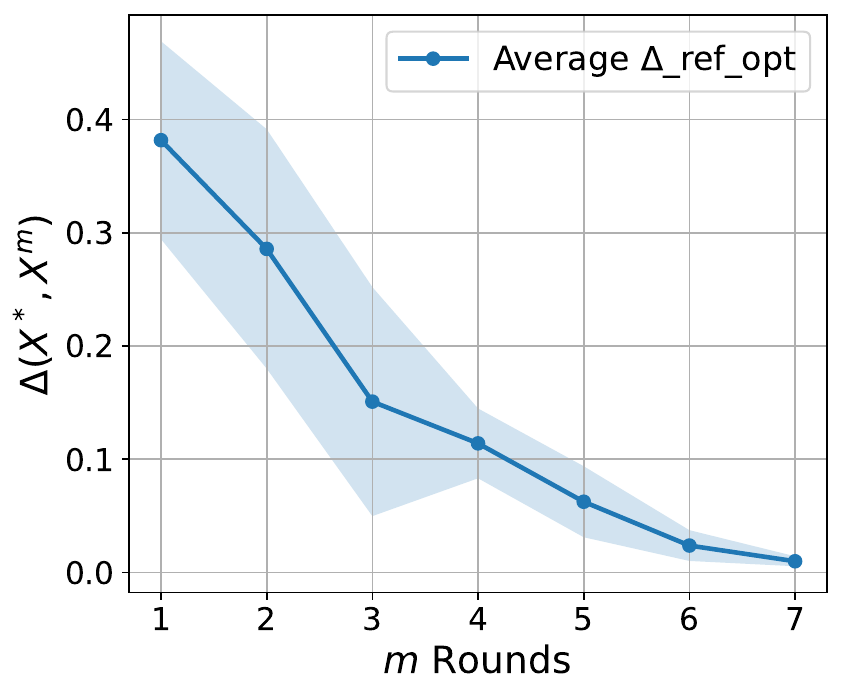}
        \caption{Convergence of $\Delta(x^*,X^m)$}
        \label{fig:scaled_delta_conv}
    \end{subfigure}
    \hfill
    \begin{subfigure}[t]{0.32\linewidth}
        \centering
        \includegraphics[width=\linewidth]{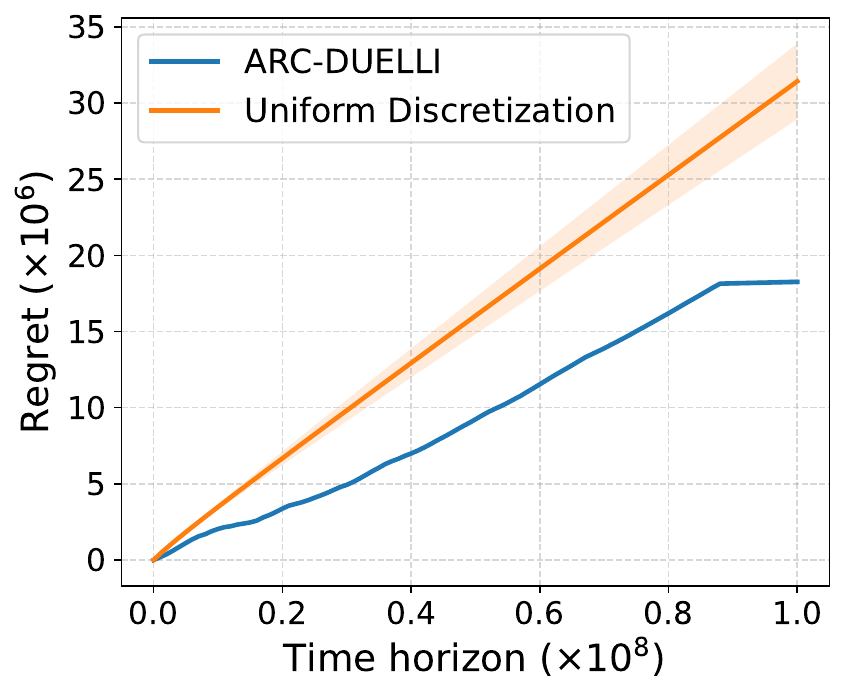}
        \caption{Regret comparison}
        \label{fig:scaled_regret}
    \end{subfigure}

    \caption{Performance of \ouralgo\ under scaled rewards in $[0,10]$. 
(a) Reference arms converge toward the optimal arm despite the larger reward scale. 
(b) The gap $\Delta(x^\star, X^m)$ decreases across rounds, indicating consistent improvement in the reference arm. 
(c) Cumulative regret compared with a uniform discretization baseline, showing superior performance of \ouralgo. The solid curves denote averages over runs, and the shaded regions represent $\pm 1$ standard deviation.}

    \label{fig:scaled_results}
\end{figure}


We further evaluate \ouralgo\ beyond the unit domain by considering a scaled action space $\mathcal{X} = [0,10]^2$. The utility function is defined as$\mu(x) = 10 - \frac{2}{3}\|x - x_1\| - \frac{1}{3}\|x - x_2\|, $ where $x_1 = (3.0, 1.0)$ and $x_2 = (9.0, 9.0)$. This reward landscape induces a unique maximizer at $x^\star = (3.0, 1.0)$.To account for the larger domain, the Lipschitz constant of the reward function changes accordingly. We incorporate this by replacing the unit Lipschitz constant with a general constant $L$ in the analysis \ref{eq:lip}, leading to a slight modification of the elimination condition from $2r_h(1+\Gamma)$ to $2r_h(1 + L\Gamma)$. The baseline discretization is constructed using the same resolution rule as in the unit domain. These experiments demonstrate that \ouralgo\ is not restricted to the $[0,1]^d$ setting. As shown in Figure~\ref{fig:scaled_ref_conv}, the final reference arms continue to concentrate near the optimal arm, indicating that the exploration and elimination strategy remains effective under scaling. Figure~\ref{fig:scaled_delta_conv} shows that the optimality gap $\Delta(x^\star, X^m)$ steadily decreases across rounds and approaches zero, confirming consistent convergence behavior. Finally, Figure~\ref{fig:scaled_regret} illustrates that \ouralgo\ maintains significantly lower cumulative regret compared to uniform discretization, even in the scaled setting. 

Overall, these results highlight that the performance of \ouralgo\ extends naturally to more general domains, with robustness to changes in both the reward scale and the underlying action space.

\subsection{Experiments with different reward function}

To demonstrate that \ouralgo\ is robust to a wide range of utility landscapes, we evaluate it on six benchmark reward functions that differ significantly in the geometry around their optimum. These functions range from sharp unimodal objectives to flat plateaus, ring-shaped optima, and multimodal landscapes, thereby testing the algorithm under varying levels of optimization difficulty.

\paragraph{Flat ($d_z= \frac{4}{3}$).}
The Flat reward function is designed to be particularly challenging by introducing a large nearly-optimal region around the maximizer. Near the optimum at $(0.78,0.22)$, the utility decreases cubically, producing very small reward differences between nearby actions. Consequently, distinguishing the optimal arm requires substantially more comparisons than in functions with steeper gradients.

Let $d(x)=\|x-(0.78,0.22)\|_2$ and $r=0.22$. The utility is defined as
\[
\mu(x)=\operatorname{clip}(1-g(x),0,1),
\]
where
\[
g(x)=
\begin{cases}
\dfrac{0.9d(x)^3}{3r^2}, & d(x)\le r,\\[1mm]
0.9\left(d(x)-\dfrac{2r}{3}\right), & \text{otherwise}.
\end{cases}
\]

The experimental results are shown in Fig.~\ref{fig:flat_results}.

\begin{figure}[h]
    \centering

    \begin{subfigure}[t]{0.32\linewidth}
        \centering
        \includegraphics[width=\linewidth]{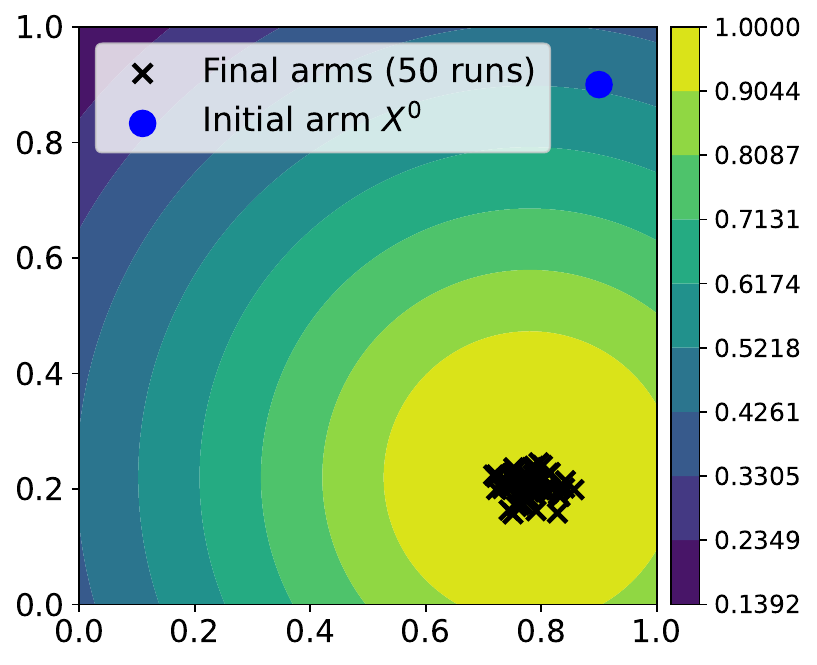}
        \caption{Reference arm convergence}
        \label{fig:flat_ref_conv}
    \end{subfigure}
    \hfill
    \begin{subfigure}[t]{0.32\linewidth}
        \centering
        \includegraphics[width=\linewidth]{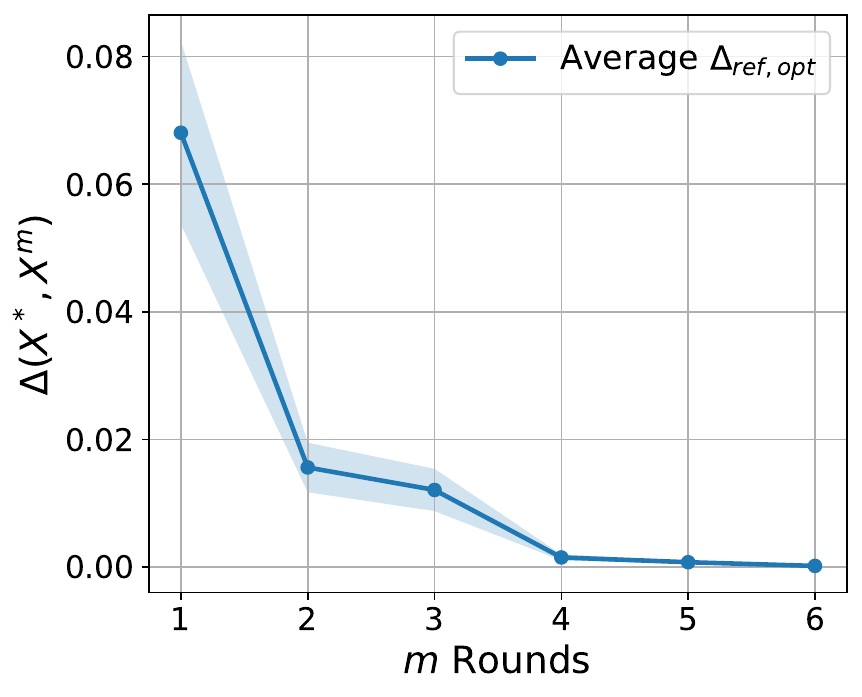}
        \caption{Convergence of $\Delta(x^*,X^m)$}
        \label{fig:flat_delta_conv}
    \end{subfigure}
    \hfill
    \begin{subfigure}[t]{0.32\linewidth}
        \centering
        \includegraphics[width=\linewidth]{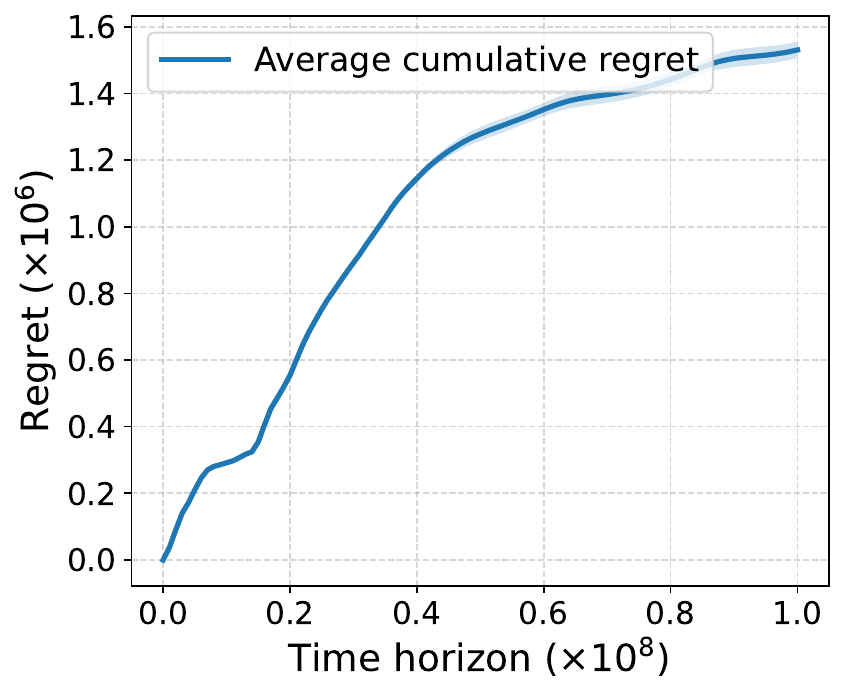}
        \caption{Cumulative Regret}
        \label{fig:flat_regret}
    \end{subfigure}

    \caption{Performance of \ouralgo\ on the Flat reward function, averaged over $50$ independent runs. From left to right: (a) evolution of the reference arm overlaid on the utility landscape, illustrating convergence toward the optimal region, (b) convergence of the optimality gap $\Delta(x^*,X^m)$ with the shaded region denoting one standard deviation, and (c) cumulative regret as a function of the number of comparisons.}

    \label{fig:flat_results}
\end{figure}

\paragraph{Plateau ($d_z=2$).}
Unlike standard optimization benchmarks that possess a unique maximizer, the Plateau reward contains an entire square region of optimal actions. Since every point inside the plateau attains the maximum reward, the algorithm must identify the optimal region rather than converge to a single point. This benchmark evaluates whether the adaptive partitioning strategy can efficiently localize a flat optimum.

The utility function is
\[
\mu(x)=
\operatorname{clip}\!\left(
1-\operatorname{dist}(x,[0.35,0.65]^2),
0,1
\right),
\]
where $\operatorname{dist}(x,S)$ denotes the Euclidean distance from $x$ to the set $S$.

The corresponding experimental results are presented in Fig.~\ref{fig:plateau_results}.

\begin{figure}[h]
    \centering

    \begin{subfigure}[t]{0.32\linewidth}
        \centering
        \includegraphics[width=\linewidth]{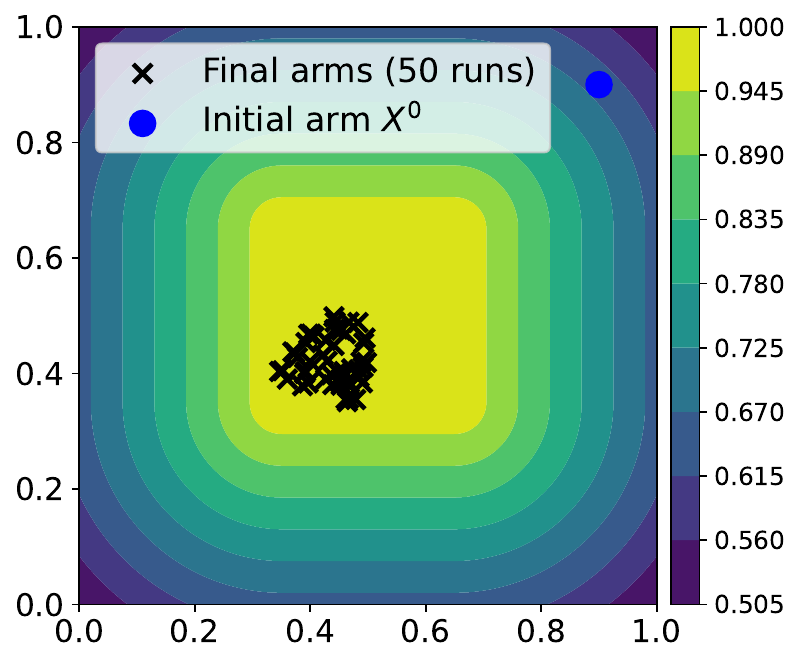}
        \caption{Reference arm convergence}
        \label{fig:plateau_ref_conv}
    \end{subfigure}
    \hfill
    \begin{subfigure}[t]{0.32\linewidth}
        \centering
        \includegraphics[width=\linewidth]{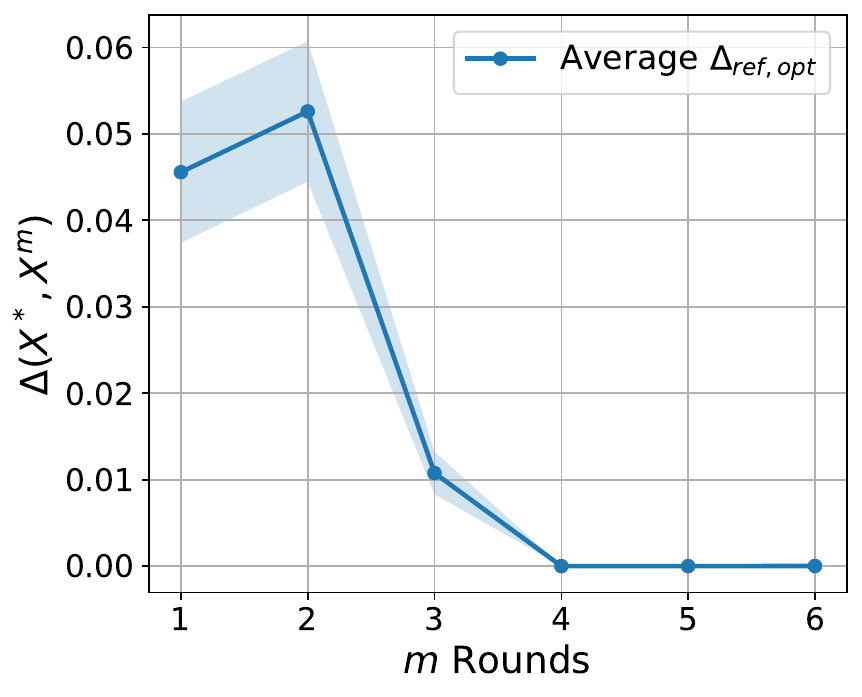}
        \caption{Convergence of $\Delta(x^*,X^m)$}
        \label{fig:plateau_delta_conv}
    \end{subfigure}
    \hfill
    \begin{subfigure}[t]{0.32\linewidth}
        \centering
        \includegraphics[width=\linewidth]{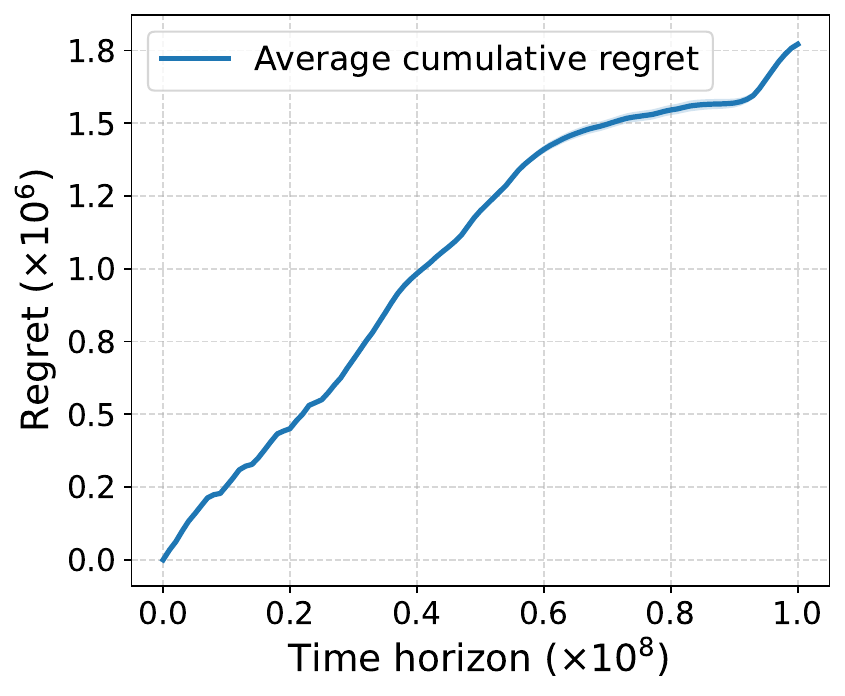}
        \caption{Cumulative Regret}
        \label{fig:plateau_regret}
    \end{subfigure}

   \caption{Performance of \ouralgo\ on the Plateau reward function, averaged over $50$ independent runs. From left to right: (a) evolution of the reference arm on the utility landscape, (b) convergence of the optimality gap $\Delta(x^*,X^m)$ with one standard deviation, and (c) cumulative regret over time.}

    \label{fig:plateau_results}
\end{figure}

\paragraph{Quadratic ($d_z=1$).}
The Quadratic reward has a smooth quadratic basin around the optimum located at $(0.5,0.5)$, followed by linear decay outside a small neighborhood. This function represents a well-conditioned optimization problem and serves as an intermediate benchmark between the Sharp and Flat landscapes.

Let
\[
d(x)=\|x-(0.5,0.5)\|_2,\qquad r=0.12,
\]
and define
\[
g(x)=
\begin{cases}
\dfrac{d(x)^2}{2r}, & d(x)\le r,\\[1mm]
d(x)-\dfrac r2, & \text{otherwise}.
\end{cases}
\]
The utility is
\[
\mu(x)=\operatorname{clip}(1-g(x),0,1).
\]

The experimental performance is illustrated in Fig.~\ref{fig:quadratic_results}.

\begin{figure}[h]
    \centering

    \begin{subfigure}[t]{0.32\linewidth}
        \centering
        \includegraphics[width=\linewidth]{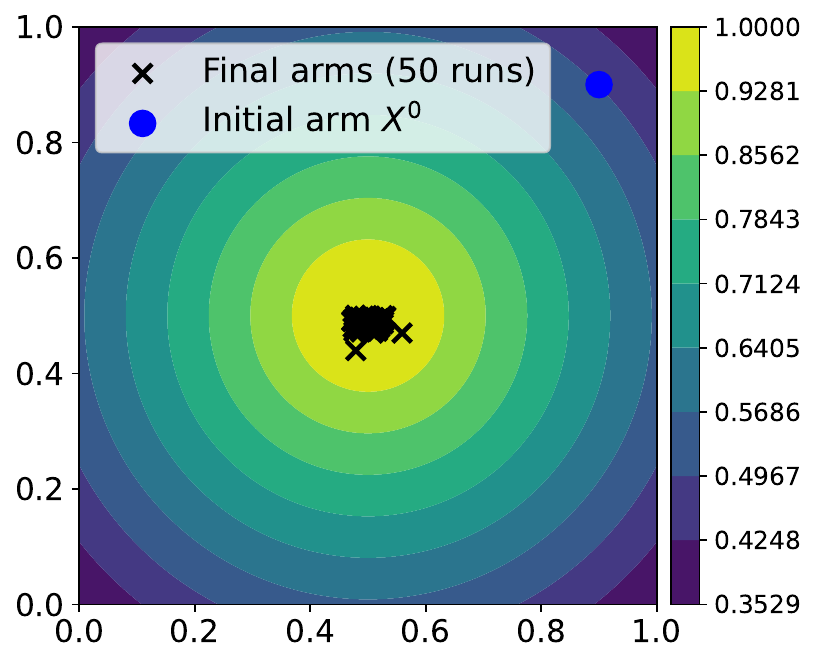}
        \caption{Reference arm convergence}
        \label{fig:quadratic_ref_conv}
    \end{subfigure}
    \hfill
    \begin{subfigure}[t]{0.32\linewidth}
        \centering
        \includegraphics[width=\linewidth]{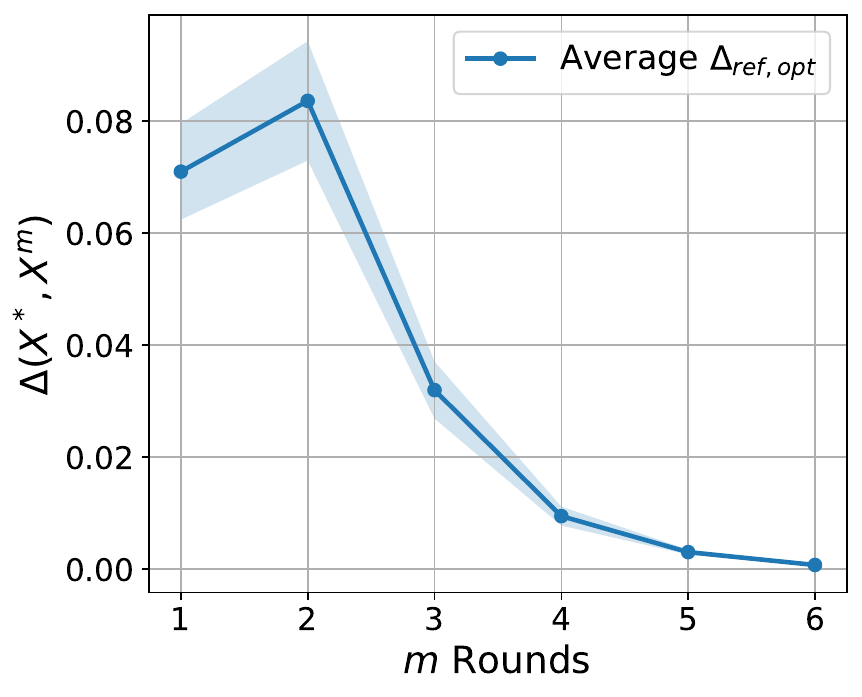}
        \caption{Convergence of $\Delta(x^*,X^m)$}
        \label{fig:quadratic_delta_conv}
    \end{subfigure}
    \hfill
    \begin{subfigure}[t]{0.32\linewidth}
        \centering
        \includegraphics[width=\linewidth]{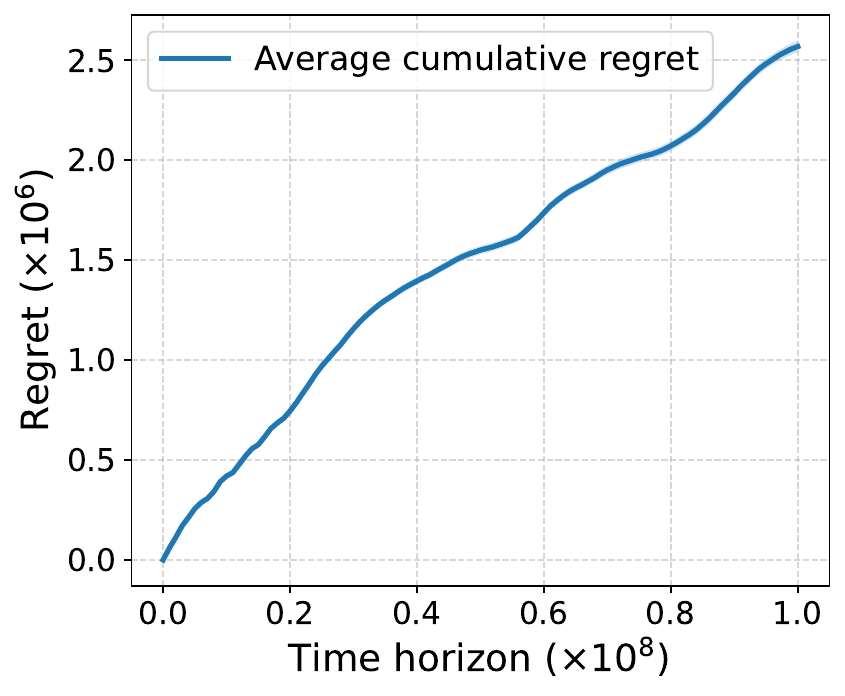}
        \caption{Cumulative Regret}
        \label{fig:quadratic_regret}
    \end{subfigure}

    \caption{Performance of \ouralgo\ on the Quadratic reward function, averaged over $50$ independent runs. From left to right: (a) convergence of the reference arm on the utility landscape, (b) convergence of the optimality gap $\Delta(x^*,X^m)$ with one standard deviation, and (c) cumulative regret as the learning process progresses.}

    \label{fig:quadratic_results}
\end{figure}

\paragraph{Ridge ($d_z=1$).}
The Ridge reward is fundamentally different from unimodal objectives because its optimum forms a continuous circular ring rather than a single point. Consequently, the learner only needs to identify the optimal manifold instead of a unique maximizer. This benchmark evaluates the algorithm's ability to adapt to non-isolated optima.

The utility function is
\[
\mu(x)=
\operatorname{clip}
\!\left(
1-\left|\|x-(0.5,0.5)\|_2-0.28\right|,
0,1
\right).
\]

The experimental results are reported in Fig.~\ref{fig:Ridge_results}.

\begin{figure}[h]
    \centering

    \begin{subfigure}[t]{0.32\linewidth}
        \centering
        \includegraphics[width=\linewidth]{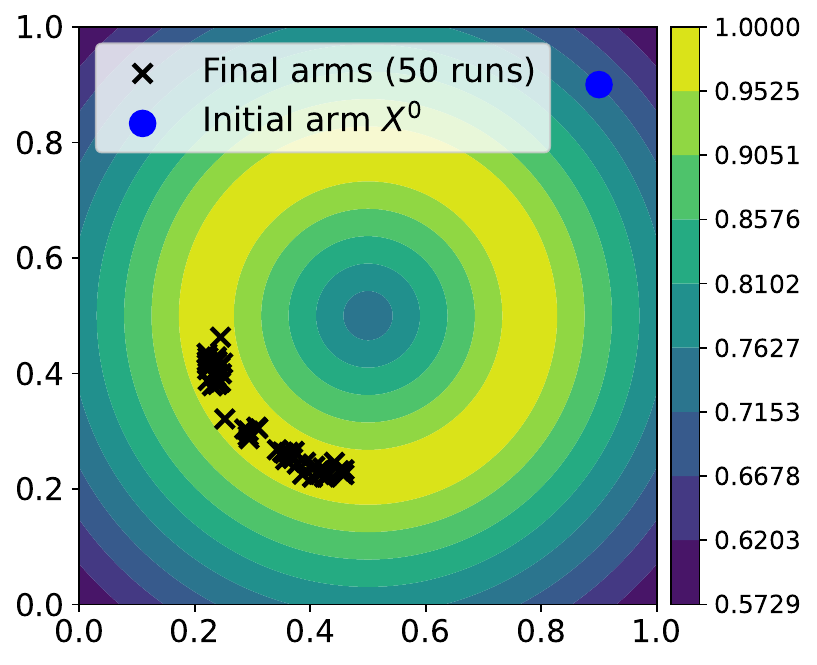}
        \caption{Reference arm convergence}
        \label{fig:Ridge_ref_conv}
    \end{subfigure}
    \hfill
    \begin{subfigure}[t]{0.32\linewidth}
        \centering
        \includegraphics[width=\linewidth]{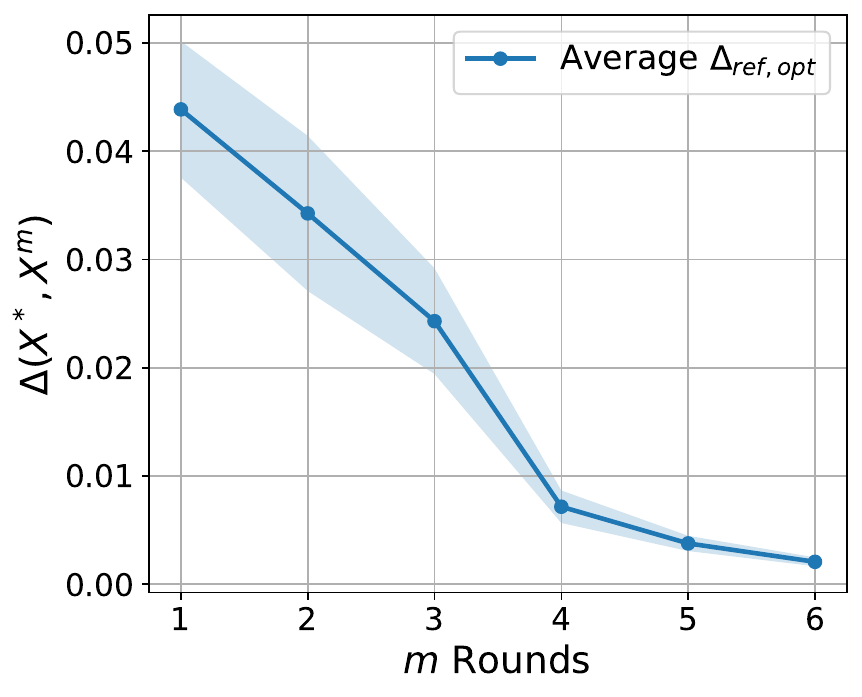}
        \caption{Convergence of $\Delta(x^*,X^m)$}
        \label{fig:Ridge_delta_conv}
    \end{subfigure}
    \hfill
    \begin{subfigure}[t]{0.32\linewidth}
        \centering
        \includegraphics[width=\linewidth]{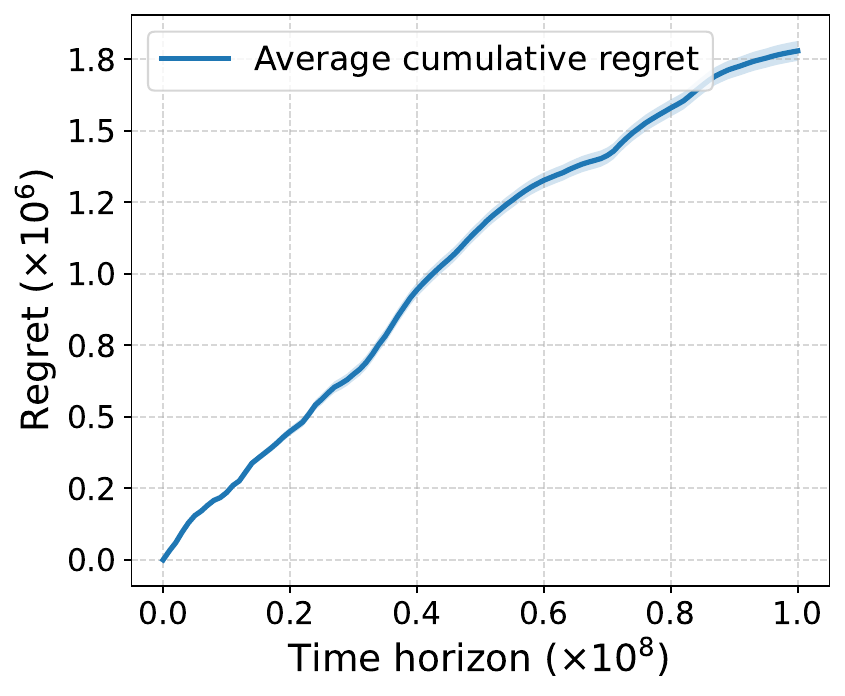}
        \caption{Cumulative Regret}
        \label{fig:Ridge_regret}
    \end{subfigure}

    \caption{Performance of \ouralgo\ on the Ridge reward function, averaged over $50$ independent runs. From left to right: (a) convergence of the reference arm toward the optimal ridge, (b) convergence of the optimality gap $\Delta(x^*,X^m)$ with one standard deviation, and (c) cumulative regret over time.}

    \label{fig:Ridge_results}
\end{figure}

\paragraph{Sharp ($d_z=0$).}
The Sharp reward is the simplest benchmark considered in our experiments. Its utility decreases linearly from a unique optimum located at $(0.5,0.5)$, producing large reward gaps between neighboring actions. Owing to its steep landscape, the optimal arm can typically be identified with relatively few comparisons.

The utility function is
\[
\mu(x)=
\operatorname{clip}
\!\left(
1-\|x-(0.5,0.5)\|_2,
0,1
\right).
\]

Figure~\ref{fig:sharp_results} summarizes the corresponding experimental results.

\begin{figure}[h]
    \centering

    \begin{subfigure}[t]{0.32\linewidth}
        \centering
        \includegraphics[width=\linewidth]{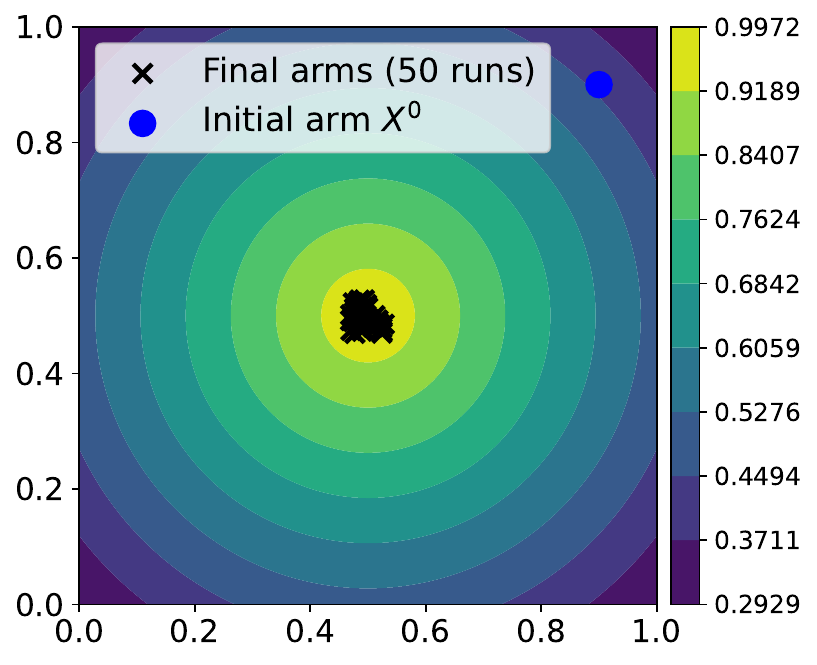}
        \caption{Reference arm convergence}
        \label{fig:sharp_ref_conv}
    \end{subfigure}
    \hfill
    \begin{subfigure}[t]{0.32\linewidth}
        \centering
        \includegraphics[width=\linewidth]{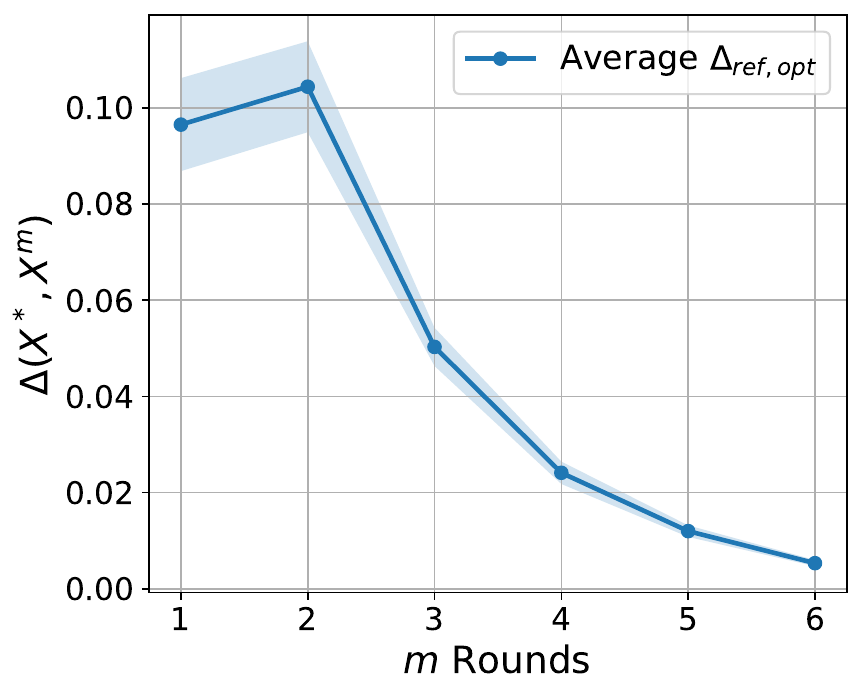}
        \caption{Convergence of $\Delta(x^*,X^m)$}
        \label{fig:sharp_delta_conv}
    \end{subfigure}
    \hfill
    \begin{subfigure}[t]{0.32\linewidth}
        \centering
        \includegraphics[width=\linewidth]{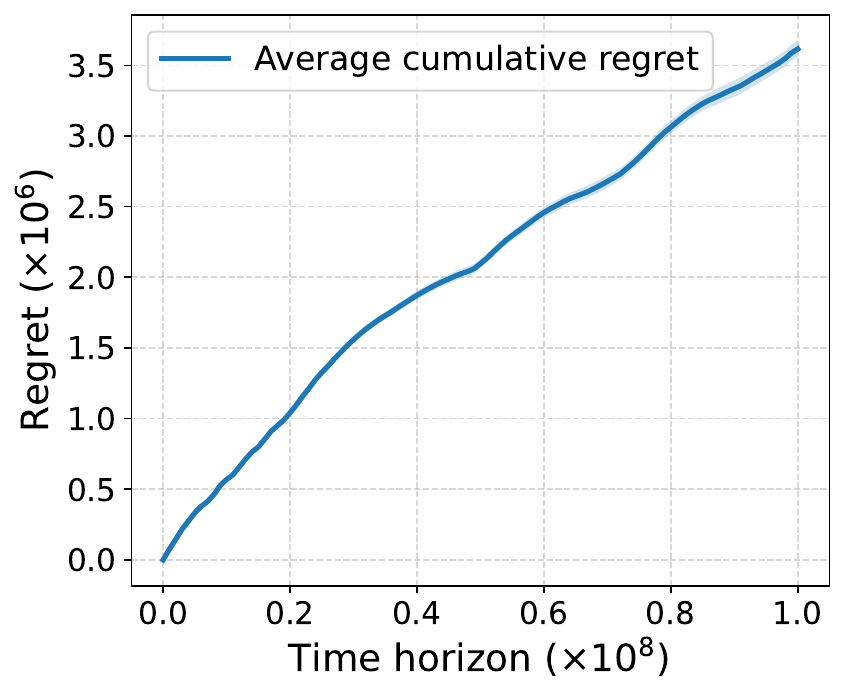}
        \caption{Cumulative Regret}
        \label{fig:sharp_regret}
    \end{subfigure}

    \caption{Performance of \ouralgo\ on the Sharp reward function, averaged over $50$ independent runs. From left to right: (a) convergence of the reference arm on the utility landscape, (b) convergence of the optimality gap $\Delta(x^*,X^m)$ with one standard deviation, and (c) cumulative regret as a function of the number of comparisons.}

    \label{fig:sharp_results}
\end{figure}

\paragraph{Two Peaks ($d_z=0$).}
The Two Peaks reward introduces two competing maxima with slightly different heights. While the global optimum lies at $(0.2,0.2)$, the second peak at $(0.8,0.75)$ is nearly optimal, making exploration more difficult since the algorithm must discriminate between two highly competitive regions before identifying the true optimum.

Define
\[
g_1(x)=1-\frac{\|x-(0.2,0.2)\|_2}{\sqrt2},
\qquad
g_2(x)=0.97-\frac{\|x-(0.8,0.75)\|_2}{\sqrt2}.
\]
The utility is
\[
\mu(x)=
\operatorname{clip}
\!\left(
\max\{g_1(x),g_2(x)\},
0,1
\right).
\]

The corresponding experimental results are shown in Fig.~\ref{fig:two_peaks_results}.

\begin{figure}[h]
    \centering

    \begin{subfigure}[t]{0.32\linewidth}
        \centering
        \includegraphics[width=\linewidth]{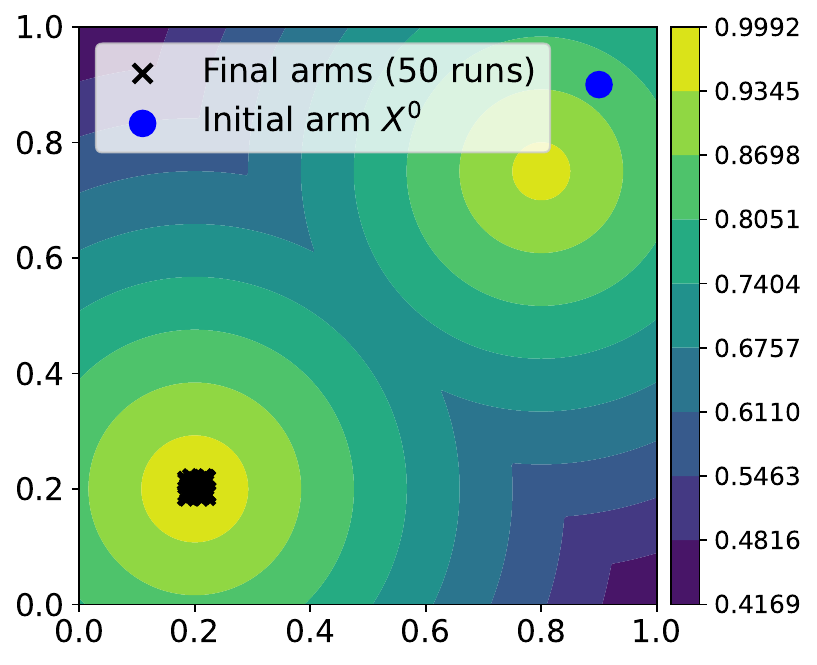}
        \caption{Reference arm convergence}
        \label{fig:two_peaks_ref_conv}
    \end{subfigure}
    \hfill
    \begin{subfigure}[t]{0.32\linewidth}
        \centering
        \includegraphics[width=\linewidth]{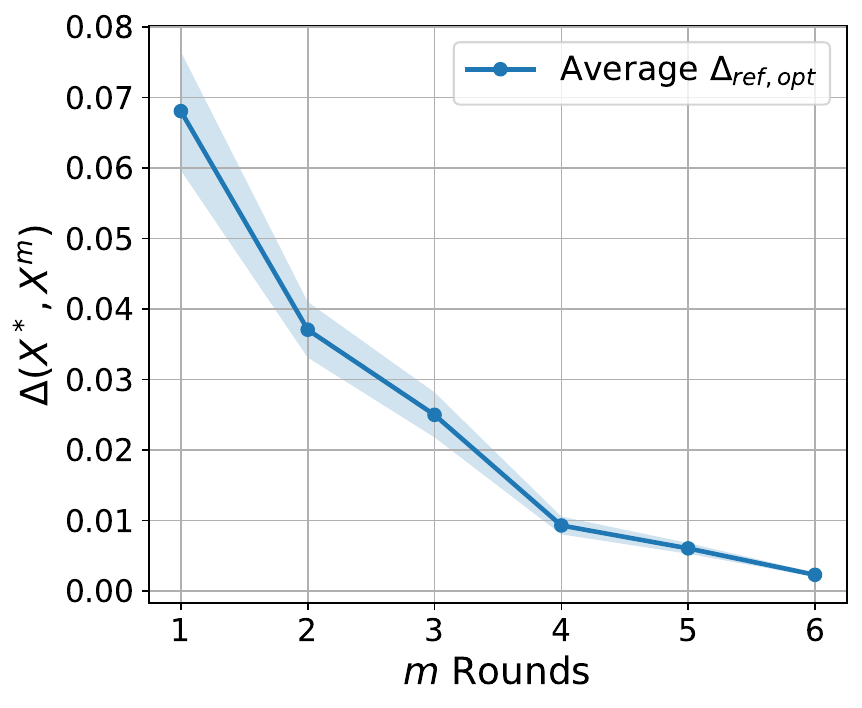}
        \caption{Convergence of $\Delta(x^*,X^m)$}
        \label{fig:two_peaks_delta_conv}
    \end{subfigure}
    \hfill
    \begin{subfigure}[t]{0.32\linewidth}
        \centering
        \includegraphics[width=\linewidth]{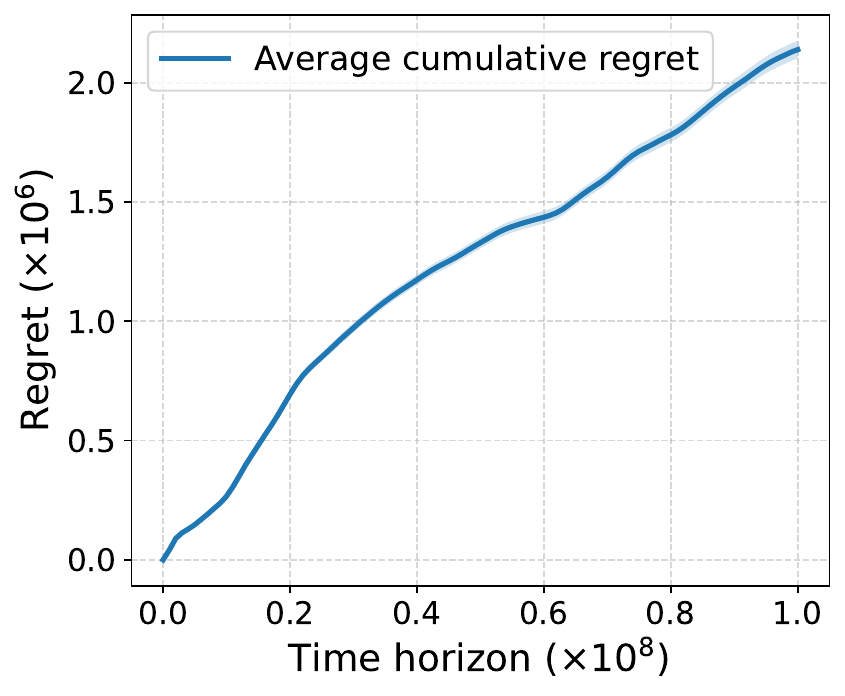}
        \caption{Cumulative Regret}
        \label{fig:two_peaks_regret}
    \end{subfigure}

    \caption{Performance of \ouralgo\ on the Two Peaks reward function, averaged over $50$ independent runs. From left to right: (a) convergence of the reference arm toward the global optimum despite the presence of a competing local optimum, (b) convergence of the optimality gap $\Delta(x^*,X^m)$ with one standard deviation, and (c) cumulative regret over time.}

    \label{fig:two_peaks_results}
\end{figure}

\subsection{Experiments with different transfer function}

To examine the robustness of our algorithm to the choice of the
transfer function, we additionally consider the sigmoid, arctangent,
probit, linear, and hyperbolic-tangent links. For a utility difference
\(u=f(x)-f(y)\), these transfer functions are defined as
\[
\begin{gathered}
\rho_{\mathrm{sig}}(u)
=\frac{1}{1+\exp(-u)},\qquad
\rho_{\mathrm{arc}}(u)
=\frac{1}{2}+\frac{1}{\pi}\arctan(u),\qquad
\rho_{\mathrm{probit}}(u)
=\Phi(u),\\[2mm]
\rho_{\mathrm{lin}}(u)
=\frac{1+u}{2},\qquad
\rho_{\tanh}(u)
=\frac{1+\tanh(u)}{2}.
\end{gathered}
\]where \(\Phi\) denotes the cumulative distribution function of the
standard normal distribution. All these functions satisfy
\(\rho(0)=1/2\) and
\(\rho(-u)=1-\rho(u)\). Their Lipschitz constants are obtained by
bounding their derivatives:
\[
\Gamma_{\mathrm{sig}}=\frac14,\qquad
\Gamma_{\mathrm{arc}}=\frac1\pi,\qquad
\Gamma_{\mathrm{probit}}=\frac1{\sqrt{2\pi}},\qquad
\Gamma_{\mathrm{lin}}=\frac12,\qquad
\Gamma_{\tanh}=\frac12.
\]
Figure~\ref{fig:different_tranfer_function} shows that the reference arm converges
towards the near-optimal region for all five transfer functions, while
the preference gap \(\Delta(x^\star,X^m)\) decreases consistently
across rounds. The cumulative-regret curves exhibit the same overall
sublinear behavior, although their numerical values differ because
the transfer functions have different slopes and therefore induce
different preference gaps for the same utility difference. These
results indicate that the empirical behavior of the algorithm is
robust to the particular transfer function used to generate the
pairwise comparisons.

\begin{figure}[h]
    \centering

    \begin{subfigure}[t]{0.32\linewidth}
        \centering
        \includegraphics[width=\linewidth]{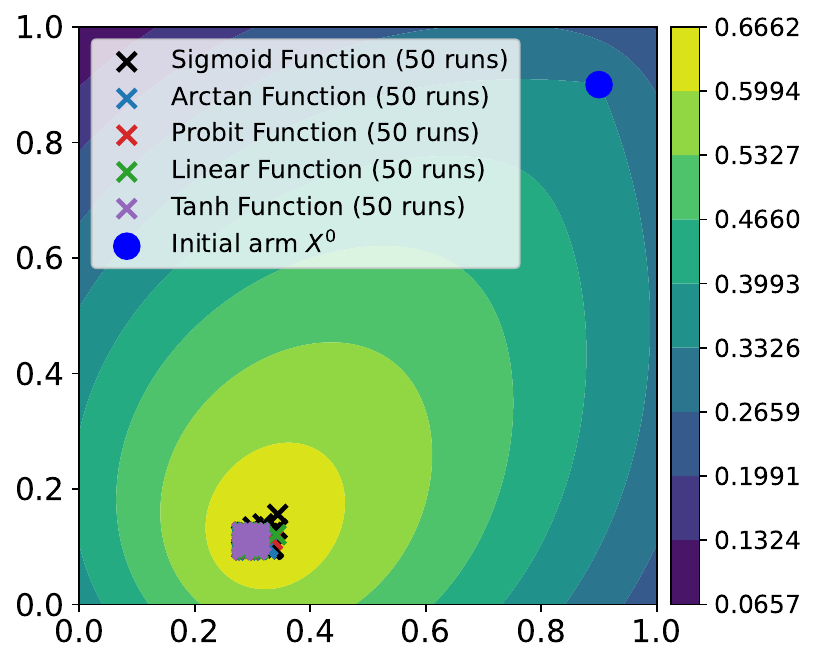}
        \caption{Reference arm convergence}
        \label{fig:sharp_ref_conv}
    \end{subfigure}
    \hfill
    \begin{subfigure}[t]{0.32\linewidth}
        \centering
        \includegraphics[width=\linewidth]{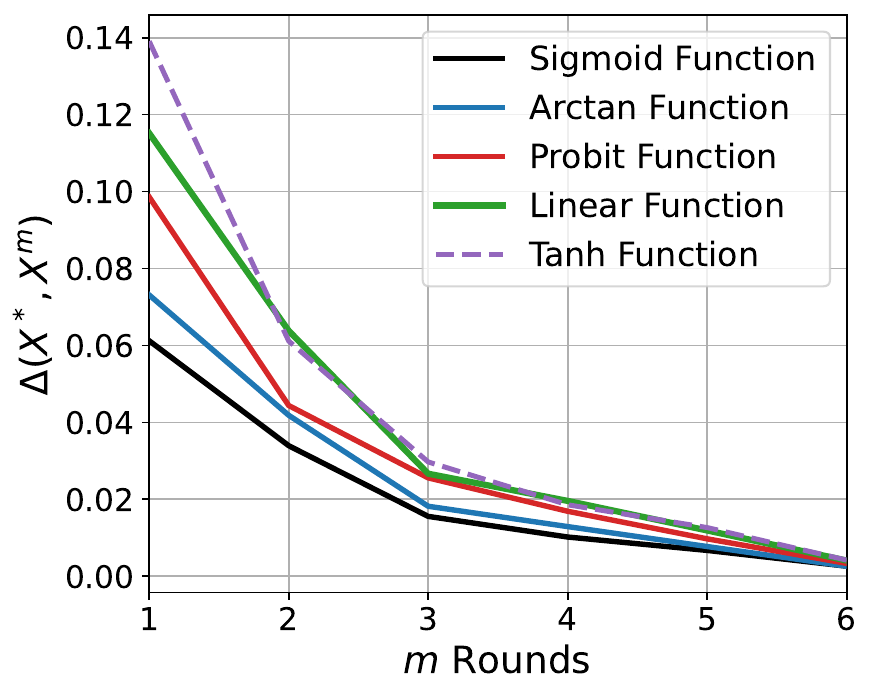}
        \caption{Convergence of $\Delta(x^*,X^m)$}
        \label{fig:sharp_delta_conv}
    \end{subfigure}
    \hfill
    \begin{subfigure}[t]{0.32\linewidth}
        \centering
        \includegraphics[width=\linewidth]{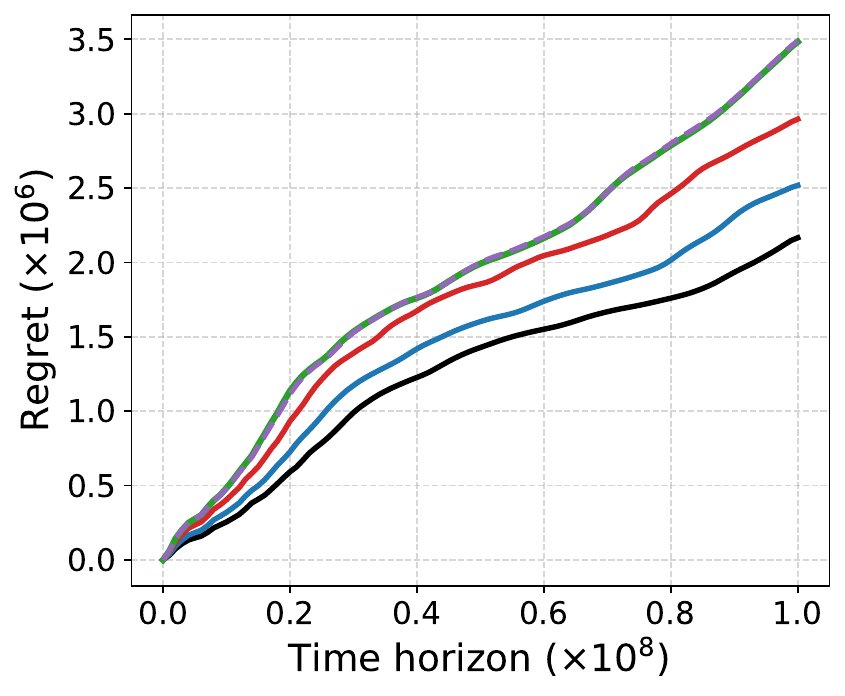}
        \caption{Cumulative Regret}
        \label{fig:sharp_regret}
    \end{subfigure}

    \caption{Performance of \ouralgo\ under five transfer functions—sigmoid, arctangent, probit, linear, and hyperbolic tangent—averaged over $50$ independent runs. From left to right: (a) convergence of the reference arm, (b) convergence of the preference gap $\Delta(x^\star,X^m)$ with one standard deviation, and (c) cumulative regret over time.
.}

    \label{fig:different_tranfer_function}
\end{figure}

\subsection{Experiments with different Ambient Dimension $d$ }

We further evaluate the performance of our algorithm across different ambient dimensions. In particular, we consider the cases of $D=3$ and $D=4$, in addition to the two-dimensional setting. For both settings, we use the utility function
$\mu(x)=1-\frac{2}{3}|x-x_1|-\frac{1}{3}|x-x_2|, $
where the locations of $x_1$ and $x_2$ are specified according to the ambient dimension. For $D=3$, we consider the action space $[0,1]^3$ with
$x_1=(0.3,0.1,0.5), x_2=(0.9,0.9,0.5),$
while for $D=4$, we consider the action space $[0,1]^4$ with
$ x_1=(0.3,0.1,0.5,0.5), x_2=(0.9,0.9,0.5,0.5). $ 
For these higher-dimensional settings, the algorithm terminates in lesser rounds. This behavior is primarily due to the substantially larger number of cubes generated during the partitioning process as the ambient dimension increases. Consequently, each round involves a larger number of pairwise comparisons than in the $D=2$ setting, resulting in a higher computational cost per round.

Despite this increased computational burden, the results in Figure~\ref{fig:transfer_delta_higher_dim} demonstrate that the reference arm continues to move progressively closer to the optimal arm in both $D=3$ and $D=4$. This indicates that the algorithm maintains its ability to identify increasingly promising regions of the action space even in higher dimensions. Although the available computational budget of $10^8$ limits the number of completed rounds, the observed convergence trend suggests that, with a sufficiently larger time horizon $T$, the reference arm would move even closer to the optimal arm and ultimately approach the true optimal solution.

\begin{figure}[h]
    \centering

    \begin{subfigure}[t]{0.32\linewidth}
        \centering
        \includegraphics[width=\linewidth]{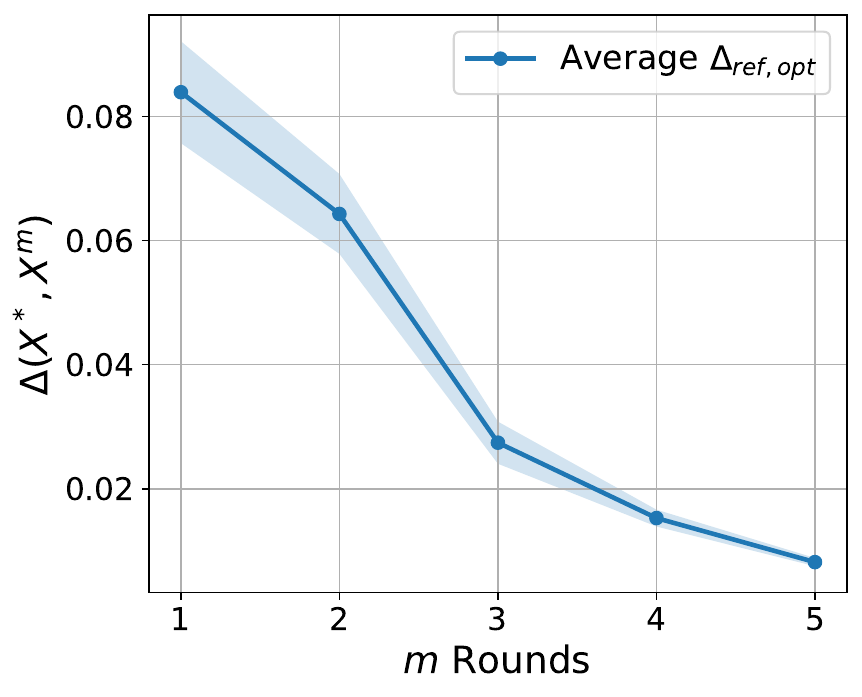}
        \caption{$D=3$}
        \label{fig:transfer_delta_D3}
    \end{subfigure}%
    \begin{subfigure}[t]{0.32\linewidth}
        \centering
        \includegraphics[width=\linewidth]{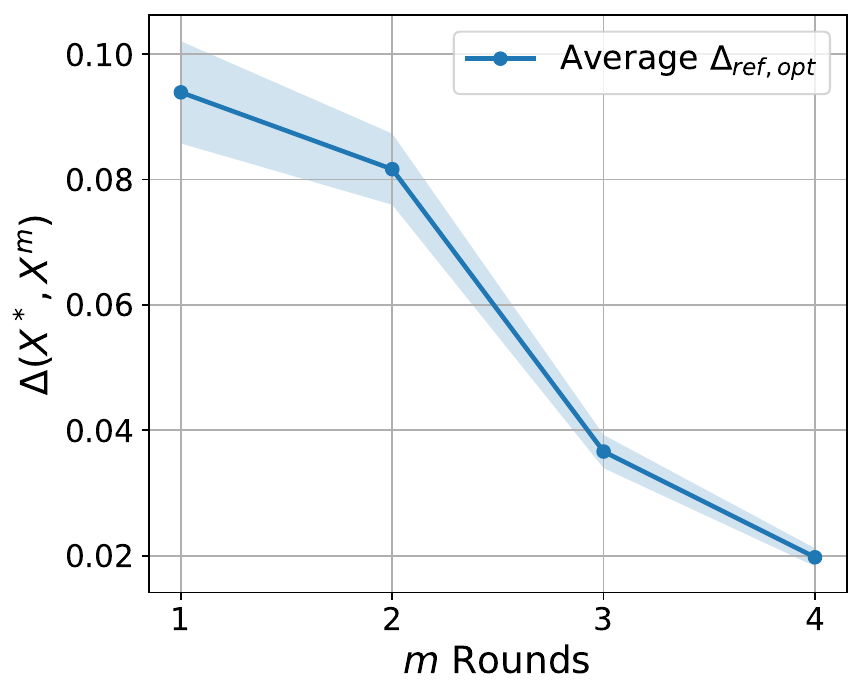}
        \caption{$D=4$}
        \label{fig:transfer_delta_D4}
    \end{subfigure}

    \caption{
    Convergence of the preference gap $\Delta(x^\star,X^m)$
    for higher-dimensional action spaces under \ouralgo.
    Panel~(a) shows the convergence of $\Delta(x^\star,X^m)$
    for an ambient dimension of $D=3$, while Panel~(b) shows
    the corresponding convergence for $D=4$. In both settings,
    the preference gap decreases with the number of rounds,
    indicating that the reference arm progressively approaches
    the optimal arm.
    }
    \label{fig:transfer_delta_higher_dim}
\end{figure}

\clearpage

\end{document}